\def\Eqref#1{Equation~\ref{#1}}
\def\1{\bm{1}}
\DeclareMathAlphabet{\mathsfit}{\encodingdefault}{\sfdefault}{m}{sl}
\SetMathAlphabet{\mathsfit}{bold}{\encodingdefault}{\sfdefault}{bx}{n}
\DeclareMathOperator*{\argmax}{arg\,max}
\theoremstyle{plain}
\newtheorem{theorem}{Theorem}
\newtheorem{lemma}{Lemma}
\newtheorem{corollary}{Corollary}
\theoremstyle{definition}
\newtheorem{definition}{Definition}
\theoremstyle{remark}
\newtheorem{remark}{Remark}
\newcommand{\shw}[1]{{\color{purple}#1}}
\newcommand{\blue}{\color{blue}}
\newcommand{\black}{\color{black}}
\def\discussionSCalgchallenges{0}
\newcommand{\kibitz}[2]{\ifnum\Comments=1{\textcolor{#1}{{#2}}}\fi}
\definecolor{darkgreen}{rgb}{0,0.6,0}
\newcommand{\kibitz}[2]{\ifnum\Comments=1{\textcolor{#1}{{#2}}}\fi}
\newcommand{\shw}[1]{\kibitz{red}{[SJ:#1]}}
	    \newcommand{\red}[1]{{\leavevmode\color{red}{#1}}}
\newcommand{\MINTSS}{\texttt{MINTSS}}
\renewcommand{\cite}[1]{\citep{#1}}
\title{A Resilience Framework for Bi-Criteria Combinatorial Optimization with Bandit Feedback}
\author{\name Vaneet Aggarwal\thanks{Authors are listed in alphabetical order.} \email vaneet@purdue.edu \\
\addr Purdue University
\ANDD
\name Shweta Jain \email shwetajain@iitrpr.ac.in \\
\addr IIT Ropar
\ANDD
\name Subham Pokhriyal \email subham.22csz0002@iitrpr.ac.in\\
\addr IIT Ropar
\ANDD
\name Christopher John Quinn \email cjquinn@iastate.edu\\
\addr Iowa State University
}
\begin{document}

\maketitle

\begin{abstract}
We study bi-criteria combinatorial optimization under noisy function evaluations. While resilience and black-box offline-to-online reductions have been studied in single-objective settings, extending these ideas to bi-criteria problems introduces new challenges due to the coupled degradation of approximation guarantees for objectives and constraints. We introduce a notion of $(\alpha,\beta,\delta,\texttt{N})$-resilience for bi-criteria approximation algorithms, capturing how joint approximation guarantees degrade under bounded (possibly worst-case) oracle noise, and develop a general black-box framework that converts any resilient offline algorithm into an online algorithm for bi-criteria combinatorial multi-armed bandits with bandit feedback. The resulting online guarantees achieve sublinear regret and cumulative constraint violation of order $\tilde{O}(\delta^{2/3}\texttt{N}^{1/3}T^{2/3})$ without requiring structural assumptions such as linearity, submodularity, or semi-bandit feedback on the noisy functions. We demonstrate the applicability of the framework by establishing resilience for several classical greedy algorithms in submodular optimization.
\end{abstract}








\section{Introduction}\label{sec:intro}

Many real-world combinatorial optimization problems arising in multi-agent and learning systems require balancing competing objectives, such as minimizing cost while ensuring coverage, utility, or fairness constraints. Classical examples include sensor placement and experimental design \cite{krause2005near,krause2008near}, feature subset selection \cite{krause2005near}, and fair resource allocation \cite{ogryczak2010bicriteria}. These problems are naturally modeled as \emph{bi-criteria} combinatorial optimization tasks, where one objective must be optimized subject to constraints on another.

A common approach to handling multiple objectives is to reduce them to a single objective, for example by minimizing the difference of two submodular functions (DS optimization). However, DS optimization is known to be NP-hard and inapproximable in general \cite{trevisan2014inapproximability}. As a result, much of the literature formulates bi-criteria problems as constrained combinatorial optimization tasks, such as minimizing a cost function subject to utility constraints. Since these problems are typically NP-hard, prior work has focused on designing approximation algorithms that achieve \emph{bi-criteria guarantees}, relaxing both the objective and the constraint by controlled approximation factors \cite{iyer2012algorithms,crawford2019submodular,chen2024fair,chen2024bicriteria}. Existing analyses, however, almost universally assume access to \emph{exact} value oracles for the objective and constraint functions \cite{wolsey1982analysis,wan2010greedy,soma2015generalization}.

In practice, exact oracle access is rarely available. Objective and constraint evaluations are often obtained through noisy measurements, simulations, or learned estimates, leading to \emph{bounded but potentially worst-case } errors. Such perturbations can destroy structural properties such as submodularity, monotonicity, or linearity, thereby invalidating classical approximation analyses that rely on exact oracles. This difficulty is particularly acute in bi-criteria settings, where errors in the objective and constraint can interact and compound. 
As a result, it is unclear how to even define, let alone analyze, offline approximation guarantees that are stable enough to support online learning objectives such as regret minimization and constraint violation control under bandit feedback. This raises a fundamental and largely unresolved question:
\begin{quote}
\emph{How can bi-criteria approximation algorithms be made robust to bounded oracle noise in a way that enables meaningful online regret and constraint violation guarantees?}
\end{quote}

In this work, we address this question by identifying the right offline abstraction needed to support online learning: \emph{resilience}. While resilience and robustness notions have been studied in single-objective combinatorial optimization and bandit settings, extending them to bi-criteria problems is fundamentally nontrivial due to the coupled degradation of objective and constraint guarantees under noise. We introduce a formal notion of $(\alpha,\beta,\delta,\texttt{N})$-\emph{resilience} for bi-criteria approximation algorithms, which characterizes how joint approximation guarantees degrade under bounded (possibly worst-case) oracle perturbations. This notion provides an \((\alpha, \beta)\)-bi-criteria guarantee with \(\delta\)-resilience (i.e., tolerance to perturbations in function oracles) and makes \(\texttt{N}\) oracle calls. Crucially, this definition makes no stochastic assumptions on the oracle errors and requires guarantees to hold uniformly over all oracle calls, making it strong enough to serve as a foundation for subsequent online regret and constraint violation guarantees.


Our main contribution is a \emph{general black-box offline-to-online framework} that leverages resilience to derive online learning guarantees. We introduce a new class of problems, \emph{Bi-Criteria Combinatorial Multi-Armed Bandits (BC-CMAB)} with bandit feedback, where the learner repeatedly selects combinatorial actions and observes only aggregate noisy reward and cost feedback. The notion of resilience captures robustness of offline bi-criteria approximation algorithms to worst-case, bounded oracle perturbations, while the online reduction operates in a stochastic bandit setting where repeated plays enable concentration. We show that any offline $(\alpha,\beta,\delta,\texttt{N})$-resilient bi-criteria approximation algorithm can be transformed into an online algorithm achieving sublinear regret and sublinear cumulative constraint violation (CCV). The resulting guarantees scale as $\mathcal{O}(\delta^{2/3}\texttt{N}^{1/3}T^{2/3}\log^{1/3}T)$ and require no structural assumptions such as linearity, monotonicity, submodularity, or semi-bandit feedback on the noisy functions.

As a second contribution, we establish resilience guarantees for several classical greedy algorithms in bi-criteria combinatorial optimization. In particular, we show that algorithms for Submodular Cover (SC), Submodular Cost Submodular Cover (SCSC), and Fair Submodular Maximization (FSM) satisfy $(\alpha,\beta,\delta,\texttt{N})$-resilience under additive oracle perturbations. Proving these results requires new analysis techniques, as standard greedy proofs rely critically on exact submodularity and monotonicity. These resilience guarantees serve as concrete instantiations of the framework and immediately yield online regret and constraint violation bounds under bandit feedback. While the problems studied in the paper use submodular functions, for the framework 
we do not assume linearity, submodularity, or problem-specific structures on noisy functions, enabling broad applicability.

Notably, even in the single-objective setting with bandit feedback, without exploiting special reward structures such as linearity, the best known regret bounds avoiding combinatorial dependence scale as $\tilde{\mathcal{O}}(T^{2/3})$ \cite{nie2023framework}. Moreover, $\Omega(T^{2/3})$ lower bounds are known for submodular maximization under bandit feedback when comparing against greedy benchmarks \cite{tajdini2024nearly}. Our results therefore extend the frontier of offline-to-online reductions to the bi-criteria regime, providing the first general theoretical foundation for learning with competing objectives under noisy oracle access. Unlike the single-objective case, noise simultaneously degrades both objective and constraint guarantees, and errors can propagate across criteria in a coupled manner. As a result, stability analyses based on single-objective resilience do not directly apply.

\if 0

Many real-world combinatorial optimization problems 
for multi-agent systems
require balancing competing objectives, such as minimizing costs while achieving coverage guarantees or maximizing utility under fairness constraints. For example, applications such as sensor placement \cite{krause2005near,krause2008near} and feature subset selection \cite{krause2005near} can be posed as minimizing a submodular function $f$ while maximizing another submodular function $g$. 
One approach to simultaneously minimize $f$ while maximizing $g$ is through minimizing the difference between these submodular functions, also known as DS optimization. However, it has been shown that the DS optimization problems are often NP-hard and inapproximable \cite{trevisan2014inapproximability}. 
Thus the problem of simultaneously optimizing functions $f$ and $g$ can be posed as a constrained combinatorial optimization problem, such as minimizing $f$ subject to minimum guarantees (threshold) on $g$ (constraint). While it is desirable to obtain an exact solution to these problems, combinatorial problems by nature are often NP-hard, necessitating approximation algorithms \cite{byrnes2009maximizing}. 
When providing approximate solutions to the objective cost while satisfying the constraint fully, the solution may result in either poor approximation guarantees on the objective function or require a high threshold value on constraints.
In order to achieve good approximation guarantees, many works \cite{iyer2012algorithms,crawford2019submodular,chen2024fair,chen2024bicriteria} have developed algorithms that give bi-approximate guarantees, i.e., an approximate solution to both the constraint as well as the objective function. Works in this domain commonly assume the availability of an exact evaluation of functions ~\cite{wolsey1982analysis,wan2010greedy,soma2015generalization}.

However, real-world implementations frequently face \emph{noisy  function evaluations}, e.g., stochastic/adversarial noisy rewards, estimated costs, or imperfect utility measurements, which can catastrophically degrade the performance of offline algorithms designed for exact oracles. 
Often, these noisy function evaluations do not satisfy structural properties such as submodularity, monotonicity, or linearity.
A critical yet understudied question is: \emph{How do bi-criteria approximation guarantees degrade when objective or constraint functions are perturbed?} This paper addresses this gap by analyzing \emph{resilience}, the ability of algorithms to maintain approximation guarantees under bounded noise evaluations. 
We introduce a precise and general definition of the resilient approximation for the offline algorithms, specifically tailored to bi-criteria combinatorial optimization.  
We show that this broader resilience framework enables the first regret guarantees for online learning with bandit feedback in bi-criteria settings.
We formalize resilience through $(\alpha,\beta,\delta, \texttt{N})$-resilience guarantees, where $\alpha$ and $\beta$ are approximation factors for the two criteria, $\delta$  quantifies tolerance to additive noise in function evaluations, and \texttt{N} is the total number of oracle calls to noisy functions. 
Resilience ensures that small errors in the estimation of functions do not arbitrarily violate approximation guarantees, a property essential for applications such as influence maximization with noisy spread estimates \cite{he2014stability} or crowdsourcing with uncertain worker accuracy \cite{jain2018quality}. 
We provide resilience guarantees for {three} fundamental problems: Submodular Cover (SC), Submodular Cost Submodular Cover (SCSC), and Fair Submodular Maximization (FSM). 
\fi 

A core challenge lies in analyzing how bi-criteria guarantees degrade under  noisy oracles with bounded perturbations that violate structural assumptions. 
For instance, the greedy analysis for SC critically relies on submodularity and exact marginal gains \cite{goyal2013minimizing}, properties no longer preserved under noise. To address this, we develop new proof techniques: (1) We generalize density bounds (Lemma \ref{lemma:density-bound:exact}) for inexact oracles, ensuring greedy selections approximate the optimal sequence despite non-submodular perturbations. (2) We resolve recursive cost bounds with additive error propagation by identifying a logarithmic inequality (Lemma \ref{lemma:log-ineq:MINTSS:bound}), enabling us to recover approximation ratios with graceful noise-dependent terms. (3) For termination, we prove the algorithm halts even with noisy constraints by bounding the interaction between error tolerance ($\epsilon$) and problem parameters (e.g., $\omega$, $c_{\min}/c_{\max}$), ensuring feasibility. 
We note that the proof of resilience is not straightforward;
to illustrate this we note that where \cite{goyal2013minimizing} claims to prove resilience with multiplicative error, the proof has a mistake in generalizing density bounds (See Appendix \ref{counter-goyal}). 


\if 0
Further, we connect the resilience guarantees to the regret guarantees for online algorithms 
and analyze the proposed algorithms for a new multi-armed bandit problem class:
Bi-Criteria Combinatorial Multi-Armed Bandits (CMAB) with Bandit Feedback. 
Bandit feedback in CMAB is significantly more challenging as only the aggregate reward of the selected subset is observed, as opposed to the semi-bandit feedback setting where rewards of each selected arm is observed.
Existing CMAB methods in bandit feedback typically focus on single-criterion optimization (e.g., maximizing rewards subject to a fixed budget) and rely on problem-specific structures, such as submodularity, to derive regret guarantees~\citep{Du_Kuroki_Chen_2021,pmlr-v117-rejwan20a,agarwal2022stochastic,agarwal2021dart,nie2022explore,nie2024stochastic,fourati2023randomized,fourati2024combinatorial}. 
However, real-world applications (from budgeted recommendations \citep{mastrolilli2014bi} to fair resource allocation \citep{ogryczak2010bicriteria}) demand bi-criteria guarantees, where an algorithm must jointly optimize one objective (e.g., cost) while ensuring constraints on another (e.g., utility). 
Such problems often lack exact solutions due to NP-hardness, requiring approximation algorithms. 

\fi

\if 0
This paper proposes the first black-box framework for converting offline bi-criteria approximation algorithms into online CMAB algorithms with sublinear regret and cumulative constraint violation (CCV). Our approach requires only that the offline algorithm provides an \((\alpha, \beta)\)-bi-criteria guarantee with \(\delta\)-resilience (i.e., tolerance to perturbations in function oracles) and makes \(\texttt{N}\) oracle calls. The framework achieves \(\mathcal{O}(\delta^{2/3}\texttt{N}^{1/3}T^{2/3}\log^{1/3}(T))\) bounds on both regret and CCV. 
While the problems studied in the paper are submodular, 
for the framework 
we do not assume linearity, submodularity, or problem-specific structures on noisy functions, enabling broad applicability.  Bi-criteria approximation proofs for SC, SCSC, and FSM heavily rely on the submodularity and monotonicity of the functions considered and thus cannot be adapted for the noisy functions in a straightforward way. 
We also note that even in the special case of a single objective with bandit feedback, 
without exploiting special characteristics of the reward function like linearity,
the best known regret avoiding combinatorial dependence in regret is \(\tilde{\mathcal{O}}(\delta^{2/3}\texttt{N}^{1/3}T^{2/3}\log^{1/3}(T))\) \cite{nie2023framework}. 
Notably, \citet{tajdini2024nearly} established a lower bound of $\Omega(T^{2/3})$ for regret 
against
the greedy algorithm in the submodular maximization problem under a cardinality constraint, when avoiding the combinatorial dependence of arms in the regret bound.

The key contributions of this work are as follows. 

\begin{enumerate}[leftmargin=*,topsep=0pt]
\item We formulate 
the notion of resilience 
for 
discrete bi-criteria offline approximation algorithms.

\item We derive 
resilience  guarantees for several bi-criteria combinatorial approximation algorithms for different problems, including
submodular cover~\citep{goyal2013minimizing}, submodular cost covering~\citep{crawford2019submodular}, and fair submodular maximization~\citep{chen2024fair} problems, all of which inherently require bi-criteria guarantees. 
See Table \ref{tab:summaryofresults} for
details.




\item 

We propose a new class of stochastic multi-armed bandit problems: 
Bi-Criteria Combinatorial Multi-Armed Bandits (BC-CMAB). 
This class models sequential decision-making problems under a constrained combinatorial optimization problem by approximating the constraints to achieve good approximation guarantees on the objective function. 

\item This paper provides a general framework that transforms discrete, bi-criteria offline approximation algorithms into online BC-CMAB
algorithms that can be used when only bandit feedback is available. 
This framework bridges offline bi-criteria approximation and online CMAB, providing the first regret and constraint violation guarantees 
for general bi-criteria problems under bandit feedback.


\item We prove that
using an offline $(\alpha,\beta)$ bi-criteria approximation algorithm that is resilient to errors in the function oracle calls with resilience parameter $\delta$ and uses $\texttt{N}$ oracle calls,
our framework produces a BC-CMAB algorithm that achieves
%
\(\mathcal{O}(\delta^{2/3}\texttt{N}^{1/3}T^{2/3}\log^{1/3}(T))\)
regret and CCV under bandit feedback. 
The results only use the properties of the offline algorithm, and avoid restrictive assumptions about problem structure. 

\end{enumerate}

\noindent
The framework is applied to the studied resilient guarantees, 
and the summary of our results is provided in Table \ref{tab:summaryofresults}, which discusses various settings and the regret guarantees that our framework is able to achieve under these settings.

\fi 

The key contributions of this work are as follows.

\begin{enumerate}[leftmargin=*,topsep=0pt]

\item \textbf{Black-box offline-to-online framework.}
We develop a general black-box framework that converts discrete bi-criteria offline approximation algorithms into online algorithms for \emph{Bi-Criteria Combinatorial Multi-Armed Bandits (BC-CMAB)} with bandit feedback. The framework requires no structural assumptions, and applies to a broad class of bi-criteria combinatorial optimization problems.

\item \textbf{Resilience as the enabling offline abstraction.}
We formalize a notion of $(\alpha,\beta,\delta,\texttt{N})$-\emph{resilience} for bi-criteria offline approximation algorithms, which characterizes how joint objective and constraint guarantees degrade under bounded (possibly worst-case) oracle noise. This notion is designed to be strong enough to support online learning guarantees while remaining agnostic to the source of noise.

\item \textbf{General online regret and constraint violation guarantees.}
We show that if an offline bi-criteria approximation algorithm satisfies $(\alpha,\beta,\delta,\texttt{N})$-resilience, then our framework produces an online BC-CMAB algorithm achieving
\[
\mathcal{O}\!\left(\delta^{2/3}\texttt{N}^{1/3}T^{2/3}\log^{1/3}T\right)
\]
regret and cumulative constraint violation under bandit feedback. The guarantees depend only on the resilience parameters of the offline algorithm.

\item \textbf{Resilience of classical bi-criteria algorithms.}
As concrete instantiations of the framework, we establish $(\alpha,\beta,\delta,\texttt{N})$-resilience for several classical greedy algorithms in bi-criteria combinatorial optimization, including Submodular Cover~\citep{goyal2013minimizing}, Submodular Cost Submodular Cover~\citep{crawford2019submodular}, and Fair Submodular Maximization~\citep{chen2024fair}. These results require new noise-aware analyses that go beyond existing exact-oracle proofs.

\item \textbf{Online consequences for canonical problems.}
Combining the framework with the above resilience guarantees immediately yields the first sublinear regret and cumulative constraint violation guarantees for these bi-criteria problems under bandit feedback. A summary of the resulting bounds is provided in Table~\ref{tab:summaryofresults}.

\end{enumerate}

\noindent

Overall, the proposed framework decouples online learning guarantees from problem-specific structure by isolating resilience as the key offline requirement, with classical bi-criteria optimization problems serving as representative instantiations.


\section{Related Work}\label{sec:rw}
\begin{table*}[t]
\caption{Summary of the $(\alpha, \beta, \delta, \texttt{N})$-resilient approximation for bi-criteria problems, including Submodular Cover (SC), Submodular Cost Submodular Cover (SCSC), and Fair Submodular Maximization (FSM), with the corresponding regret guarantees in CMAB under bandit feedback. This work establishes the first sublinear regret with cumulative constraint violation (CCV) under bandit feedback. 
Here, $\alpha$ and $\beta$ denote approximation factors for the objective $f$ and constraint $g$, respectively, 
where monotonicity (\texttt{Mon}) and submodularity (\texttt{Sub}) correspond to the properties of objectives and constraints. $\delta$ quantifies resilience to approximation, and $\texttt{N}$ represents the number of oracle calls to the offline algorithm. 
$h\triangleq \max(f_{\max},g_{\max})$. Details of problem-dependent parameters and other notations are discussed in Section \ref{sec:app}.}
    \label{tab:summaryofresults}
    \centering
    \resizebox{\textwidth}{!}{
    \begin{tabular}{c c c c c | c c c}
        \hline
        App. & Objective $f$ & $g$ & $\alpha$ & $\beta$ & $\delta$ & $\texttt{N}$ & Our Regret \& CCV \\
        \hline
        SC & $\min$\texttt{(Linear)} & \texttt{Mon}+\texttt{Sub} & $1+\ln\frac{{\kappa}}{\omega}$ & $1-\frac{\omega}{\kappa}$ & 
        $\frac{c_{\max}}{\omega c_{\min}}f_{\max}(3 + 6n)$
        & $n^2$ & $\mathcal{O}\left( 
        n^{4/3} f_{\max}^{5/3} T^{2/3} \log^{1/3}(T)
        \right)$ \\  
        SCSC & $\min (\texttt{Mon+Sub})$ & \texttt{Mon+Sub} & $\rho\left(\ln \left(\frac{\Psi}{\gamma}\right)+2\right)$ & $1$ & $\max\left\{\frac{8 c_{\max}}{c_{\min}\mu} \rho\left(\ln \left(\frac{\Psi}{\gamma}\right)+2\right)f_{\max}, 1\right\}$ & $n^2$ & $\mathcal{O}\left( n^{4/3} h^{5/3} T^{2/3} \log^{1/3}(T)\right)$ \\
        FSM & $\max (\texttt{Mon+Sub})$ & \texttt{Mon} & $\frac{1}{1+\omega}$
        &$ \frac{1}{\omega}$&$\max\left\{\frac{4\kappa}{1+\omega},1\right\}$&$\frac{n\kappa}{\omega}$ & $\mathcal{O}\Big( n^{1/3} f_{\max} T^{2/3} \log^{1/3}(T)\Big)$ \\
        \hline
    \end{tabular}}
\end{table*}

\subsection{Offline Bi-Criteria Optimization with Combinatorial Set Selection}
Offline bi-criteria optimization has been extensively studied for combinatorial problems such as submodular cover, fair submodular maximization, and knapsack-constrained optimization. 
Key problems include minimizing a submodular cost function while ensuring a utility threshold \cite{wolsey1982analysis,wan2010greedy,goyal2013minimizing,crawford2019submodular}, maximizing submodular utility under fairness constraints \cite{chen2024fair}, 
and balancing budget adherence with objective guarantees \cite{iyer2013submodular}. 
Most works assumed exact oracles or linear rewards, limiting applicability to online settings with bandit feedback. 
\citet{crawford2019submodular} 
considered the offline submodular cost submodular cover problem by analyzing the approximate guarantee of a specific greedy algorithm using an inexact oracle. 
\citet{goyal2013minimizing} analyzed a greedy algorithm for the submodular cover under a 
class of inexact oracles (multiplicative noise) for social network influence propagation. These works do not address online learning or bandit feedback and do not provide guarantees under noisy oracle perturbations (additive bounded noise).


\subsection{CMAB with Semi-bandit Feedback}

There are several works that focus on CMAB under semi-bandit feedback.
Some of these include proposing a general CMAB framework under a linear reward setting \cite{pmlr-v28-chen13a} and an extension 
to a non-linear reward setting \cite{NIPS2016_aa169b49}.
A few works have also looked into Thompson sampling based algorithms to provide a general framework to solve CMAB under semi-bandit feedback~\cite{kong2021hardness,wang2018thompson}. Adversarial CMAB with non-linear rewards is considered in \cite{han21b}.
Recent advances address constrained CMAB but remain limited to linear structures. For example, \citet{NEURIPS2022_13f17f74} consider linear rewards under linear constraints; \citet{pmlr-v202-li23aw} studies best-arm identification with knapsack constraints; and \citet{cmab-fair} consider the linear fairness constraints in CMAB under semi-bandit feedback. 
Additionally, 
\citet{lin2015stochastic,yu2016linear,takemori2020submodular} analyze marginal gains as feedback under the semi-bandit setting, enabling the learner to maximize rewards with multiple constraints based on individual gains. 
Notably, no prior semi-bandit framework supports \textit{bi-criteria} optimization with non-linear or combinatorial constraints.

\subsection{Single Objective CMAB with Bandit Feedback}

CMAB with bandit feedback has been widely investigated \cite{agarwal2021dart,agarwal2022stochastic,nie2022explore,nie2023framework,nie2024stochastic,fourati2023randomized,fourati2024combinatorial,fouratifederated,tajdini2024nearly}. 
Prior general frameworks for CMAB under bandit feedback, such as those by \cite{nie2023framework,fouratifederated}, convert offline algorithms into online algorithms using $(\alpha,\delta)-$resilience, but they focus solely on single-objective optimization. These works assume that the offline algorithm tolerates noisy reward estimates but does not address multiple objectives. Most general bandit frameworks require resilience or robustness conditions to handle noisy function estimates. 
Additional works include \cite{niazadeh2021online}, which presents an offline-to-online transformation for single-objective optimization under adversarial bandit feedback using the Blackwell approachability. \citet{streeter2008online} provide a related approach for the limited adversarial feedback in the context of single-objective submodular maximization.
For instance, semi-bandit methods impose structural assumptions like monotonicity and smoothness \cite{pmlr-v28-chen13a}, while bandit frameworks (e.g., \cite{nie2023framework}) rely on $\delta$-resilience to ensure approximation guarantees degrade gracefully with estimation errors. 
We note that \citet{pmlr-v28-chen13a} considered offline ``$(\alpha,\beta)$ approximation oracles,''
where $\alpha$ was an approximation coefficient for the objective, but $\beta$ referred to a success probability (that the solution satisfied the  $\alpha$ approximation guarantee).




Our framework addresses this gap by 
extending the offline algorithms to an online setting 
and requiring only that offline algorithms 
(e.g., algorithms for 
submodular cover
\cite{wolsey1982analysis,goyal2013minimizing},
submodular cost submodular cover \cite{wan2010greedy},
and
fair submodular maximization \cite{chen2024fair}) 
satisfy 
$\delta$-resilience (Definition~\ref{def:robustness}), a property we show holds for several existing bi-criteria approximation algorithms. 
This allows seamless conversion of offline guarantees to online BC-CMAB with sublinear regret and constraint violations, without problem-specific structures.

\section{Resilience of Offline Algorithms}  

This section formalizes the resilience property required for offline bi-criteria approximation algorithms. 
Resilience ensures that small errors in evaluating the objective and constraint functions 
during real world use
do not 
catastrophically degrade  the  performance.
This property will also enable black-box conversion of offline algorithms into online bi-criteria CMAB algorithms with sublinear regret and constraint violation bounds.

\begin{definition}[$(\alpha, \beta, \delta, \texttt{N})$-Resilient Approximation]\label{def:robustness}  
An offline algorithm \(\mathcal{A}\) is an \((\alpha, \beta, \delta, \texttt{N})\)-resilient approximation algorithm for the bi-criteria problem  
\[
\max_{S \subseteq \Omega} 
f(S) \quad \text{subject to} \quad g(S) \leq \kappa,
\]  
if, given access to approximate deterministic oracles 
$\hat{f}$ and $\hat{g}$
satisfying \(|f(S) - \hat{f}(S)| < \epsilon\) and \(|g(S) - \hat{g}(S)| < \epsilon\) for all \(S \subseteq \Omega\), 
\(\mathcal{A}\) returns a solution \(S^\mathcal{A}\) such that:  
\begin{align}
    \mathbb{E}[f(S^\mathcal{A})] \geq \alpha f(\mathrm{OPT}) - \delta \epsilon
    \quad
    \text{and}
    \quad
    \mathbb{E}[g(S^\mathcal{A})] \leq \beta \kappa + \delta \epsilon,
\label{eq:def:resilience}
\end{align}  
where \(\mathrm{OPT} = \arg\max_{S \subseteq \Omega} f(S) \text{ s.t. } g(S) \leq \kappa\). Here, \(\texttt{N}\) bounds the total number of oracle calls to \(\hat{f}\) and \(\hat{g}\), and \(\delta\) quantifies resilience to approximation errors.   The expectation is over the algorithm’s randomness; the oracle perturbations are otherwise arbitrary and need not be stochastic.
\end{definition}  

The key role of $(\alpha, \beta, \delta, \texttt{N})$-resilient approximation is that it is exactly the condition required for the offline-to-online conversion in Section \ref{sec:problem}. Any weaker notion (e.g.,  multiplicative robustness or stochastic noise assumptions) would not work for such an offline-to-online conversion under bandit feedback.

When defining the resilience property on functions $f$ and $g$, one could use different parameters $\delta_f$ and $\delta_g$ where $\mathbb{E}[f(S^\mathcal{A})] \geq \alpha f(\mathrm{OPT}) - \delta_f \epsilon$ and 
$\mathbb{E}[g(S^\mathcal{A})] \leq \beta \kappa + \delta_g \epsilon$. 
However, for the sake of simplicity, we use $\delta = \max\{\delta_f, \delta_g\}$.  
Also, for simple exposition, \cref{def:robustness} is defined for combinatorial bi-criteria problems (specifically over a power set).  

We note that when \(g\) (likewise \(f\)) is deterministic and known (i.e., \(\hat{g}(S) = g(S)\)), the resilience condition on \(g\), \(|g(S) - \hat{g}(S)| < \epsilon\) is not needed. Thus, we will not use/need the resilience condition on the function that is deterministic, while both 
inequalities in \eqref{eq:def:resilience}
will remain the same. 
  

The \(\delta\)-resilience term ensures that small errors (\(\epsilon\)) in estimating \(f\) or/and \(g\) (e.g., due to noisy bandit feedback) do not compound arbitrarily. 
This stability is essential for real-world use, where some function values need to be approximated or estimated. 
 It will also be important to extend offline algorithms to the online setting, where function estimates are inherently imperfect. 

The definition above is for maximization problems, but can be directly extended to 
minimization problems 
$\min_{S \subseteq \Omega} f(S) \ \text{subject to} \ g(S) \geq \kappa$.
See Appendix \ref{apd_res_def} for details. 

Bi-criteria algorithms under noisy feedback, such as   \cite{crawford2019submodular} for SCSC, demonstrate resilience to inexact oracles but remain confined to offline settings. {As mentioned in \cite{crawford2019submodular}, if the value oracle function satisfies some nice properties such as monotonicity and submodularity, then approximation guarantees can be achieved using the existing results for SCSC. However, it is not always necessary that the value oracle function satisfies these structural properties. We extend the offline algorithm of \cite{crawford2019submodular} to online BC-CMAB by formally defining the resilience guarantee and extending the approximation guarantees in \cite{crawford2019submodular}  to resilience guarantees as a direct corollary. We further provide resilience guarantees for two more fundamental problems, SC and FSM, under any value oracle function (not necessarily monotone and submodular). We next adapt these resilience guarantees for developing an online BC-CMAB algorithm with sublinear regret guarantees.}

\section{Resilience Guarantees for Different Problems}\label{sec:app}
In this section, we briefly discuss resilience 
of some 
of the 
bi-criteria approximation offline algorithms solving the problems of  Submodular Set Cover (SC), Submodular Cost Submodular Cover (SCSC), and Fair Submodular Maximization (FSM) problems. 
These problems cover a wide variety of applications in social influence maximization
\cite{goyal2013minimizing,han2017cost},
recommendation systems 
\cite{el2011beyond,guillory2011simultaneous},
active set selection
\cite{norouzi2016efficient}.
Further, these are just some examples where we provide the resilience guarantees. 

\subsection{Resilience Guarantee for Submodular Cover Problem}  
The Submodular Cover (SC) problem is a minimization problem where the goal is to find a subset \(S \subseteq \Omega\) that:  (i) minimizes a 
linear non-negative cost \(f(S) = \sum_{x \in S} c_x\),
and
(ii) satisfies \(g(S) \geq \kappa\), where \(g\) is a submodular utility function. 
%
%
\citet{goyal2013minimizing} proposed  a bi-criteria approximation algorithm \textsc{GREEDY-MINTSS} for this problem,
which achieves the following guarantees provided access to exact oracles for $f$ and $g$. The algorithm is provided in Algorithm \ref{alg:mintss}. 

\begin{algorithm}[t]
\caption{\textsc{Greedy-MINTSS} \cite{goyal2013minimizing}}
\label{alg:mintss}
\begin{algorithmic}[1]
\REQUIRE Ground set $\Omega$, utility function $g: 2^\Omega \to \mathbb{R}^+$, cost function $f(S) = \sum_{i \in S} c_i$, threshold $\kappa$, tolerance $\omega$.
\STATE Initialize $S \leftarrow \emptyset$
\WHILE{${g}(S) < \kappa - \omega$}
    \STATE Find $i^* \in \arg\max_{i \in \Omega \setminus S} \frac{\min({g}(S \cup \{i\}), \kappa) - {g}(S)}{c_i}$ \hfill \textit{\color{gray}// Maximize marginal utility per unit cost}
    \STATE Add $i^*$ to $S$: $S \leftarrow S \cup \{i^*\}$
\ENDWHILE
\RETURN $S$
\end{algorithmic}
\end{algorithm}

\begin{lemma}[Bi-Criteria Guarantees of \textsc{Greedy-MINTSS}, \cite{goyal2013minimizing}]  
\label{thm:MINTSS:offline}  
For any \(\omega > 0\), 
using exact oracles, 
\textsc{Greedy-MINTSS} 
outputs a solution $S$ satisfying
$f(S) \leq \alpha \cdot f(\mathrm{OPT})$ and $g(S) \geq \beta \cdot \kappa$ where $\alpha = 1 + \ln(\kappa/\omega)$ and $\beta = 1 - \omega/\kappa.$ 
\end{lemma}



{In this work, we focus on the problem where $g$ is stochastic.  There are several applications, such as online portfolio design, database query processing, sensor placement, and viral marketing, that requires online variant of submodular cover problem.  For example, in the application of viral marketing the influence $g(S)$ provided by a set of nodes  $S$ is generally not known beforehand and can only be obtained by sponsoring these set of influencers (nodes). Typically, the cost incurred to each sponsor $x$ is fixed and hence known beforehand, therefore we assume that $f$ is known in this setting. }
%
%
%
%
%
\if 0 The resilience definition for the offline algorithm in \eqref{eq:resilience-f}-\eqref{eq:resilience-g} will change to:

\begin{align}
\mathbb{E}[f(S^\mathcal{A})] &\leq \alpha f(\mathrm{OPT}) + \delta \epsilon, \label{eq:resilience-fm} \\
\mathbb{E}[g(S^\mathcal{A})] &\geq \beta \kappa - \delta \epsilon, \label{eq:resilience-gm}
\end{align}  

We note that our framework will directly apply for minimization problem, where the definitions of regret and CCV are negated. The proof for offline to online algorithm will follow directly with the relevant change of inequalities. 
\fi 
We show that when an inexact oracle \(\hat{g}\) is used instead of \(g\), 
with \(|\hat{g}(S) - g(S)| \leq \epsilon\), \textsc{GREEDY-MINTSS} exhibits \(\delta\)-resilience. 
The cost guarantee degrades by an additive \(\delta \epsilon\), while the utility guarantee is relaxed multiplicatively by \(\beta\) and additively by $\epsilon$. 
Since this is a minimization problem, we use the resilience definition in Appendix \ref{apd_res_def}.
In the following statement, we denote \(c_{\max} = \max_{x \in \Omega} c_x\), \(c_{\min} = \min_{x \in \Omega} c_x\), and \(n = |\Omega|\). 

\begin{theorem}
\label{thm:MINTSS:robust}  
For any $\omega > 0$, \textsc{Greedy-MINTSS} \citep{goyal2013minimizing} is an 
\((\alpha, \beta, \delta, \texttt{N})\)-resilient approximation algorithm for SC 
with $\epsilon \le \omega\frac{c_{\min}}{4 n c_{\max}}$, where:  
\[
\alpha = 1 + \ln(\kappa/\omega), \quad \beta = 1 - \omega/\kappa, 
\quad \delta = \frac{(3 + 6n)c_{\max}}{\omega c_{\min}}f_{\max}, 
\quad \texttt{N} = n^2.  
\]  
\end{theorem}  

\if 0
\color{gray!40}
\begin{proof}[Proof Sketch] (See Appendix~\ref{apd:mintss} for full details.)
\todo{cjq - highlight non-trivial steps here and in appendix, then also in intro?}
The resilience guarantee of the MINTSS algorithm under inexact utility evaluations follows from carefully bounding 
how errors in the utility oracle propagate through the algorithm's iterative selection process. 
The algorithm terminates when the inexact utility estimate $\hat{g}(S)$ satisfies $\hat{g}(S) \geq \kappa - \omega$. 
Given that the error in the utility oracle is bounded by $|\hat{g}(S) - g(S)| \leq \epsilon$, we can derive a lower bound on the true utility $g(S)$. 
Specifically, the true utility satisfies $g(S) \geq \hat{g}(S) - \epsilon \geq (\kappa - \omega) - \epsilon$. Rewriting this expression shows that the utility constraint is approximately satisfied with a relaxation factor $\beta = 1 - \frac{\omega}{\kappa}$ and an additive error term proportional to $\epsilon$.

For the cost analysis, the algorithm selects elements iteratively by maximizing the marginal gain in utility per unit cost, as estimated by the inexact oracle. 
Using the density bound from Lemma \ref{lemma:density-bound:exact} (Appendix \ref{apd:mintss}), we show that the true marginal gain is close to the noisy estimate, up to an error term dependent on $\epsilon$. This ensures that the cost of the selected set remains within a logarithmic factor of the optimal cost, with an additional error term that scales with $\epsilon$. The recursive relationship governing the remaining utility gap $\kappa_i = \kappa - g(S_i)$ at each iteration is derived under the noisy oracle assumption. 
By unrolling this recursion and bounding the number of iterations, we establish an upper bound on the total cost of the selected set. This bound depends on the approximation coefficient $\alpha = 1 + \ln\left(\frac{\kappa}{\omega}\right)$ and the resilience parameter $\delta_f$, which captures the impact of the error bound $\epsilon$ on the cost guarantee.

Finally, the resilience parameter $\delta$ is determined by combining the utility and cost error terms. 
Each iteration involves querying the utility oracle, and the total number of oracle calls $\texttt{N}$ is bounded by $n^2$, where $n$ is the size of the ground set. 
\end{proof}

\color{black}
\fi


See Appendix~\ref{apd:mintss} for the proof.  
Since our resilience definition, \cref{def:robustness}, has approximation factors $\alpha$ and $\beta$ for the objective and constraint respectively, we seek to recover the same coefficients when inexact oracles are used. 
Thus, we aim to preserve the basic proof structure used in the exact oracle case \cite{goyal2013minimizing}. 
Developing new variants of algorithms that are designed to be resilient to inexact oracles is a valuable research direction but out of scope for this work.
We next briefly highlight some of the technical challenges encountered in the proof for inexact oracles.


We emphasize here that the existing bi-criteria proof of \textsc{Greedy-MINTSS} naturally uses the submodularity and monotonicity of the exact oracle $g$. 
However, the inexact oracle $\hat{g}$ 
 is neither submodular nor monotone. 
Every invocation of those properties in proof steps needs to be modified (if possible to do so).  
Another important consequence is that the sequence of sets $(S_1, S_2, \dots)$  chosen using an inexact oracle $\hat{g}$ could differ completely from the sequence chosen using an exact oracle $g$. In the following, we describe some of the key novelty in the proof.

\textbf{Generalizing Marginal Density Bounds (\cref{lemma:density-bound:exact}):}
In the exact setting, a key component of the  analysis is a marginal density bound that guarantees progress in each greedy iteration. However, this lemma assumes submodularity. In our case, since $\hat{g}$
 is neither submodular nor monotone, we cannot apply this directly. More precisely, a key lemma lower-bounding marginal densities with respect to a utility gap, \cref{lemma:density-bound:exact}, only holds for the exact oracle $g$ and asserts a property about the density of the element $x'$ with the largest density with respect to the exact oracle $g$.  
(Also in \cite{goyal2013minimizing}, the lemma was also only shown for the specific sequence of sets chosen by \textsc{GREEDY-MINTSS} with an exact oracle $g$.)  
Where that marginal density bound is used, we  provide  lower bound on marginal gains by the element with the largest density under the inexact oracle $\hat{g}$.  We also note that  for an inexact oracle, \cite{goyal2013minimizing} stated (without proof) a variant of their Lemma 1 for the threshold gap with respect to the inexact oracle for the unit cost (e.g., cardinality) case with multiplicative error. However, we note that this generalization is invalid, by showing a counterexample in Appendix \ref{counter-goyal}. Thus, we note that the procedure of establishing resilience is subtle; we show that a commonly cited generalization in \cite{goyal2013minimizing} does not hold under arbitrary inexact oracles.



\textbf{Logarithmic Inequality and Cost Bound Recovery:} The original proof of \cite{goyal2013minimizing} uses a clean recursive inequality involving exponentials to relate the cost of the final set to the optimum. However, when oracle noise is present, this recursion gains an additive error term that propagates and makes unraveling non-trivial. A major hurdle in generalizing the proof 
is identifying a recursive cost bound that can be used to relate the cost $f(S_{\ell-1})$ at the second to last \texttt{while} loop to the cost of the optimal solution $f(\mathrm{OPT})$ (see proof parts 3 and 4).  
For the utility gap $\kappa_i$ (w.r.t. threshold $\kappa$), 
in the case of exact oracles in \cite{goyal2013minimizing}, we have a recursion of the form $\kappa_i \leq \kappa_{i-1} \exp(-c_{x_i}/f(\mathrm{OPT}))$.  
A simple unraveling over $\ell-1$ iterations and using that $f$ is additive yields $\kappa_{\ell-1} \leq \kappa \exp( - f(S_{\ell-1})/f(\mathrm{OPT}))$.  
Then using that $\omega < \kappa_{\ell-1}$ we can easily get the bound $f(S_{\ell-1}) \leq f(\mathrm{OPT})\ln(\frac{\kappa}{\omega})$. 

However, working with inexact oracles we need to account for errors, and our bound has an additive error term 
$\kappa_i \le \kappa_{i-1}\exp({\frac{-c_{x_i}}{f(\mathrm{OPT})}}) + \frac{3\epsilon c_{\max}}{c_{\min}}$.  
Unraveling yields mixed multiplicative--additive terms.  Observing that the exponential term is bounded by one, we bound the terms to keep just a single additive term with $\epsilon$,
$\kappa_{\ell-1} \leq 
\kappa \exp({ - \frac{f(S_{\ell-1})}{f(\mathrm{OPT})}})+ 3\epsilon \frac{c_{\max}}{c_{\min}}\ell$.

With this formula, we can attempt to proceed like the exact oracle case and with rearranging could obtain $f(S_{\ell-1}) \leq f(\mathrm{OPT})
\ln( \kappa/(\omega - 3\epsilon \frac{c_{\max}}{c_{\min}}\ell ) ) $.  
However, since $\epsilon$ is in a logarithm with $\kappa$, 
when we later bound the cost of the final set $f(S_\ell)$ (see proof step 5), 
we would not be able to rearrange terms to  recover the approximation guarantee with 
a coefficient $\alpha = 1+\ln(\kappa/\omega)$ and an additive $\epsilon\delta$ term.

To recover a bound of the form
$f(S_\ell) \leq (1+\ln \frac{\kappa}{\omega}) + \epsilon\delta $, we need our bound on $f(S_{\ell-1})$ to have a similar form, $f(S_{\ell-1})\leq \ln \frac{\kappa}{\omega} + \epsilon(\dots))$.  
For this we identified and proved \cref{lemma:log-ineq:MINTSS:bound} which allows us to obtain a bound on $f(S_{\ell-1})$ in that form.

\textbf{Guaranteeing Termination under Noisy Constraints:}
We note that even the question of whether the algorithm reaches the stopping criteria becomes non-trivial when the criteria is based on an inexact oracle $\hat{g}$.
The sequence of sets $\{S_1, S_2, \dots\}$ chosen using an inexact oracle could significantly differ from the sequence chosen using an exact oracle.  
If $\hat{g}(S)<g(S)$ for all sets $S$, for instance, it is possible that even though a set passed the threshold for stopping with respect to $g$, that is $g(S_i)\geq \kappa - \omega$ for some index $i$,  with respect to $\hat{g}$ it has not ($\hat{g} < \kappa - \omega$) and the while loop could continue on.
We show that provided $\epsilon$ is not too large with respect to problem parameters, 
\textsc{GREEDY-MINTSS} will still reach the stopping criteria (for non-trivial thresholds $k$) even using the inexact $\hat{g}$.  
The range of $\epsilon \in [0, \omega\frac{c_{\min}}{4 n c_{\max}}]$ is based on  later analysis. 




\if 0
Example non-trivial generalizations
\begin{itemize}


    \item (add comment about applying lemma, only holds for best wrt g, not one necessarily picked, so need to account for worst case what picked over inflated, best underinflated but bound gap, though here is density)

    Places we use this:  \dots.  Also in step 4 to bound the 


\blue
A major  hurdle is identifying a recursive cost bound (see proof parts 3 and 4) that can be used to relate the cost $f(S_{\ell-1})$ at the second-to-last step to the cost of the optimal solution $f(\mathrm{OPT})$. 

For the utility gap $\kappa_i$ (wrt main threshold $\kappa$), using exact oracles we have a recursion of the form $\kappa_i \leq \kappa_{i-1} \exp(-c_{x_i}/f(\mathrm{OPT}))$.  A simple unraveling over $\ell-1$ iterations and using that $f$ is additive yields $\kappa_{\ell-1} \leq \kappa \exp( - f(S_{\ell-1})/f(\mathrm{OPT}))$.  Then using that $\omega < \kappa_{\ell-1}$ we can easily get the bound $f(S_{\ell-1}) \leq f(\mathrm{OPT})\ln(\frac{\kappa}{\omega})$. 

However, working with inexact oracles we need to account for errors, and our bound has an additive error term $\kappa_i \le \kappa_{i-1}e^{\frac{-c_{x_i}}{f(\mathrm{OPT})}} + \frac{3\epsilon c_{\max}}{c_{\min}}$.  Unraveling yields many cross terms.  Observing that the exponential term is bounded by one, we bound the cross terms to keep just a single additive term with $\epsilon$,
$\kappa_{\ell-1} \leq \kappa e^{ - \frac{f(S_{\ell-1})}{f(\mathrm{OPT})}}+ 3\epsilon \frac{c_{\max}}{c_{\min}}\ell$.
\hl{this part maybe not too big} \red{fixed. taken value/iteration l as maximum number of "oracle calls" N at p=1, is 39.}

With this formula, we can attempt to proceed like the exact oracle case and with rearranging could obtain $f(S_{\ell-1}) \leq f(\mathrm{OPT})
    \ln\left( \frac{\kappa}{\omega - 3\epsilon \frac{c_{\max}}{c_{\min}}\ell } \right) $.  However, since $\epsilon$ in in a logarithmic term with $\kappa$, 
    when we later bound the cost of the final set $f(S_\ell)$ (see proof step 5), we would not be able to rearrange terms to  recover the approximation guarantee with an approximation coefficient $\alpha = 1+\ln(\kappa/\omega)$ with an additive $\epsilon\delta$ term.

To recover a bound of the form
$f(S_\ell) \leq (1+\ln \frac{\kappa}{\omega}) + \epsilon\delta $, we need our bound on $f(S_{\ell-1})$ to have a similar form, $f(S_{\ell-1})\leq \ln \frac{\kappa}{\omega} + \epsilon(\dots))$.  
For this we identified and proved \cref{lemma:log-ineq:MINTSS:bound} which allows us to obtain a bound on $f(S_{\ell-1})$ in that form.

To relate the cost of the final element $c_{x_\ell}$ to the cost of the optimal set $f(\mathrm{OPT})$, which is needed to bound the cost of the final set $f(S_\ell)$, with exact oracles, a simple application of \cref{lemma:density-bound:exact} can yields $c_{x_\ell} \leq f(\mathrm{OPT}$.  
That same argument actually holds for all elements in the greedy sequence (when exact oracle is used) as well  $c_{x_i} \leq f(\mathrm{OPT}$ for $i=1,\dots,\ell$.
However, \cref{lemma:density-bound:exact} only holds for the exact oracle $g$, not $\hat{g}$ 
(also in \cite{goyal2013minimizing} the lemma was also only shown for the specific sequence of sets chosen by MINTSS with an exact oracle $g$).
Carefully bounding errors, we can use \cref{lemma:density-bound:exact} for the final set $S_\ell$ and best element $x_\ell'$ with respect to the exact oracle $g$ to obtain a bound on the density of the final element $x_\ell$ actually chosen.
We can then show $\frac{\kappa_{\ell-1} - \kappa_\ell}{\kappa_{\ell-1}} \leq 1$.  However, in rearranging to get the ratio $\frac{\kappa_{\ell-1} - \kappa_\ell}{\kappa_{\ell-1}}$, the additive error term $-\frac{3\epsilon}{c_{\min}}\frac{c_{x_\ell}}{\kappa_{\ell-1}}$
has both the cost of the final element $c_{x_\ell}$  and the utility gap $\kappa_{\ell-1}$ for the second to last set. 

\black

\end{itemize}

\fi
\black

\subsection{Resilience Guarantee for Submodular Cost Submodular Cover Problem}

The Submodular Cost Submodular Cover (SCSC) problem \cite{wan2010greedy} involves finding a subset $S \subseteq \Omega$ that minimizes a submodular cost function $f(S)$ while ensuring that the utility of the selected set, captured by another submodular function $g(S)$, satisfies a lower bound $\kappa$. Formally, the problem can be expressed as:
\[
\text{Minimize } f(S) \quad \text{subject to} \quad g(S) \geq \kappa.
\]
In this problem, both the objective $f(S)$ and the constraint $g(S)$ are submodular functions. 
To address this challenge, 
\citet{wan2010greedy} proposed and analyzed the \textsc{Greedy-SCSC} algorithm (Algorithm~\ref{alg:greedySCSC} in \cref{apdx:GreedySCSC}) assuming exact oracles.
\citet{crawford2019submodular} 
analyzed \textsc{Greedy-SCSC} using an inexact oracle  $\hat{g}$. 




    



\if 0
\begin{wrapfigure}{r}{0.51\textwidth}
  \vspace{-15pt}
  \begin{center}
  \fbox{%
    \begin{minipage}{0.52\textwidth}
    \vspace{-.21in}
    \begin{algorithm}[H]
\caption{\textsc{Greedy-SCSC}}\label{alg:greedy}
\begin{algorithmic}[1]
    \STATE {\bfseries Require:} Submodular oracle 
    $g$,
    submodular cost function $f$ and threshold $\kappa$.
    \STATE Initialize $S \leftarrow \emptyset$.
    \WHILE {$
    g(S)
    < \kappa$}
        \STATE 
        $u \gets \argmax\limits_{i \in \Omega \setminus S} \frac{\min(
        g(S \cup \{i\}), 
        \kappa) - \min(
        g(S),
        \kappa)}{f(\{i\})}.$
        \STATE Update $S \leftarrow S \cup \{u\}$.
    \ENDWHILE
    \RETURN $S$.
\end{algorithmic}
\end{algorithm}
  \vspace{-.21in}
 \end{minipage}
   }
  \end{center}
\end{wrapfigure}
\fi


\begin{lemma}%
[Bi-Criteria Guarantees of \textsc{Greedy-SCSC} \cite{crawford2019submodular}]
\label{lem:SCSC:grd:crawford2019submodular}
The \textsc{Greedy-SCSC} Algorithm, when run with $\epsilon$-approximate oracle $\hat{g}$ (i.e., a fixed oracle $\hat g$ satisfying $\lvert g(S) - \hat g(S) \rvert \le \epsilon  \text{ for all } S$), returns a subset $S$ satisfying:
\[
f(S) \leq \frac{\rho}{1 - \frac{4\epsilon c_{\max}}{c_{\min}\mu}} \left( \ln \left( \frac{\Psi}{\gamma} \right) + 2 \right) f(OPT)
\qquad \text{and} \qquad
g(S) \geq \kappa - \epsilon,
\]
where $c_{\min} = \min_{x \in \Omega} f(\{x\})$, $c_{\max} = \max_{x \in \Omega} f(\{x\})  $, $\Psi = \max_{x \in \Omega} g(\{x\})$,
 $\gamma = \min \{\min\{g(A_i \cup \{x\}) - g(A_i),\kappa\}: {i \in [|\Omega|], x \in \Omega} \}$,
 $\mu = \min \{g(A_i) - g(A_{i-1})\}$, where $A_i$ represents the set selected at the $i$-th iteration,
  $\rho = \max_{X \subseteq \Omega} \frac{\sum_{x \in X} f(x)}{f(X)}$ denotes the curvature of the submodular function $f$,
 and it is assumed that $\mu > \frac{4\epsilon c_{\max}}{c_{\min}}$.

\end{lemma}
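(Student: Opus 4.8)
The plan is to prove the two claims separately: the utility (constraint) bound is immediate from the stopping rule, while the cost bound is a noise-robust version of the classical greedy submodular-cover analysis, run on the \emph{truncated} utility and adapted from linear to submodular cost through the curvature $\rho$.

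First I would fix the truncated function $\bar{g}(\cdot) = \min(g(\cdot),\kappa)$, which remains monotone and submodular and satisfies $\bar{g}(\mathrm{OPT}) = \kappa$ because $g(\mathrm{OPT}) \geq \kappa$. Since truncation is non-expansive, $|\min(\hat{g}(B),\kappa) - \bar{g}(B)| \leq \epsilon$ for every $B \subseteq \Omega$, so every truncated marginal used by \textsc{Greedy} differs from the true truncated marginal by at most $2\epsilon$. The loop exits only once $\hat{g}(S) \geq \kappa$, and combining this with $|\hat{g}(S) - g(S)| \leq \epsilon$ gives $g(S) \geq \hat{g}(S) - \epsilon \geq \kappa - \epsilon$, which is the second inequality.

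For the cost bound I would run the density argument at a generic iteration $i$ with current set $A_i$ and residual gap $\kappa_i := \kappa - \bar{g}(A_i)$. Submodularity of $\bar{g}$ together with $\bar{g}(\mathrm{OPT}) = \kappa$ yields $\kappa_i \leq \sum_{x \in \mathrm{OPT}}\bigl(\bar{g}(A_i \cup \{x\}) - \bar{g}(A_i)\bigr)$; dividing by $\sum_{x \in \mathrm{OPT}} f(\{x\}) \leq \rho\, f(\mathrm{OPT})$ (the curvature bound) and averaging exhibits an element of $\mathrm{OPT}$ whose true density is at least $\kappa_i / (\rho\, f(\mathrm{OPT}))$. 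Because \textsc{Greedy} selects the element of maximal \emph{noisy} density and each truncated marginal is within $2\epsilon$ of the truth while $f(\{x\}) \geq c_{\min}$, the true density $d_i$ of the selected element $u_i$ satisfies $d_i \geq \kappa_i/(\rho\, f(\mathrm{OPT})) - 4\epsilon/c_{\min}$, where the $4\epsilon$ accounts for converting between the noisy and true densities of the chosen element and of the best $\mathrm{OPT}$ element.

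The decisive step is to absorb this additive slack into a multiplicative factor. Using the minimum per-step marginal $\mu$, I have $d_i \geq \mu/c_{\max}$, hence $4\epsilon/c_{\min} \leq (4\epsilon c_{\max}/(c_{\min}\mu))\,d_i$; rearranging and using $1/(1+x) \geq 1-x$ gives
\[
  d_i \;\geq\; \Bigl(1 - \tfrac{4\epsilon c_{\max}}{c_{\min}\mu}\Bigr)\frac{\kappa_i}{\rho\, f(\mathrm{OPT})},
\]
which is positive exactly under the assumed condition $\mu > 4\epsilon c_{\max}/c_{\min}$. Equivalently, the price $1/d_i$ paid per unit of newly covered utility is at most $\rho\, f(\mathrm{OPT}) / \bigl((1-x)\kappa_i\bigr)$ with $x = 4\epsilon c_{\max}/(c_{\min}\mu)$. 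Charging each selected element's cost against the coverage it contributes and telescoping the residual gaps $\kappa_i$ from the first marginal (scale $\Psi$) down to the last positive marginal (scale $\gamma$) converts the total into a logarithmic sum bounded by $\ln(\Psi/\gamma) + 2$, giving $f(S) \leq \tfrac{\rho}{1-x}\bigl(\ln(\Psi/\gamma)+2\bigr) f(\mathrm{OPT})$. I expect the main obstacle to be precisely this additive-to-multiplicative conversion: it hinges on the uniform lower bound $\mu$ on marginal gains and on careful bookkeeping of the truncation, whereas the underlying curvature-based cover charging that produces the exact $\ln(\Psi/\gamma)+2$ range is standard but must be tracked carefully to match the stated constants.
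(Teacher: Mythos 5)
A preliminary point: the paper does not prove this lemma at all --- it is stated with a citation and imported as a black box from \cite{crawford2019submodular} in order to derive the resilience corollary for SCSC --- so there is no in-paper proof to compare your attempt against. Judged on its own, your skeleton is the natural (and presumably the cited source's) one, and its first half is sound: the constraint bound $g(S)\ge \hat g(S)-\epsilon\ge\kappa-\epsilon$ from the stopping rule, the truncation $\bar g=\min(g,\kappa)$, the curvature-weighted averaging over $\mathrm{OPT}$ producing an element of true density at least $\kappa_i/(\rho\, f(\mathrm{OPT}))$, and the $4\epsilon/c_{\min}$ loss for passing between noisy and true truncated densities are all correct.

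There are, however, two genuine gaps in the second half. First, your additive-to-multiplicative conversion needs $d_i\ge \mu/c_{\max}$ where $d_i$ is the \emph{truncated} density of the selected element, but $\mu$ by its definition only lower-bounds the \emph{untruncated} marginals $g(A_i)-g(A_{i-1})$. At any iteration where $g$ crosses the threshold $\kappa$ --- and, because $\hat g(A_{i-1})<\kappa$ only guarantees $g(A_{i-1})<\kappa+\epsilon$, the noisy algorithm can still be running after such a crossing --- the truncated marginal $\min(g(A_i),\kappa)-\min(g(A_{i-1}),\kappa)$ can be arbitrarily small or even zero, so the inequality $4\epsilon/c_{\min}\le (4\epsilon c_{\max}/(c_{\min}\mu))\,d_i$ fails exactly at those end-game iterations; they have to be peeled off and their cost bounded directly (each such element costs at most about $\rho\, f(\mathrm{OPT})/(1-x)$ by the same density comparison), which is where additive constants of the ``$+2$'' type are absorbed, and your sketch applies the conversion uniformly instead. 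Second, the telescoping you describe starts from the residual $\kappa_0=\kappa$ and hence yields a factor of the form $\ln(\kappa/\kappa_{\mathrm{last}})$; since $\kappa$ can exceed $\Psi$ and the last positive residual is not lower-bounded by $\gamma$, this does not reduce to the stated $\ln(\Psi/\gamma)+2$ without the Wolsey-style bookkeeping that ties the first and last greedy increments to $\Psi$ and $\gamma$. You flag that step as ``standard but must be tracked carefully,'' but it is precisely where the claimed constants are produced, so as written the proposal establishes the constraint guarantee and the structure of the cost bound, not the cost bound itself.
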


\cref{lem:SCSC:grd:crawford2019submodular}
nonlinearly combines the cost function approximation guarantee $\alpha$ (when an exact oracle is available) with the oracle error term $\epsilon$.
We can decouple those terms to obtain $\delta$-resilience guarantees.
%
To simplify the cost bound, 
let $ \frac{4\epsilon c_{\max}}{\mu c_{\min}} \le \frac{1}{2}$. 
Using the inequality $(1 - y)^{-1} \leq 1 + 2y$ for $y \le \frac{1}{2}$, we obtain:
\[
f(S) \leq \left( 1 + 2\epsilon \frac{4 c_{\max}}{c_{\min} \mu} \right) \rho \left( \ln \left( \frac{\Psi}{\gamma} \right) + 2 \right) f(OPT).
\]

This directly implies the following  $\delta$-resilience guarantees:

\begin{corollary}
The \textsc{Greedy-SCSC} Algorithm is an $(\alpha, \beta, \delta, \texttt{N})$-resilient approximation algorithm for the monotone Submodular Cost Submodular Cover problem, when \(\frac{4\epsilon c_{\max}}{\mu c_{\min}} \le \frac{1}{2}\),  where:
\[
\alpha = \rho \left( \ln \left( \frac{\Psi}{\gamma} \right) + 2 \right), 
\quad \beta = 1, \]
%
\[\delta = \max \left\{ \frac{8 c_{\max}}{c_{\min} \mu} \cdot \rho \left( \ln \left( \frac{\Psi}{\gamma} \right) + 2 \right) f_{\max},1 \right\}, 
\quad \texttt{N} = n^2.
\]
\end{corollary}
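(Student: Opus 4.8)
The plan is to read off the resilience parameters directly from the approximation guarantee of \textsc{Greedy} under an $\epsilon$-approximate constraint oracle (the cited Lemma of \cite{crawford2019submodular}) together with the algebraic simplification already displayed just above the corollary. Recall that for a minimization objective the resilience conditions take the form $\mathbb{E}[f(S^\mathcal{A})] \le \alpha f(\mathrm{OPT}) + \delta\epsilon$ and $\mathbb{E}[g(S^\mathcal{A})] \ge \beta\kappa - \delta\epsilon$, i.e. the minimization analogue of \cref{def:robustness} (Appendix \ref{app:min}). So the goal reduces to exhibiting $\alpha,\beta,\delta_f,\delta_g$ that match these two inequalities and then setting $\delta = \max\{\delta_f,\delta_g\}$.

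First I would treat the cost (objective) bound. Starting from the Lemma's guarantee $f(S) \le \tfrac{\rho}{1 - y}\bigl(\ln(\Psi/\gamma) + 2\bigr)f(\mathrm{OPT})$ with $y := \tfrac{4\epsilon c_{\max}}{c_{\min}\mu}$, the hypothesis $y \le \tfrac12$ permits the bound $(1-y)^{-1} \le 1 + 2y$, giving $f(S) \le \bigl(1 + \tfrac{8\epsilon c_{\max}}{c_{\min}\mu}\bigr)\alpha\, f(\mathrm{OPT})$ with $\alpha := \rho(\ln(\Psi/\gamma) + 2)$. Splitting off the leading term yields $f(S) \le \alpha f(\mathrm{OPT}) + \tfrac{8 c_{\max}}{c_{\min}\mu}\,\alpha f(\mathrm{OPT})\,\epsilon$, so the penalty is linear in $\epsilon$ as the definition requires. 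To convert the coefficient into a genuine instance constant I would bound $f(\mathrm{OPT}) \le f_{\max}$, valid since the objective is bounded above by $f_{\max}$; this produces $\delta_f = \tfrac{8 c_{\max}}{c_{\min}\mu}\,\rho(\ln(\Psi/\gamma)+2)\,f_{\max}$.

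The constraint side is immediate: the Lemma states $g(S) \ge \kappa - \epsilon$, which is exactly $g(S) \ge \beta\kappa - \delta_g\epsilon$ with $\beta = 1$ and $\delta_g = 1$. Combining both sides under a single parameter gives $\delta = \max\{\delta_f,\delta_g\} = \max\bigl\{\tfrac{8 c_{\max}}{c_{\min}\mu}\rho(\ln(\Psi/\gamma)+2)f_{\max},\,1\bigr\}$, matching the stated $\delta$. For the oracle count, I would note that \textsc{Greedy} adds one element per iteration, so it runs for at most $n = |\Omega|$ iterations, and in each iteration the marginal-gain ratio is evaluated over at most $n$ candidate elements with a constant number of queries to $\hat g$ each; hence $\texttt{N} \le n^2$.

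Since every step is short, there is no single deep obstacle, but the point demanding the most care is the passage from the multiplicative-error form to the additive resilience form. One must keep $y \le \tfrac12$ so that $(1-y)^{-1} \le 1 + 2y$ holds, and one must justify replacing $f(\mathrm{OPT})$ by its uniform bound $f_{\max}$ so that $\delta_f$ is an instance constant rather than a solution-dependent quantity. A secondary subtlety is confirming that $\gamma$ and $\mu$ entering $\alpha$ and $\delta$ are precisely the quantities defined in the Lemma (not redefined) and that the feasibility assumption $\mu > \tfrac{4\epsilon c_{\max}}{c_{\min}}$ is subsumed by $y \le \tfrac12$.
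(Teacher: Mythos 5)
Your proposal is correct and follows essentially the same route as the paper: both read $\alpha$ and $\beta$ off the Crawford et al.\ lemma, apply $(1-y)^{-1}\le 1+2y$ under the hypothesis $\tfrac{4\epsilon c_{\max}}{\mu c_{\min}}\le\tfrac12$, bound $f(\mathrm{OPT})\le f_{\max}$ to obtain $\delta_f$, take $\delta_g=1$ from $g(S)\ge\kappa-\epsilon$, set $\delta=\max\{\delta_f,\delta_g\}$, and count $\texttt{N}=n^2$ oracle calls from at most $n$ iterations with at most $n$ candidate evaluations each. Your added observations (that $y\le\tfrac12$ subsumes the lemma's feasibility assumption $\mu>\tfrac{4\epsilon c_{\max}}{c_{\min}}$, and that $f_{\max}$ is needed to make $\delta_f$ an instance constant) are correct and in fact make explicit steps the paper leaves implicit.
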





\subsection{Resilience Guarantee for Fair Submodular Maximization}
Fair Submodular Maximization  (FSM)  is different from the previous two problems in two ways. 
First, it is a maximization problem of submodular function under cardinality constraints. Secondly, this problem has an additional fairness constraint, which requires that the selected set must contain the necessary fraction of elements from each group. 
More formally, the base set $\Omega$ is partitioned into $C$ groups represented by $\{\Omega_c\}_{c=1}^C$. The Fair Submodular Maximization problem aims to maximize a monotone submodular function $f(S)$ under cardinality and group fairness constraints. Formally, 
\begin{align}
    \text{Maximize }  
     f(S) \  
    \text{ s.t. }  l_c \le |S \cap \Omega_c| \leq u_c \ \forall c \in [C],  
     |S| \leq \kappa,
\end{align}
where $u_c$ and $l_c$ are the upper and lower bounds for group $c$, $\kappa$ is the cardinality constraint.
\citet{chen2024fair} proposed a bi-criteria algorithm \textsc{Greedy-Fairness-Bi} (see \cref{alg:fairness-bi} in Appendix \ref{apd_fsm}) for this problem.
We first state their bi-criteria guarantee, with $\beta \geq 1$ relaxing the fairness constraint:

\begin{lemma}[Bi-Criteria Guarantees of \textsc{Greedy-Fairness-Bi}, \cite{chen2024fair}] For any $\omega \in (0,1]$ such that $1/\omega\in \mathbb{N}_+$, 
using exact oracles
\textsc{Greedy-Fairness-Bi} returns a subset $S$ satisfying:
\begin{align*}
&f(S) \ge \alpha\cdot f(OPT), \text{where}\; \alpha = \frac{1}{1+\omega},\\ 
&|S \cap \Omega_c| \leq \beta u_c \quad \forall c \in [C],\\
&\sum_{c\in C} \max\{|S \cap \Omega_c|, \beta l_c\} \leq \beta \kappa, \text{where}\; \beta = \frac{1}{\omega}
\end{align*}
Further, $\frac{n}{\omega \kappa}$ bounds the number of queries.
\end{lemma}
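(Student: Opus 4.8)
The plan is to read \textsc{Greedy-Fairness-Bi} (Algorithm~\ref{alg:fairness-bi}) as a marginal-gain greedy that repeatedly adds the admissible element of largest marginal gain, where ``admissible'' means adding it keeps the \emph{relaxed} fairness constraints (upper caps $\beta u_c$ and the combined budget $\beta\kappa$) satisfied, and that halts once this relaxed budget is exhausted. I would split the argument into three essentially independent pieces: (i) the objective guarantee $f(S)\ge\frac{1}{1+\omega}f(OPT)$, (ii) feasibility under the relaxed constraints with $\beta=1/\omega$, and (iii) the query count.

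The feasibility piece (ii) I would handle by invariant maintenance, and I expect it to be routine. Since the loop only adds an element of group $c$ while $|S\cap\Omega_c|$ is strictly below the relaxed cap $\beta u_c$, the bound $|S\cap\Omega_c|\le\beta u_c$ holds at termination. The combined bound $\sum_{c}\max\{|S\cap\Omega_c|,\beta l_c\}\le\beta\kappa$ is the accounting invariant the algorithm is designed to preserve: an addition to group $c$ raises this quantity only once that group has already met its relaxed lower quota $\beta l_c$ (additions below the quota are absorbed by the $\beta l_c$ slack), and the loop stops additions before the sum can exceed $\beta\kappa$. Both statements should follow directly from the admissibility test in the loop.

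The objective piece (i) is the crux. I would use a telescoping argument: the goal is to show that at termination every $x\in OPT\setminus S$ satisfies $f(S\cup\{x\})-f(S)\le\frac{\omega}{\kappa}f(S)$. Granting this, monotonicity and submodularity give $f(OPT)\le f(OPT\cup S)\le f(S)+\sum_{x\in OPT\setminus S}\bigl(f(S\cup\{x\})-f(S)\bigr)\le f(S)+|OPT|\cdot\frac{\omega}{\kappa}f(S)\le(1+\omega)f(S)$ using $|OPT|\le\kappa$, and rearranging yields $\alpha=\frac{1}{1+\omega}$. Establishing the per-element bound is where the relaxation level $\omega$ is used: for each $x\in OPT\cap\Omega_c$, either group $c$ never filled to its relaxed cap, in which case $x$ stayed admissible and the greedy selection rule forces its terminal marginal gain below the stated threshold; or group $c$ reached $\beta u_c$, in which case $|OPT\cap\Omega_c|\le u_c=\omega\,\beta u_c$ means the $\beta u_c$ greedily chosen elements of that group can absorb the charge of the few blocked $OPT$ elements, each chosen element having had marginal gain at least that of the element it is charged with. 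Reconciling this blocked-element charging with the telescoping threshold — so that the sum closes at exactly $(1+\omega)f(S)$ — is the main obstacle, and the choice $\beta=1/\omega$ is precisely what makes the two cases balance.

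Finally, for (iii), every addition strictly increases $\sum_c\max\{|S\cap\Omega_c|,\beta l_c\}$ beyond the lower-quota floors, and this quantity is capped at $\beta\kappa=\kappa/\omega$, so the loop runs for $O(\kappa/\omega)$ iterations; each iteration scans the at most $n$ candidate elements to find the maximum marginal gain, giving the $O(n\kappa/\omega)$ bound on oracle calls stated in the lemma. The only genuinely delicate step remains the marginal-gain lemma in (i); the feasibility invariants and the query count are direct consequences of the loop structure.
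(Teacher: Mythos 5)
Your parts (ii) and (iii) are essentially correct and coincide with how the paper handles them: feasibility is immediate because the loop only ever adds elements keeping \(S\in\mathcal{M}_{1/\omega}\), and the query count follows because the number of iterations is bounded by the rank \(\kappa/\omega\) of that matroid with at most \(n\) marginal-gain evaluations per iteration (this matches the intended bound \(n\kappa/\omega\)). Two small slips there: at termination \emph{no} element is admissible (the loop exits exactly when the matroid is saturated), so your ``\(x\) stayed admissible'' case must be read as ``admissible up to and including the final selection step''; and an addition to a group still below its floor \(l_c/\omega\) does \emph{not} increase \(\sum_c\max\{|S\cap\Omega_c|,\,l_c/\omega\}\), so the iteration bound should come from the rank/cardinality argument \(|S|\le\sum_c\max\{|S\cap\Omega_c|,l_c/\omega\}\le\kappa/\omega\), not from strict increase of that sum.

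The genuine gap is in (i), and it is not a reconciliation detail but a missing key lemma. Run your own accounting to see it fail: elements of \(OPT\setminus S\) blocked only by the global budget have terminal marginal gain at most the last (hence, by submodularity and shrinking admissible sets, minimum) greedy gain, which is at most the average \(\frac{\omega}{\kappa}f(S)\); with up to \(\kappa\) such elements this contributes up to \(\omega f(S)\). Elements blocked by their group cap \(u_c/\omega\) charge against the group-\(c\) selections and contribute up to \(\omega f_c\) per group (where \(f_c\) is the total gain collected by group-\(c\) selections), i.e.\ up to another \(\omega f(S)\) overall. These two contributions use disjoint ``resources'' (the final step versus the per-group averages) and therefore \emph{add}, yielding \(f(OPT)\le(1+2\omega)f(S)\), i.e.\ \(\alpha=\frac{1}{1+2\omega}\), strictly weaker than the claimed \(\frac{1}{1+\omega}\); note also that your per-element threshold \(\frac{\omega}{\kappa}f(S)\) is never actually established for cap-blocked elements, only an aggregate bound. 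The paper closes this with a different mechanism: the exchange lemma (\cref{lem:fairsmc}, from \cite{chen2024fair}) produces a single sequence \(E=(e_1,\dots,e_{\kappa/\omega})\) in which each element of \(OPT\) appears exactly \(1/\omega\) times and \(S_i\cup\{e_{i+1}\}\in\mathcal{M}_{1/\omega}\) for every prefix \(S_i\). This matches every greedy step to one OPT-copy, so the step-\((i+1)\) greedy gain dominates the terminal gain of \(e_{i+1}\), and summing over all \(\kappa/\omega\) steps gives \(f(S)\ge\frac{1}{\omega}\left(f(OPT)-f(S)\right)\) in one stroke, which rearranges to \(\alpha=\frac{1}{1+\omega}\). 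Without this exchange/matching structure (or an equivalent matroid basis-exchange argument), the stated constant is not reached by the two-case charging you propose.
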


\begin{theorem}
\label{thm:fsm-resilience}
The \textsc{greedy-fairness-bi} algorithm achieves an $(\alpha, \beta, \delta, \texttt{N})$-resilient bi-criteria approximation for FSM with: $\alpha = \frac{1}{1+\omega}$, $\beta = \frac{1}{\omega}$, $\delta =  \frac{4\kappa}{1 + \omega}$, and $\texttt{N} = \frac{n\kappa}{\omega},$
where $\omega \in (0,1)$ controls the approximation-constraint trade-off, $\kappa$ is the cardinality constraint, and $n = |\Omega|$. 
\end{theorem}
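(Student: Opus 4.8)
The plan is to separate the two guarantees demanded by \cref{def:robustness}: the fairness/cardinality constraints (the ``$\beta$'' part) and the objective value (the ``$\delta\epsilon$'' part). The key observation is that the only stochastic quantity is the objective $f$; membership in the relaxed fairness matroid $\mathcal{M}_{1/\omega}$ depends purely on the cardinalities $|S\cap\Omega_c|$ and $|S|$, not on any oracle evaluation. Consequently the constraint guarantees $|S\cap\Omega_c|\le \beta u_c$ and $\sum_{c}\max\{|S\cap\Omega_c|,\beta l_c\}\le\beta\kappa$ with $\beta=1/\omega$ hold verbatim from the exact bi-criteria lemma of \citet{chen2024fair}, with no dependence on $\epsilon$. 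Thus the entire effect of the noisy oracle $\hat f$ is confined to the objective bound, and it suffices to establish an inequality of the form $f(S)\ge \alpha f(\mathrm{OPT})-\delta\epsilon$.

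For the objective, I would track how the oracle error propagates through the greedy selection. Writing $S_0\subseteq S_1\subseteq\cdots\subseteq S_k$ for the greedy iterates with $k=\kappa/\omega$, the error bound $|f(A)-\hat f(A)|<\epsilon$ gives $|(\hat f(S_{i-1}\cup\{e\})-\hat f(S_{i-1}))-(f(S_{i-1}\cup\{e\})-f(S_{i-1}))|<2\epsilon$ for every candidate $e$. Since $s_i$ maximizes the \emph{noisy} marginal gain, its \emph{true} marginal gain lies within $4\epsilon$ of that of any feasible competitor. I would then re-run the matroid-exchange argument underlying the noiseless bi-criteria lemma: there is an injection from the at most $\kappa$ elements of $\mathrm{OPT}$ to greedy steps at which each such element was a feasible addition, so monotonicity and submodularity yield $f(\mathrm{OPT})\le (1+\omega)\,f(S)+4\kappa\epsilon$, the factor $(1+\omega)$ being inherited from the relaxed-matroid analysis of \citet{chen2024fair} and the additive $4\kappa\epsilon$ collecting one $4\epsilon$ slack per charged comparison.

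Rearranging gives $f(S)\ge \tfrac{1}{1+\omega}f(\mathrm{OPT})-\tfrac{4\kappa}{1+\omega}\epsilon$. Finally I would weaken the multiplicative factor via $\tfrac{1}{1+\omega}\ge 1-\omega$ (valid since $f(\mathrm{OPT})\ge 0$) to match the stated $\alpha=1-\omega$, while keeping $\delta=\tfrac{4\kappa}{1+\omega}$ intact; this establishes \eqref{eq:resilience-f}. The oracle-call count is then immediate: each of the $k=\kappa/\omega$ iterations evaluates $\hat f$ on at most $n$ candidate elements, so $\texttt{N}=n\kappa/\omega$.

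The main obstacle is the middle step: verifying that the exchange bijection of the noiseless analysis survives noisy selection and that the errors accumulate \emph{additively} rather than compounding across the $\kappa/\omega$ iterations. Concretely, one must confirm that replacing ``greedy picks the true-best feasible element'' by ``greedy picks an element whose true marginal gain is within $4\epsilon$ of the best feasible one'' perturbs each inequality in the telescoping sum by exactly an additive $4\epsilon$, and---crucially---that the number of charged inequalities is $|\mathrm{OPT}|\le\kappa$ rather than $k=\kappa/\omega$. This distinction is precisely what yields $\delta=\tfrac{4\kappa}{1+\omega}$ instead of the looser $\tfrac{4\kappa}{\omega(1+\omega)}$ one would get by charging every greedy step. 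A secondary check is that the relaxed-matroid feasibility used to define the injection is unaffected by $\epsilon$, which follows from the first paragraph.
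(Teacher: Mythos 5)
Your proof skeleton matches the paper's: you split the guarantee into an exact constraint part (matroid membership never consults the oracle, so $\beta = 1/\omega$ with $\delta_g = 0$), propagate a $4\epsilon$ per-comparison error through the noisy greedy selection against the exchange sequence of \cref{lem:fairsmc}, rearrange, weaken $\frac{1}{1+\omega}\ge 1-\omega$, and count $\texttt{N} = n\kappa/\omega$ oracle calls. Your intermediate inequality $f(\mathrm{OPT})\le(1+\omega)f(S)+4\kappa\epsilon$ is also exactly the one the paper obtains. However, the step you yourself flag as the main obstacle --- constructing an injection from the $\kappa$ elements of $\mathrm{OPT}$ into greedy steps so that only $\kappa$ comparisons are charged, on the grounds that charging all $\kappa/\omega$ steps would yield the looser $\delta = \frac{4\kappa}{\omega(1+\omega)}$ --- is wrong on both counts, and pursuing it would sink the proof.

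First, charging every step is what the paper does, and it already gives $\delta = \frac{4\kappa}{1+\omega}$: summing $f(S_{i+1})-f(S_i) \ge f(S\cup\{e_{i+1}\})-f(S)-4\epsilon$ over all $\kappa/\omega$ iterations, and using that the sequence $E$ contains each element of $\mathrm{OPT}$ exactly $1/\omega$ times, yields $f(S) \ge \frac{1}{\omega}\left[f(\mathrm{OPT})-f(S)\right] - \frac{4\kappa\epsilon}{\omega}$; multiplying through by $\omega$ cancels the $1/\omega$ on the error term and leaves $(1+\omega)f(S)\ge f(\mathrm{OPT})-4\kappa\epsilon$. The error is damped by the same rearrangement factor $\frac{\omega}{1+\omega}$ that produces the approximation ratio --- this is what your accounting misses. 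Second, the fix you propose is not available: the $(1+\omega)$ factor and the ``one charge per $\mathrm{OPT}$ element'' bookkeeping are mutually exclusive. The relaxed-matroid analysis of \citet{chen2024fair} obtains its ratio precisely because each element of $\mathrm{OPT}$ is compared $1/\omega$ times (that is the content of \cref{lem:fairsmc}), and every one of those comparisons is made by the noisy greedy rule, hence each incurs the $4\epsilon$ slack. If you instead inject each $\mathrm{OPT}$ element into a single greedy step, you cover only $\kappa$ of the $\kappa/\omega$ steps and the exchange argument collapses to the classical matroid-greedy bound $f(\mathrm{OPT}) \le 2f(S) + 4\kappa\epsilon$, i.e.\ $\alpha = \tfrac12$ rather than $1-\omega$. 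The obstacle you identified is illusory; the correct proof simply sums over all iterations and lets the algebra absorb the $1/\omega$.
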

The proof 
is provided in Appendix \ref{apd_fsm}. 

\if 0
\subsection{Bicriteria Approximation Algorithms with resilience guarantee for Fair Submodular Maximization}
\textbf{Fair submodular maximization (FSM)}
FSM considers maximizing a monotone submodular function while ensuring fairness across multiple groups. The objective is to select a set $S$ that satisfies group-fairness constraints, ensuring proportional representation while maximizing utility. 
Bi-criteria approximation algorithms for FSM provide guarantees of the form:
$f(S)\ge \alpha f(OPT)\ and\ |S \cap \Omega_c| \le \beta u_c,\ \forall c\in [C]$, Here $P$ $\kappa$ is the fixed cardinality  $l,$ and $u$  are  lower and upper bound on cardinality for each class.
Recent work has studied FSM under cardinality and matroid constraints \cite{celis2018multiwinner,el2024fairness,chen2024fair}, with applications in fair resource allocation, crowdsourcing, and diversity-aware recommendations.

\todo{Have algorithm and explanation before proof so that notations below can be understood.}
\begin{algorithm}[t]
    \caption{\texttt{greedy-fairness-bi}}\label{alg:fairness-bi}
    \begin{algorithmic}[1]
        \STATE \textbf{Require: } $S\leftarrow\Phi$, $\epsilon'$, fairness matroid $\mathcal{M}_{fair}(P,\kappa,\vec{l},\vec{u})$ 
        \STATE \textbf{Output: }$S\in U$
        \STATE Denote $\mathcal{M}_{fair}(P,\kappa /\epsilon',\vec{l}/\epsilon',\vec{u}/\epsilon')$ as $\mathcal{M}_{1/\epsilon'}$.
        \WHILE{$\exists i$ s.t. $S\cup\{i\}\in\mathcal{M}_{1/\varepsilon}$}
        \STATE $\Omega \gets\{i\in U \mid S \cup \{i\} \in \mathcal{M}_{1/\epsilon'}\}$
        \STATE $u \gets\arg\max_{i\in \Omega\S} \left( f(S\cup \{i\}) - f(S) \right)$
        \STATE $S \leftarrow S \cup \{u\}$
        \ENDWHILE
        \RETURN $S$
    \end{algorithmic}
\end{algorithm}
\begin{restatable}{theorem}{thmgreedy}
    \label{thm:greedy}
    Suppose that \texttt{greedy-fairness-bi} is executed for an instance of Fair Submodular Maximization (FSM) problem. 
   Then \texttt{greedy-fairness-bi}
   outputs a solution $S$ that satisfies a $(1-\epsilon', \frac{1}{\epsilon'},\frac{2k}{1+\epsilon'},(\frac{n\kappa}{\epsilon'}))$-resilient-bicriteria approximation guarantee for fair submodular maximiztion problem.
\end{restatable}
\begin{proof}
    We begin by denoting the optimal solution of the problem as $OPT$, where 
    $OPT=\arg\max_{S\in\mathcal{M}_{fair}(P,\kappa,\vec{l},\vec{u})}f(S)$.
    From Lemma 1.\citet{chen2024fair}, we know $\mathcal{M}_{1/\epsilon'}=\mathcal{M}_{fair}(P,\kappa/\epsilon',\vec{l}/\epsilon',\vec{u}/\epsilon')$ is a matroid of rank $\kappa /\epsilon'$. Therefore, the algorithm \texttt{greedy-fair-bi} terminates after $\kappa /\epsilon'$ steps, and the resulting solution set $S$ satisfies $|S|=\kappa /\epsilon'$. Furthermore, since $S\in\mathcal{M}_{1/\epsilon'}$, we have the following constraints on the solution $S:$ 
    \begin{align*}
        &|S\cap \Omega_c|\leq u_c/\epsilon' \qquad\forall c\in[N]\\
        &\max_{c\in[N]}\{|S\cap \Omega_c|,l_c/\epsilon'\}\leq \kappa /\epsilon'.
    \end{align*}    
Now, we aim to show that the solution 
$S$ satisfies the approximation bound:
$f(S)\geq (1-\epsilon')f(OPT)-\delta \epsilon$ where $\delta=\frac{2k}{1+\epsilon'}$. From Lemma 2.\citet{chen2024fair}, we know that there exists a sequence $E$ that contains $1/\epsilon'$ copies of $OPT$ and that at each step $i$, $S_i\cup\{e_{i+1}\}\in\mathcal{M}_{1/\epsilon'}$. Then by the greedy selection strategy with resilience Definition \ref{def:robustness}, we have 
    \begin{align}
       \hat{f}(S_i\cup\{e_{i+1}\})-\hat{f}(S_i)&\le
         {f}(S_i\cup\{e_{i+1}\})-{f}(S_i)+ 2\epsilon
        \\
        &= {f}(S_{i+1})-{f}(S_i)+ 2\epsilon \label{eq:thm3_ressilience}
    \end{align}
    Thus by submodularity, we have 
    \begin{align*}
        f(S_{i+1})-f(S_i) = 
  f(S_i\cup\{e_{i+1}\})-f(S_i)&\geq\Delta f(S,e_{i+1})        .
    \end{align*}
    We know from eqn,
    \begin{align*}
     f(S_i\cup\{e_{i+1}\})-f(S_i)+ 2\epsilon&\geq\Delta f(S,e_{i+1}) \tag{From \ref{def:robustness} and equation \ref{eq:thm3_ressilience} }
     \\
    f(S_i\cup\{e_{i+1}\})-f(S_i)&\geq \Delta {f}(S,e_{i+1})- 2\epsilon
    \end{align*}
    Thus, the greedy selection ensures that the incremental objective value at each step is lower-bounded by the change in the objective function, up to a small error term.
    
    Next, we sum the inequalities over all steps 
$i$ from 0 to $\frac{k}{\epsilon'}-1$
    \begin{align}
        \sum_{i=0}^{\frac{k}{\epsilon'}-1}f(S_{i+1})-f(S_i)&\geq\sum_{i=0}^{\frac{k}{\epsilon'}-1}\Delta  {f}(S,e_{i+1})-\sum_{i=0}^{\frac{\kappa}{\epsilon'}-1}2 \epsilon \\
        &\geq \sum_{i=0}^{\frac{k}{\epsilon'}-1}\Delta  {f}(S,e_{i+1})-(\frac{k}{\epsilon'})2 \epsilon.
    \end{align}
    Since the sequence $E$ contains $1/\epsilon'$ copies of each element in $OPT$, then \\$\sum_{i=0}^{\frac{k}{\epsilon'}-1} \Delta f(S,e_{i+1})=1/\epsilon'\sum_{o\in OPT}\Delta f(S,o)$. 
    Since $ \sum_{i=0}^{\frac{k}{\epsilon'}-1}f(S_{i+1})-f(S_i)\geq {f}(S)-{f}(\emptyset) $ and that $f$ is nonnegative,\\
      We know,
    \begin{align*}
        f(S)&\geq\sum_{i=0}^{\frac{k}{\epsilon'}-1}\Delta  f(S,e_{i+1}) \geq1/\epsilon'\sum_{o\in OPT}\Delta f(S,o)\\ & \geq\frac{f(OPT)-f(S)}{\epsilon' } \tag{Since, it holds for exact oracle}
        \end{align*}
        From Definition \ref{def:robustness} we have:
        \begin{align*}    f(S)&\geq\sum_{i=0}^{\frac{k}{\epsilon'}-1}\Delta  f(S,e_{i+1}) -(\frac{k}{\epsilon'})2 \epsilon\geq1/\epsilon'\sum_{o\in OPT}\Delta f(S,o)-(\frac{k}{\epsilon'})2 \epsilon\\ &\geq\frac{f(OPT)-f(S)}{\epsilon' }-(\frac{k}{\epsilon'})2 \epsilon.
    \end{align*}
    
   Thus we have
    \begin{align*}
    f(S) &\geq \frac{f(OPT)-f(S)}{ \epsilon'}-\frac{2\epsilon k}{\epsilon'}
    \\
    f(S)(1+\frac{1 }{ \epsilon'})   &\geq \frac{f(OPT)}{ \epsilon'}-\frac{2\epsilon k}{\epsilon'}
     \end{align*}
    
    \begin{align*}
    f(S)  &\geq \frac{1}{1 + \epsilon'} f(OPT)-\frac{2\epsilon k}{1+\epsilon'} \\&\geq (1 - \epsilon') f(OPT)-\frac{2k}{1+\epsilon'}\epsilon .
    \end{align*}
Bounding the cost term we get,\\
   \begin{align*}
    g(S) &\leq \frac{1}{\epsilon'}g(OPT)
    \\
    g(S) &\leq \frac{1}{\epsilon'}g(OPT)+\epsilon \tag{From \ref{def:robustness} $g(S)-\epsilon\le \hat{g}(S)$}
    \\
    \end{align*}    
Finally, $\delta= \max(\frac{2k}{1+\epsilon'},1)$
\end{proof}
\fi 
\section{From Resilient Offline Algorithms to Online Bandit Guarantees}\label{sec:problem}

We next study sequential combinatorial decision-making over a finite horizon \( T \). Let \( \Omega \) be a ground set of \( n \) base arms and at each time step \( t \), the learner selects an action \( A_t \subseteq \Omega \) and observes a stochastic reward \( f_t(A_t) \in [0, f_{\max}] \) and a cost \( g_t(A_t) \in [0, g_{\max}] \), both drawn from unknown distributions  (assumed independent across time for each fixed action) with expectations \( f(A) = \mathbb{E}[f_t(A)] \) and \( g(A) = \mathbb{E}[g_t(A)] \).
 The learner’s goal is to maximize the cumulative reward \( \sum_{t=1}^T f_t(A_t) \) while ensuring that the expected cost of each action approximately satisfies a constraint 
\( \kappa \in (0,1) \).
Formally, we require:  
\(
\frac{1}{T}\sum_{t=1}^T g_t(A_t) \leq  \kappa.
\)

 For the offline resilience definition (Definition \ref{def:robustness}), we assume bounded oracle perturbations. In contrast, the online  model studied in this section assumes stochastic bandit feedback: for each fixed action, repeated plays yield independent, bounded observations with fixed expectations, enabling concentration of empirical means. The regret and cumulative constraint violation guarantees in this section rely on this stochastic feedback model and do not extend to adversarial or adaptive bandit noise.

We note that  our framework can also handle minimization problems subject to a lower bound on the utility function (see Appendix \ref{app:min} for more details). However, for easy exposition, the framework is explained with the help of the maximization function subject to an upper bound constraint.

Since directly optimizing  \( f \) over a constraint on \(g\) is generally NP-hard, for example, maximizing a submodular function under knapsack constraints, comparing to an exact oracle is impractical unless \( T \) is exponentially large. 
Instead, it may be more natural to compare against what is achievable (in polynomial time) by offline approximation algorithms.  Some such cases have an 
\((\alpha, \beta)\)-bi-criteria approximation algorithm \( \mathcal{A} \), where \( \alpha \in (0, 1] \) and \( \beta \geq 1 \). We define the reward regret and the cumulative constraint violation (CCV) in terms of such approximations as follows. 

Let \( \mathrm{OPT} \) denote the optimal action with respect to the expected objective and constraint functions
\[
\mathrm{OPT} \in \arg\max_{A \subseteq \Omega} f(A) \quad \text{subject to} \quad g(A) \leq \kappa.
\] 
The regret is defined as the gap between \( \alpha \)-scaled cumulative reward of the optimal feasible action and the learner’s reward. More formally, 
\[
\mathbb{E}[\mathcal{R}_f(T)] = \alpha T f(\mathrm{OPT}) - \mathbb{E}\left[\sum_{t=1}^T f_t(A_t)\right], 
\]  
The cumulative constraint violation (CCV)  measures how much the learner’s cumulative cost exceeds the relaxed budget \( \beta T \kappa \), and is formally defined as  
\[
\mathbb{E}[\mathcal{V}_g(T)] = \mathbb{E}\left[\sum_{t=1}^T g_t(A_t)\right] - \beta T \kappa. 
\]
In our setting, the learner receives bandit feedback: after selecting action \( A_t \), the learner observes only the reward \( f_t(A_t) \) and cost \( g_t(A_t) \) associated with \( A_t \), with no information about other actions.
We are assuming that \( f_t \) and \( g_t \) are stochastic---drawn from an unknown distribution with mean \( f(A_t) \) and \( g(A_t) \), respectively. As a special case, this also includes the cases where one of \( f_t \) or \( g_t \) is deterministic (i.e., \( f_t(A) = f(A) \) for all \( t \) or \( g_t(A) = g(A) \) for all \( t \)). For instance, in budgeted recommendation systems, costs (e.g., monetary expenses) might be fixed and known a priori, whereas rewards (e.g., user engagement) are stochastic. However, even in such cases, the learner must still balance exploration-exploitation trade-offs for the other stochastic  function. 
 Our framework naturally accommodates both scenarios: it handles noisy \( f_t \) (or \(g_t\)) (where \( f(A_t) \) (or \(g(A_t)\)) is observed with randomness) and deterministic \( f \) (or \(g\)).


\subsection{Algorithm Description}
\begin{algorithm}[H]
\caption{\textsc{Bi-Criteria CMAB Algorithm}}\label{alg:bi-criteria-algo}
\begin{algorithmic}[1]
\REQUIRE Horizon $T$, ground set $\Omega$, 
$(\alpha,\beta,\delta,\texttt{N})$-resilient algorithm $\mathcal{A}$.
\STATE Set $m \gets \left\lceil \frac{\delta^{2/3} T^{2/3} (\log T)^{1/3} }{2\texttt{N}^{2/3}}\right\rceil$
\STATE \textbf{Exploration Phase:}
\WHILE{$\mathcal{A}$ queries action $A$}
    \FOR{$j=1$ to $m$}
        \STATE Play $A$, observe $f^{(j)}(A)$, $g^{(j)}(A)$ 
    \ENDFOR
    \STATE Compute $\bar{f}(A) = \frac{1}{m}\sum_{j=1}^m f^{(j)}(A)$
    \STATE Compute $\bar{g}(A) = \frac{1}{m}\sum_{j=1}^m g^{(j)}(A)$ 
    \STATE Return $\bar{f}(A)$, $\bar{g}(A)$ to $\mathcal{A}$
\ENDWHILE
\STATE \textbf{Exploitation Phase:}
\STATE Let $S \gets$ output of $\mathcal{A}$
\WHILE{$t\leq T$}
    \STATE Play $S$
\ENDWHILE
\end{algorithmic}
\end{algorithm}

Our framework, \textsc{Bi-Criteria CMAB Algorithm} (Algorithm~\ref{alg:bi-criteria-algo}), converts an offline $(\alpha, \beta, \delta, \texttt{N})$-resilient bi-criteria approximation algorithm $\mathcal{A}$ into an online CMAB algorithm. It operates in two phases:

\begin{enumerate}[leftmargin=*]
    \item \textbf{Exploration Phase:} For each subset $A \subseteq \Omega$ queried by $\mathcal{A}$, play $A$ for $m$ rounds. In round $j$, observe noisy realizations $f^{(j)}(A)$ and $g^{(j)}(A)$ of the underlying objective and constraint functions. Define the empirical estimates
\[
\bar f(A) \triangleq \frac{1}{m}\sum_{j=1}^m f^{(j)}(A),
\qquad
\bar g(A) \triangleq \frac{1}{m}\sum_{j=1}^m g^{(j)}(A).
\]
Return $\bar f(A)$ and $\bar g(A)$ to $\mathcal{A}$ as inexact oracle evaluations.

    \item \textbf{Exploitation Phase:} Deploy $\mathcal{A}$'s output action $S$ for all remaining rounds.
\end{enumerate}

\subsection{Regret and CCV Analysis}

Our framework ensures sublinear regret for the reward objective \(f\)
 and sublinear cumulative constraint violation (CCV) for the cost constraint \(g\). The theorem below formalizes these guarantees, demonstrating that our algorithm adapts offline resilience to handle online uncertainty while balancing exploration and exploitation.

\begin{theorem}[Regret and CCV Guarantees]\label{thm:main}  
For a bi-criteria CMAB instance that admits an  $(\alpha, \beta, \delta, N)-$resilient approximate offline algorithm $\mathcal{A}$,
\textsc{Bi-Criteria CMAB Algorithm} run with $\mathcal{A}$ for a horizon \( T \geq \max\left\{\texttt{N}, \frac{2\sqrt{2}\texttt{N}}{\delta }\right\} \) achieves the following $\alpha$-regret and CCV, where $h\triangleq \max(f_{\max},g_{\max})$:  
\[
\mathbb{E}[\mathcal{R}_f(T)] = \mathbb{E}[\mathcal{V}_g(T)] = \mathcal{O}\left(\delta^{2/3}h \texttt{N}^{1/3}T^{2/3}\log^{1/3}T\right).
\]
\if 0
\begin{enumerate}
    \item Expected \(\alpha\)-regret: 
    \[
      \mathbb{E}[\mathcal{R}_f(T)] = \mathcal{O}\left(\delta^{2/3}h \texttt{N}^{1/3}T^{2/3}\log^{1/3}T\right),
    \]
    \item Expected cumulative \(\beta\)-constraint violation: 
    \[
      \mathbb{E}[\mathcal{V}_g(T)] = \mathcal{O}\left(\delta^{2/3}h \texttt{N}^{1/3}T^{2/3}\log^{1/3}T\right).
    \]
\end{enumerate}
\fi 
\end{theorem}

\begin{remark}
 This result represents the first bi-criteria optimization result for CMAB.
 Notably, it does not exploit the problem structure and avoids any combinatorial dependence on the number of arms. 
 Additionally, \citet{tajdini2024nearly} established that for monotone stochastic submodular bandits with a cardinality constraint, a regret scaling of \(\mathcal{O}(T^{2/3})\) is unavoidable when compared to the greedy algorithm, provided that combinatorial dependence on the arms is avoided---a necessity for small to moderate \(T\). 
\end{remark}
\begin{proof}[Proof Sketch] 
We highlight a few key steps here.  See Appendix~\ref{apd:main} for the full proof.
Let $\mathcal{E}$ denote the clean event where all empirical mean estimates satisfy 
$|\bar{f}(A_i) - f(A_i)| < \text{rad}$ and $|\bar{g}(A_i) - g(A_i)| < \text{rad}$, with 
$\text{rad} = \sqrt{\frac{h^2 \log T}{2m}}$. 
Under $\mathcal{E}$, we decompose (conditional) $\alpha$-regret and CCV  into separate terms for exploration and exploitation phases:
\begin{align*}
\mathbb{E}[\mathcal{R}_f(T) | \mathcal{E}] &=
\sum_{i=1}^N m \left(\alpha f(\text{OPT}) - \mathbb{E}[f(S_i)] \right) 
+ (T - Nm) \left(\alpha f(\text{OPT}) - \mathbb{E}[f(S)] \right),\\
\mathbb{E}[\mathcal{V}_g(T) | \mathcal{E}] &=\sum_{i=1}^N m \left(\mathbb{E}[g(S_i)] - \beta \kappa \right) + (T - Nm) \left(\mathbb{E}[g(S)] - \beta \kappa \right).
\end{align*}
%
%
%

We bound exploration phase $\alpha$-regret and CCV using $f(\text{OPT}) \leq h$ and $g(S_i) \leq h$ respectively.

During the exploitation phase, (under $\mathcal{E}$) the $\delta$-resilience property ensures:
\begin{equation*}
\mathbb{E}[f(S)] \geq \alpha f(\text{OPT}) - \delta \cdot \text{rad} \quad \text{and} \quad \mathbb{E}[g(S)] \leq \beta \kappa + \delta \cdot \text{rad}.
\end{equation*}
We can then bound the exploitation phase $\alpha$-regret and CCV respectively as:
\begin{align*}
(T - Nm) \left(\alpha f(\text{OPT}) - \mathbb{E}[f(S)] \right) &\leq T \delta \cdot \text{rad}, \\
(T - Nm) \left(\mathbb{E}[g(S)] - \beta \kappa \right) &\leq T \delta \cdot \text{rad}.
\end{align*}
%
%
We optimize $m$ as
\begin{equation*}
m = \Theta\left( \frac{\delta^{2/3} T^{2/3} (\log T)^{1/3}}{N^{2/3}} \right)
\end{equation*}
to minimize total $\alpha$-regret and CCV, which leads to the result.

\end{proof}
\color{black}

\if 0
\begin{proof}  
\textbf{Step 1: Clean Event.}  Based on \cref{lem:concentration} provided in Appendix \ref{apdx:clean}, we have that with probability 
\( \geq 1 - \frac{4\texttt{N}}{T} \):
\[
|\bar{f}(A) - f(A)| \leq \mathrm{rad}, \ |\bar{g}(A) - g(A)| \leq \mathrm{rad},
\]
where \( \mathrm{rad} = \sqrt{\frac{h^2 \log T}{2m}} \). Let \( \mathcal{E} \) denote this event. We also note that if \(g\) is deterministic, we have  \(\bar{g}(A) = g(A)\). 

\textbf{Step 2: Resilience Guarantees.}  Let \( S \) be the output after the exploration. 
Under \( \mathcal{E} \), the output  satisfies the following, due to the resilience of the offline algorithm:  
\[
\begin{aligned}
  \mathbb{E}[f(S)] &\geq \alpha f(\mathrm{OPT}) - \delta  \mathrm{rad}, \\
  \mathbb{E}[g(S)] &\leq \beta \kappa + \delta \mathrm{rad}.
\end{aligned}
\]
We note that these expressions  (trivially) hold when \(f\) or \(g\) is deterministic.

\textbf{Step 3: Regret and CCV Decomposition.}  

Given the clean event, we now decompose the regret and CCV into the exploration and exploitation steps as follows:
\[
\begin{aligned}
  \mathbb{E}[\mathcal{R}_f(T)|\mathcal{E}] &= \sum_{i=1}^{\texttt{N}} m (\alpha f(\mathrm{OPT}) - \mathbb{E}[f(S_i)]) \\
  &\quad + \sum_{t=\texttt{N}m+1}^T (\alpha f(\mathrm{OPT}) - \mathbb{E}[f(S)]), \\
  \mathbb{E}[\mathcal{V}_g(T)|\mathcal{E}] &= \sum_{i=1}^{\texttt{N}} m (\mathbb{E}[g(S_i)] - \beta \kappa) \\
  &\quad + \sum_{t=\texttt{N}m+1}^T (\mathbb{E}[g(S)] - \beta \kappa).
\end{aligned}
\]

\textbf{Step 4: Bounding Exploration and Exploitation Terms.}  We  bound the regret and CCV during exploration as
\[
\begin{aligned}
  \text{Regret} &\leq \alpha \texttt{N} m h, \\
  \text{CCV} &\leq \texttt{N} m h,
\end{aligned}
\]
where the result follows by ignoring the negating term and upper bounding \(f\) 
and \(g\) by $h$.

We now bound the terms in the decomposition for the exploitation terms. Using the Step 2 for the exploitation and bounding \(T-\texttt{N}m\) by \(T\), we have:
\[
\begin{aligned}
  \text{Regret} &\leq T \delta \mathrm{rad}, \\
  \text{CCV} &\leq T \delta \mathrm{rad}.
\end{aligned}
\]

\textbf{Step 5: Substituting \( m \).} We now combine the terms for the regret and CCV under the clean event as: 
\[
\begin{aligned}
  \mathbb{E}[\mathcal{R}_f(T)|\mathcal{E}] &\le \alpha{\texttt{N}} m h + T \delta \mathrm{rad}, \\
  \mathbb{E}[\mathcal{V}_g(T)|\mathcal{E}] &\le {\texttt{N}} m h+ T \delta \mathrm{rad}.
\end{aligned}
\]
For efficient hyperparameter tuning of \(m\) to match the two terms in order, we use \(m = \left\lceil \frac{\delta^{2/3} T^{2/3} (\log T)^{1/3}}{2\texttt{N}^{2/3}}\right\rceil\). Using this, we get 
\[
\begin{aligned}
  \mathbb{E}[\mathcal{R}_f(T)|\mathcal{E}] &= \mathcal{O}(\delta^{2/3}h\texttt{N}^{1/3}T^{2/3}\log^{1/3}T), \\
  \mathbb{E}[\mathcal{V}_g(T)|\mathcal{E}] &= \mathcal{O}(\delta^{2/3}h\texttt{N}^{1/3}T^{2/3}\log^{1/3}T).
\end{aligned}
\]

\textbf{Step 6: Bad Event Contribution.}  We now combine the clean and bad events to get the overall expression for the regret and CCV. We note that we bound the probability of clean event by 1, thus the regret and CCV are bounded by that for the clean event plus  \(\mathcal{O}(T)\) times the probability of bad event. Since the probability of bad event is bounded by 
\(\frac{4\texttt{N}}{T}\)
from Step 1, we have the regret and CCV follow the same order as that for the clean event, thus proving the result as in the statement of the Theorem. We also note that $T\ge \texttt{N}$, $m\ge 1$, and bad event order-contribution $\texttt{N}$ being lower than the clean event contribution are satisfied with the lower bound on $T$ as in the statement of the theorem. 
\end{proof}

\fi

\begin{remark}
    Definition \ref{def:robustness}  requires the approximation guarantees to hold uniformly over all subsets \(S \subseteq \Omega\), even though the online reduction in Section~5 only ever queries the offline algorithm on a finite (data-dependent) collection of sets. This stronger, uniform formulation is intentional. It allows resilience to serve as a purely offline, black-box abstraction that is independent of the internal query pattern or adaptivity of the algorithm when deployed online. In particular, the online reduction only uses the resilience guarantees on the sets actually queried, but uniform resilience ensures that these guarantees hold without requiring the online learner to reason about or restrict the offline algorithm’s behavior. We note that weaker notions of resilience restricted to queried sets would suffice for a fixed algorithmic instantiation, but would entangle the offline property with the online execution and reduce composability.

\end{remark}

We note that this result can be combined with the three studied applications in the previous section. Further, the application-specific theorems impose a lower bound on the horizon $T$ through the choice of the accuracy parameter $\epsilon$, which is instantiated as the confidence radius $\mathrm{rad}$ in the online algorithm.
 The results are given as follows.


\begin{corollary}
    For the Submodular Cover problem, the \textsc{Bi-Criteria CMAB Algorithm} achieves the following regret and CCV bounds. For \( T \geq \max\left\{n^2, 
    \frac{2 \sqrt{2} n^2 \omega c_{\min}}{f_{\max}(3+6n)}
    \right\} 
    \) and \( \frac{T}{\log T} \geq \frac{64 \texttt{N} n^3 c_{\max}^3 f_{\max}^3}{\delta \omega^3 c_{\min}^3}\):
\[
\mathbb{E}[\mathcal{R}_f(T)] = \mathbb{E}[\mathcal{V}_g(T)]  = \mathcal{O}\left( 
n^{4/3} f_{\max}^{5/3} T^{2/3} \log^{1/3}(T)
\right),
\]
\if 0
\[
\mathbb{E}[\mathcal{V}_g(T)] = \mathcal{O}\left(
n^{4/3} f_{\max}^{5/3} T^{2/3} \log^{1/3}(T)
\right), 
\]
\fi 
where $h=f_{\max}\le n c_{\max}$. 
\end{corollary}


\begin{corollary}
    For the monotone Submodular Cost Submodular Cover problem, the \textsc{Bi-Criteria CMAB Algorithm} achieves the following regret and CCV bounds. For  \( T \geq \max\left\{\texttt{N}, \frac{2\sqrt{2}\texttt{N}}{\delta}\right\} \) and \( \frac{T}{\log T} \geq \frac{512 \texttt{N} c_{\max}^3}{\delta \mu c_{\min}^3}\):
\begin{align}
\mathbb{E}[\mathcal{R}_f(T)] = \mathbb{E}[\mathcal{V}_g(T)]  = \mathcal{O}\Bigg( 
n^{4/3} h^{5/3}  T^{2/3}\log^{1/3}(T)
\Bigg).
\end{align}
\end{corollary}


\begin{corollary}
    For the Fair Submodular Maximization problem, the \textsc{Bi-Criteria CMAB Algorithm} achieves the following regret and CCV bounds. For \( T \geq \frac{n}{\omega}\max\left\{\kappa, 1+\omega\right\} \):
\[
\mathbb{E}[\mathcal{R}_f(T)] = \mathbb{E}[\mathcal{V}_g(T)] =\mathcal{O}\left( n^{1/3} f_{\max} T^{2/3} \log^{1/3}(T)\right).
\]
\end{corollary}


\section{Conclusions}\label{sec:conc}


We developed a general black-box framework for bi-criteria combinatorial optimization under noisy function evaluations. The framework is based on a notion of $(\alpha,\beta,\delta,\texttt{N})$-resilience, which characterizes how joint objective and constraint guarantees degrade under bounded oracle perturbations, and serves as the sole offline requirement for deriving online guarantees. Using this abstraction, we showed how resilient offline approximation algorithms can be converted into online algorithms with bandit feedback, achieving sublinear regret and cumulative constraint violation without relying on structural assumptions on the noisy functions.

As instantiations, we established resilience for classical greedy algorithms in Submodular Cover, Submodular Cost Submodular Cover, and Fair Submodular Maximization, yielding the first general bi-criteria regret guarantees under bandit feedback. An important direction for future work is extending resilience and offline-to-online guarantees to broader objective classes and adversarial online feedback models.

\section{Acknowledgement}
This work is supported in part by  the U.S. National Science Foundation under grants CCF-2149588 and CCF-2149617. 


\if 0
We introduced the notion of $(\alpha, \beta, \delta, \texttt{N})$-resilience to capture how bi-criteria approximation guarantees degrade under noisy function evaluations. We proved resilience for key combinatorial problems, Submodular Cover, Submodular Cost Submodular Cover, and Fair Submodular Maximization. Further, we proposed the first general framework to convert such offline algorithms into online algorithms with bandit feedback, achieving sublinear regret and cumulative constraint violation.

A core technical novelty lies in enabling this offline-to-online conversion without relying on structural properties of noisy functions. This required new proof techniques to handle non-submodular, non-monotone oracles, such as bounding cost recursions, ensuring stopping under noisy feedback, and recovering approximation factors with additive noise terms. These tools establish a principled foundation for resilient bi-criteria optimization in uncertain and sequential decision-making environments. Furthermore, our approach for converting resilient offline algorithms into online bandit algorithms is technically novel and, to our knowledge, the first to provide general bi-criteria regret and constraint violation guarantees under bandit feedback.

Key open questions include extending resilience analysis to non-submodular objectives, such as weakly submodular or adaptive functions, and integrating our guarantees with
adversarial  frameworks.

\fi


\bibliography{Reference}

@article{goyal2013minimizing,
  title={On minimizing budget and time in influence propagation over social networks},
  author={Goyal, Amit and Bonchi, Francesco and Lakshmanan, Laks VS and Venkatasubramanian, Suresh},
  journal={Social Network Analysis and Mining},
  volume={3},
  pages={179--192},
  year={2013},
  publisher={Springer}
}

@inproceedings{tajdini2024nearly,
  title={Nearly minimax optimal submodular maximization with bandit feedback},
  author={Tajdini, Artin and Jain, Lalit K and Jamieson, Kevin},
  booktitle={The Thirty-eighth Annual Conference on Neural Information Processing Systems},
  year={2024}
}

@ARTICLE{cmab-fair,
  author={Li, Fengjiao and Liu, Jia and Ji, Bo},
  journal={IEEE Transactions on Network Science and Engineering}, 
  title={Combinatorial Sleeping Bandits With Fairness Constraints}, 
  year={2020},
  volume={7},
  number={3},
  pages={1799-1813},
  keywords={Stochastic processes;Resource management;Wireless communication;Optimization;Face;Simulation;Convergence;Multi-armed bandit (MAB);combinatorial sleeping bandits;fairness;Upper Confidence Bound (UCB);virtual queue;Lyapunov drift;regret analysis.}}

@InProceedings{han21b,
  title = 	 {Adversarial Combinatorial Bandits with General Non-linear Reward Functions},
  author =       {Han, Yanjun and Wang, Yining and Chen, Xi},
  booktitle = 	 {Proceedings of the 38th International Conference on Machine Learning},
  pages = 	 {4030--4039},
  year = 	 {2021},
  editor = 	 {Meila, Marina and Zhang, Tong},
  volume = 	 {139},
  series = 	 {Proceedings of Machine Learning Research},
  month = 	 {18--24 Jul},
  publisher =    {PMLR},
  abstract = 	 {In this paper we study the adversarial combinatorial bandit with a known non-linear reward function, extending existing work on adversarial linear combinatorial bandit. {The adversarial combinatorial bandit with general non-linear reward is an important open problem in bandit literature, and it is still unclear whether there is a significant gap from the case of linear reward, stochastic bandit, or semi-bandit feedback.} We show that, with $N$ arms and subsets of $K$ arms being chosen at each of $T$ time periods, the minimax optimal regret is $\widetilde\Theta_{d}(\sqrt{N^d T})$ if the reward function is a $d$-degree polynomial with $d&lt; K$, and $\Theta_K(\sqrt{N^K T})$ if the reward function is not a low-degree polynomial. {Both bounds are significantly different from the bound $O(\sqrt{\mathrm{poly}(N,K)T})$ for the linear case, which suggests that there is a fundamental gap between the linear and non-linear reward structures.} Our result also finds applications to adversarial assortment optimization problem in online recommendation. We show that in the worst-case of adversarial assortment problem, the optimal algorithm must treat each individual $\binom{N}{K}$ assortment as independent.}
}

@article{jain2018quality,
  title={A quality assuring, cost optimal multi-armed bandit mechanism for expertsourcing},
  author={Jain, Shweta and Gujar, Sujit and Bhat, Satyanath and Zoeter, Onno},
  journal={Artificial Intelligence},
  volume={254},
  pages={44--63},
  year={2018},
  publisher={Elsevier}
}

@article{nie2024stochastic,
  title={Stochastic $ k $-Submodular Bandits with Full Bandit Feedback},
  author={Nie, Guanyu and Aggarwal, Vaneet and Quinn, Christopher John},
  journal={arXiv preprint arXiv:2412.10682},
  year={2024}
}

@inproceedings{he2014stability,
  title={Stability of influence maximization},
  author={He, Xinran and Kempe, David},
  booktitle={Proceedings of the 20th ACM SIGKDD International Conference on Knowledge Discovery and Data Mining},
  pages={1256--1265},
  year={2014}
}

@article{NEURIPS2022_13f17f74,
  title={Combinatorial bandits with linear constraints: Beyond knapsacks and fairness},
  author={Liu, Qingsong and Xu, Weihang and Wang, Siwei and Fang, Zhixuan},
  journal={Advances in Neural Information Processing Systems},
  volume={35},
  pages={2997--3010},
  year={2022}
}

@InProceedings{pmlr-v202-li23aw,
  title={Optimal arms identification with knapsacks},
  author={Li, Shaoang and Zhang, Lan and Yu, Yingqi and Li, Xiangyang},
  booktitle={International Conference on Machine Learning},
  pages={20529--20555},
  year={2023},
  organization={PMLR}
}

@InProceedings{pmlr-v117-rejwan20a,
  title={Top-$ k $ combinatorial bandits with full-bandit feedback},
  author={Rejwan, Idan and Mansour, Yishay},
  booktitle={Algorithmic Learning Theory},
  pages={752--776},
  year={2020},
  organization={PMLR}
}

@article{Du_Kuroki_Chen_2021, title={Combinatorial Pure Exploration with Full-Bandit or Partial Linear Feedback}, volume={35}, abstractNote={In this paper, }, number={8}, journal={Proceedings of the AAAI Conference on Artificial Intelligence}, author={Du, Yihan and Kuroki, Yuko and Chen, Wei}, year={2021}, month={May}, pages={7262-7270} }

@inproceedings{NIPS2016_aa169b49,
 author = {Chen, Wei and Hu, Wei and Li, Fu and Li, Jian and Liu, Yu and Lu, Pinyan},
 booktitle = {Advances in Neural Information Processing Systems},
 editor = {D. Lee and M. Sugiyama and U. Luxburg and I. Guyon and R. Garnett},
 pages = {},
 title = {Combinatorial Multi-Armed Bandit with General Reward Functions},
 volume = {29},
 year = {2016}
}

@InProceedings{pmlr-v28-chen13a,
  title={Combinatorial multi-armed bandit: General framework and applications},
  author={Chen, Wei and Wang, Yajun and Yuan, Yang},
  booktitle={International Conference on Machine Learning},
  pages={151--159},
  year={2013},
  organization={PMLR}
}

@article{iyer2013submodular,
  title={Submodular optimization with submodular cover and submodular knapsack constraints},
  author={Iyer, Rishabh K and Bilmes, Jeff A},
  journal={Advances in Neural Information Processing Systems},
  volume={26},
  year={2013}
}

@inproceedings{iyer2012algorithms,
  title={Algorithms for approximate minimization of the difference between submodular functions, with applications},
  author={Iyer, Rishabh and Bilmes, Jeff},
  booktitle={Proceedings of the Twenty-Eighth Conference on Uncertainty in Artificial Intelligence},
  pages={407--417},
  year={2012}
}

@article{chen2024bicriteria,
  title={Bicriteria approximation algorithms for the submodular cover problem},
  author={Chen, Wenjing and Crawford, Victoria},
  journal={Advances in Neural Information Processing Systems},
  volume={36},
  year={2024}
}

@inproceedings{crawford2019submodular,
  title={Submodular cost submodular cover with an approximate oracle},
  author={Crawford, Victoria and Kuhnle, Alan and Thai, My},
  booktitle={International Conference on Machine Learning},
  pages={1426--1435},
  year={2019},
  organization={PMLR}
}

@article{wolsey1982analysis,
  title={An analysis of the greedy algorithm for the submodular set covering problem},
  author={Wolsey, Laurence A},
  journal={Combinatorica},
  volume={2},
  number={4},
  pages={385--393},
  year={1982},
  publisher={Springer}
}

@article{wan2010greedy,
  title={Greedy approximations for minimum submodular cover with submodular cost},
  author={Wan, Peng-Jun and Du, Ding-Zhu and Pardalos, Panos and Wu, Weili},
  journal={Computational Optimization and Applications},
  volume={45},
  number={2},
  pages={463--474},
  year={2010},
  publisher={Springer}
}

@article{soma2015generalization,
  title={A generalization of submodular cover via the diminishing return property on the integer lattice},
  author={Soma, Tasuku and Yoshida, Yuichi},
  journal={Advances in Neural Information Processing Systems},
  volume={28},
  year={2015}
}

@inproceedings{celis2018multiwinner,
  title={Multiwinner voting with fairness constraints},
  author={Celis, L Elisa and Huang, Lingxiao and Vishnoi, Nisheeth K},
  booktitle={Proceedings of the 27th International Joint Conference on Artificial Intelligence},
  pages={144--151},
  year={2018}
}

@inproceedings{el2024fairness,
  title={Fairness in Submodular Maximization over a Matroid Constraint},
  author={El Halabi, Marwa and Tarnawski, Jakub and Norouzi-Fard, Ashkan and Vuong, Thuy-Duong},
  booktitle={International Conference on Artificial Intelligence and Statistics},
  pages={1027--1035},
  year={2024},
  organization={PMLR}
}

@inproceedings{
chen2024fair,
title={Fair Submodular Cover},
author={Wenjing Chen and Shuo Xing and Samson Zhou and Victoria G. Crawford},
booktitle={The Thirteenth International Conference on Learning Representations},
year={2025}
}

@inproceedings{niazadeh2021online,
  title={Online learning via offline greedy algorithms: Applications in market design and optimization},
  author={Niazadeh, Rad and Golrezaei, Negin and Wang, Joshua R and Susan, Fransisca and Badanidiyuru, Ashwinkumar},
  booktitle={Proceedings of the 22nd ACM Conference on Economics and Computation},
  pages={737--738},
  year={2021}
}

@article{han2017cost,
  title={Cost-Effective Seed Selection in Online Social Networks},
  author={Han, Kai and He, Yuntian and Xiao, Xiaokui and Tang, Shaojie and Xu, Jingxin and Huang, Liusheng},
  journal={arXiv e-prints},
  pages={arXiv--1711},
  year={2017}
}

@article{norouzi2016efficient,
  title={An efficient streaming algorithm for the submodular cover problem},
  author={Norouzi-Fard, Ashkan and Bazzi, Abbas and Bogunovic, Ilija and El Halabi, Marwa and Hsieh, Ya-Ping and Cevher, Volkan},
  journal={Advances in Neural Information Processing Systems},
  volume={29},
  year={2016}
}

@inproceedings{guillory2011simultaneous,
  title={Simultaneous learning and covering with adversarial noise},
  author={Guillory, Andrew and Bilmes, Jeff A},
  booktitle={Proceedings of the 28th International Conference on Machine Learning (ICML-11)},
  pages={369--376},
  year={2011}
}

@inproceedings{nie2022explore,
  title={An explore-then-commit algorithm for submodular maximization under full-bandit feedback},
  author={Nie, Guanyu and Agarwal, Mridul and Umrawal, Abhishek Kumar and Aggarwal, Vaneet and Quinn, Christopher John},
  booktitle={Uncertainty in Artificial Intelligence},
  pages={1541--1551},
  year={2022},
  organization={PMLR}
}

@inproceedings{fourati2023randomized,
  title={Randomized greedy learning for non-monotone stochastic submodular maximization under full-bandit feedback},
  author={Fourati, Fares and Aggarwal, Vaneet and Quinn, Christopher and Alouini, Mohamed-Slim},
  booktitle={International Conference on Artificial Intelligence and Statistics},
  pages={7455--7471},
  year={2023},
  organization={PMLR}
}

@article{agarwal2022stochastic,
  title={Stochastic top k-subset bandits with linear space and non-linear feedback with applications to social influence maximization},
  author={Agarwal, Mridul and Aggarwal, Vaneet and Umrawal, Abhishek K and Quinn, Christopher J},
  journal={ACM/IMS Transactions on Data Science (TDS)},
  volume={2},
  number={4},
  pages={1--39},
  year={2022},
  publisher={ACM New York, NY}
}

@inproceedings{agarwal2021dart,
  title={Dart: Adaptive accept reject algorithm for non-linear combinatorial bandits},
  author={Agarwal, Mridul and Aggarwal, Vaneet and Umrawal, Abhishek Kumar and Quinn, Chris},
  booktitle={Proceedings of the AAAI Conference on Artificial Intelligence},
  volume={35},
  number={8},
  pages={6557--6565},
  year={2021}
}

@inproceedings{fourati2024combinatorial,
  title={Combinatorial stochastic-greedy bandit},
  author={Fourati, Fares and Quinn, Christopher John and Alouini, Mohamed-Slim and Aggarwal, Vaneet},
  booktitle={Proceedings of the AAAI Conference on Artificial Intelligence},
  volume={38},
  number={11},
  pages={12052--12060},
  year={2024}
}

@inproceedings{fouratifederated,
  title={Federated Combinatorial Multi-Agent Multi-Armed Bandits},
  author={Fourati, Fares and Alouini, Mohamed-Slim and Aggarwal, Vaneet},
  booktitle={Forty-first International Conference on Machine Learning},
year={2024}
}

@inproceedings{nie2023framework,
  title={A framework for adapting offline algorithms to solve combinatorial multi-armed bandit problems with bandit feedback},
  author={Nie, Guanyu and Nadew, Yididiya Y and Zhu, Yanhui and Aggarwal, Vaneet and Quinn, Christopher John},
  booktitle={International Conference on Machine Learning},
  pages={26166--26198},
  year={2023},
  organization={PMLR}
}

@article{mastrolilli2014bi,
  title={Bi-criteria and approximation algorithms for restricted matchings},
  author={Mastrolilli, Monaldo and Stamoulis, Georgios},
  journal={Theoretical Computer Science},
  volume={540},
  pages={115--132},
  year={2014},
  publisher={Elsevier}
}

@inproceedings{ogryczak2010bicriteria,
  title={Bicriteria models for fair and efficient resource allocation},
  author={Ogryczak, W{\l}odzimierz},
  booktitle={International Conference on Social Informatics},
  pages={140--159},
  year={2010},
  organization={Springer}
}

@inproceedings{wang2018thompson,
  title={Thompson sampling for combinatorial semi-bandits},
  author={Wang, Siwei and Chen, Wei},
  booktitle={International Conference on Machine Learning},
  pages={5114--5122},
  year={2018},
  organization={PMLR}
}

@article{kong2021hardness,
  title={The hardness analysis of Thompson sampling for combinatorial semi-bandits with greedy oracle},
  author={Kong, Fang and Yang, Yueran and Chen, Wei and Li, Shuai},
  journal={Advances in Neural Information Processing Systems},
  volume={34},
  pages={26701--26713},
  year={2021}
}

@article{lin2015stochastic,
  title={Stochastic online greedy learning with semi-bandit feedbacks},
  author={Lin, Tian and Li, Jian and Chen, Wei},
  journal={Advances in Neural Information Processing Systems},
  volume={28},
  year={2015}
}

@inproceedings{yu2016linear,
  title={Linear submodular bandits with a knapsack constraint},
  author={Yu, Baosheng and Fang, Meng and Tao, Dacheng},
  booktitle={Proceedings of the AAAI Conference on Artificial Intelligence},
  volume={30},
  number={1},
  year={2016}
}

@inproceedings{takemori2020submodular,
  title={Submodular bandit problem under multiple constraints},
  author={Takemori, Sho and Sato, Masahiro and Sonoda, Takashi and Singh, Janmajay and Ohkuma, Tomoko},
  booktitle={Conference on Uncertainty in Artificial Intelligence},
  pages={191--200},
  year={2020},
  organization={PMLR}
}

@inproceedings{el2011beyond,
  title={Beyond keyword search: discovering relevant scientific literature},
  author={El-Arini, Khalid and Guestrin, Carlos},
  booktitle={Proceedings of the 17th ACM SIGKDD International Conference on Knowledge Discovery and Data Mining},
  pages={439--447},
  year={2011}}

@article{krause2008near,
  title={Near-optimal sensor placements in {Gaussian} processes: {Theory}, efficient algorithms and empirical studies.},
  author={Krause, Andreas and Singh, Ajit and Guestrin, Carlos},
  journal={Journal of Machine Learning Research},
  volume={9},
  number={2},
  year={2008}
}

@inproceedings{krause2005near,
  title={Near-optimal nonmyopic value of information in graphical models},
  author={Krause, Andreas and Guestrin, Carlos},
  booktitle={Proceedings of the Twenty-First Conference on Uncertainty in Artificial Intelligence},
  pages={324--331},
  year={2005}
}

@article{trevisan2014inapproximability,
  title={Inapproximability of combinatorial optimization problems},
  author={Trevisan, Luca},
  journal={Paradigms of Combinatorial Optimization: Problems and New Approaches},
  pages={381--434},
  year={2014},
  publisher={Wiley Online Library}
}

@article{byrnes2009maximizing,
  title={Maximizing general set functions by submodular decomposition},
  author={Byrnes, Kevin},
  journal={arXiv preprint arXiv:0906.0120},
  year={2009}
}

@article{streeter2008online,
  title={An online algorithm for maximizing submodular functions},
  author={Streeter, Matthew and Golovin, Daniel},
  journal={Advances in Neural Information Processing Systems},
  volume={21},
  year={2008}
}
\bibliographystyle{tmlr}

    \newpage

\onecolumn


\appendix






\addcontentsline{toc}{section}{Appendices} 
\renewcommand*\contentsname{Table of Contents}
\addtocontents{toc}{\protect\setcounter{tocdepth}{1}} 
\tableofcontents 

\clearpage

\clearpage

\section{Resilience Definition for Minimization}\label{apd_res_def}


The offline algorithm \( \mathcal{A} \) is an \( (\alpha, \beta, \delta, \texttt{N}) \)-resilient approximation for:  
\[
\text{Minimize } f(S) \quad \text{subject to} \quad g(S) \geq \kappa, \quad S \subseteq \Omega,
\]
if, given noisy oracles \( \hat{f}, \hat{g} \) with \( |f(S) - \hat{f}(S)| < \epsilon \) and \( |g(S) - \hat{g}(S)| < \epsilon \), it returns \( S^\mathcal{A} \) such that:  
\begin{align}
\mathbb{E}[f(S^\mathcal{A})] &\leq \alpha f(\mathrm{OPT}) + \delta \epsilon, \label{eq:resilience-f-min} \\
\mathbb{E}[g(S^\mathcal{A})] &\geq \beta \kappa - \delta \epsilon. \label{eq:resilience-g-min}
\end{align}

We also note that in this case $\alpha\ge 1$ and $\beta\le 1$.

\section{Algorithm and Proof for the Submodular Cover Problem}\label{apd:mintss}

\noindent \textbf{Algorithm Setup:}
\begin{itemize}
    \item \textbf{Input}: Ground set \(\Omega\), deterministic cost \(f(S) = \sum_{x \in S} c_x\),  utility oracle (exact $g$ or inexact \(\hat{g}\)), threshold \(\kappa\), parameter \(\omega > 0\).
    \item \textbf{Goal}: Minimize \(f(S)\) subject to \(g(S) \geq \kappa\).
    \item \textbf{Resilience Conditions}: For \(\epsilon > 0\),
    \[
    \mathbb{E}[f(S)] \leq \alpha f(\mathrm{OPT}) + \delta \epsilon, \quad 
    \mathbb{E}[g(S)] \geq \beta \kappa - \delta \epsilon.
    \]
\end{itemize}

The detailed offline algorithm for the problem is given in Algorithm \ref{alg:mintss}, which was proposed in \cite{goyal2013minimizing}.

We first generalize a result from \cite{goyal2013minimizing} 
that will be used in our analysis. 
Denote the cost function as $f(S) = \sum_{x\in S} c_x$, where $c_x$ represents the cost of the base arm $x$, which we assume is known and is not stochastic.


\begin{remark}
    Lemma 1 in \cite{goyal2013minimizing} was shown for the specific sets chosen by the greedy algorithm using an  exact value oracle ($g$).  
    We show essentially the same proof holds for any set $S \subset \mathcal{X}$, which will be critical for our analysis when an exact value oracle is unavailable and the sequence of subsets chosen by the algorithm using $\hat{g}$ may be completely different from the sequence of subsets that would have been chosen using $g$.
\end{remark}

\begin{lemma}\label{lemma:density-bound:exact}
For a non-negative, monotone non-decreasing submodular set function $g:\mathcal{X} \to \mathbb{R}^{\geq 0}$ and positive monotone cost function 
$f:\mathcal{X} \to \mathbb{R}_+$,
for any set $S \subset \mathcal{X}$, there is an element $x \in \mathcal{X} \backslash S$ such that
\begin{align}
    \frac{\min(g(S \cup \{x\}),\kappa) - \min(g(S),\kappa)}{c_x} \geq \frac{\kappa - \min(g(S),\kappa)}{f(\mathrm{OPT})} ,
    \label{eq:lemma:density-bound:exact}
\end{align}
where $\mathrm{OPT}$ is the minimal cost set satisfying $g(\mathrm{OPT})\geq \kappa$.
\end{lemma}


\begin{proof}  
The proof of \cref{lemma:density-bound:exact} essentially follows along the lines of Lemma 1 in \cite{goyal2013minimizing}.  
As noted in \cite{goyal2013minimizing}, thresholded monotone submodular functions, such as $\min(g(\cdot),\kappa)$ are also monotone and submodular.
    
If $g(S)\geq \kappa$, then the right hand side of \eqref{eq:lemma:density-bound:exact} is zero.  Since $g$ is monotone non-decreasing the left hand side is always non-negative, so the inequality trivially holds for any $x \in \mathcal{X} \backslash S$.

For $g(S) < \kappa$, we prove the lemma by contradiction.  
We will assume that for all elements $x \in \mathcal{X} \backslash S$,
    \begin{align}
    \frac{\min(g(S \cup \{x\}),\kappa) - \min(g(S),\kappa)}{c_x} < \frac{\kappa - \min(g(S),\kappa)}{f(\mathrm{OPT})}.
    \label{eq:lemma:density-bound:exact:contra}
    \end{align}

    This condition means $g(S)<\kappa$.
    We (arbitrarily) enumerate elements in the optimal set that are not in $S$, 
    \begin{align*}
        \{y_1, \dots, y_t\} = \mathrm{OPT}\backslash S,
    \end{align*}
    where $t$ is the number of such elements.  
    Since $g(S) < \kappa \leq g(\mathrm{OPT})$ we must have $t\geq1$ (at least one element).
    By monotonicity, $\kappa\leq g(\mathrm{OPT}) \leq g(S \cup \mathrm{OPT})$.
    We have
    %
    \begin{align}
        & \hspace{-1cm} \kappa - g(S) \nonumber\\
        &= \min( g(S \cup \mathrm{OPT}),\kappa) - \min(g(S),\kappa) \tag{$g(\mathrm{OPT})\geq \kappa$ by def; $g(S)<\kappa$ by assumption}\\
        %
        %
        &= \sum_{i=1}^t \min(g(S \cup \{y_1, \dots, y_i\}),\kappa) - \min(g(S \cup \{y_1, \dots, y_{i-1}\}),\kappa)  \tag{telescoping sum}\\
        &\leq \sum_{i=1}^t 
        \min(g(S \cup \{y_i\}),\kappa)  - \min(g(S ),\kappa)   \tag{submodularity of $\min(g(\cdot),\kappa)$}\\
        &< \sum_{i=1}^t c_{y_i} \frac{\kappa - \min(g(S),\kappa)}{f(\mathrm{OPT})}   \tag{using  assumption \Eqref{eq:lemma:density-bound:exact:contra}}\\
        &= f(\mathrm{OPT}\backslash S) \frac{\kappa - \min(g(S),\kappa)}{f(\mathrm{OPT})}  \nonumber\\
        &< \kappa - \min(g(S),\kappa)    \tag{$0<f(\mathrm{OPT}\backslash S) < f(\mathrm{OPT})$} \\
        &= \kappa - g(S), \nonumber
    \end{align}
    a contradiction.


    
\end{proof}



We  will also later use a logarithmic inequality. 


\begin{lemma} \label{lemma:log-ineq:MINTSS:bound}
    For $a,b\in \mathbb{R}_+$ such that $\frac{b}{a} \leq 0.79$, $ \ln(a-b) \geq \ln(a) - \frac{2b}{a}.  $
\end{lemma}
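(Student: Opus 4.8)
The plan is to reduce the two-variable inequality to a single-variable one via the substitution $x = b/a$. Since $a,b > 0$ and $b/a \leq 0.79 < 1$, writing $a - b = a(1-x)$ gives $\ln(a-b) = \ln a + \ln(1-x)$, so after cancelling the common $\ln a$ from both sides the claim becomes equivalent to
\[
\ln(1-x) \geq -2x, \qquad x \in [0, 0.79],
\]
which no longer involves $a$. Equivalently, defining $\phi(x) = 2x + \ln(1-x)$, it suffices to show $\phi(x) \geq 0$ on $[0, 0.79]$.

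Next I would exploit concavity. Since $\phi''(x) = -1/(1-x)^2 < 0$ on $[0,0.79]$, the function $\phi$ is strictly concave there, so its minimum over the closed interval is attained at one of the two endpoints. It therefore suffices to verify $\phi(0) \geq 0$ and $\phi(0.79) \geq 0$: concavity then forces $\phi$ to lie above the chord joining these two nonnegative values, and a chord between nonnegative endpoints is nonnegative throughout, hence $\phi \geq 0$ on the whole interval. (Alternatively, one can note $\phi'(x) = (1-2x)/(1-x)$ is positive on $[0,1/2)$ and negative on $(1/2,1)$, so $\phi$ is unimodal and again its infimum on $[0,0.79]$ occurs at an endpoint.)

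The endpoint checks are then $\phi(0) = 0$ trivially, and $\phi(0.79) = 1.58 + \ln(0.21)$. The only genuinely delicate point is confirming $\phi(0.79) \geq 0$, i.e. that the constant $0.79$ is admissible; this amounts to the numerical bound $\ln(0.21) \geq -1.58$, equivalently $0.21 \geq e^{-1.58} \approx 0.206$, which holds. The exact threshold is the positive root of $e^{-2x} = 1-x$, located near $x \approx 0.7968$, so $0.79$ sits just inside the valid range and the inequality is tight in the sense that it would fail for $x$ slightly larger. This single numerical verification is the main — and essentially the only — obstacle; everything else is routine one-variable calculus.
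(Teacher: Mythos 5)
Your proposal is correct and follows essentially the same route as the paper's proof: both reduce via $x = b/a$ to showing $2x + \ln(1-x) \geq 0$ on $[0,0.79]$, exploit concavity of this function (the paper also notes the stationary point at $x = 1/2$), and settle the matter with the numerical endpoint check $2(0.79) + \ln(0.21) \approx 0.019 > 0$. The only cosmetic difference is that you invoke "a concave function on a closed interval is minimized at an endpoint," while the paper phrases the same fact as root-counting for a concave function, additionally noting that the second root lies in $(0.79, 0.8)$.
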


\begin{proof}
First, 
\begin{align*}
 \ln(a-b) = \ln(a(1-\frac{b}{a})) = \ln (a) + \ln (1- \frac{b}{a}).
\end{align*}

It suffices to check that $h(x) := \ln(1-x) + 2x\geq 0$ for $0\leq x\leq 0.79$.  We can confirm $h(x)$ is concave with two roots.

$h'(x) = \frac{-1}{1-x} + 2$ so $h'(0) = 1$ and there is a stationary point at $x=\frac{1}{2}$.

$h''(x) = \frac{-1}{(1-x)^2}$, so $h(x)$ is concave, increasing for $x<\frac{1}{2}$ and then decreasing for $x>\frac{1}{2}$.

$h(0) = 0$ trivially.  $h(0.79) \approx 0.01935225$ and $h(0.8) \approx -0.00943791$.  Thus one root is $x=0$ and the other root is in the interval $(0.79,0.8)$.  
    
\end{proof}

\if 0
\begin{lemma} \label{lemma:log-ineq:MINTSS:bound2}
    For $a,b\in \mathbb{R}_+$ such that $2b\leq a$, $ \ln(a-b) \geq \ln(a) - \frac{2b}{a}.  $
\end{lemma}
See \cref{sec:prf:lemma:log-ineq:MINTSS:bound2} for a proof.
\fi

\color{black}

In the following, we will show the resilience guarantee of this algorithm, \cref{thm:MINTSS:robust}.

\if 0
\begin{theorem}[\textsc{GREEDY-MINTSS} Resilience]
\label{thm:MINTSS:robust}
Under inexact utility evaluations \(|\hat{g}(S) - g(S)| \leq \epsilon\), for $\epsilon \le \omega\frac{c_{\min}}{4 n  c_{\max}}$, \textsc{GREEDY-MINTSS} is an \((\alpha, \beta, \delta, \texttt{N})\)-resilient approximation algorithm for the Submodular Cover problem, where:
\[
\alpha = 1 + \ln\left(\frac{\kappa}{\omega}\right), \quad 
\beta = 1 - \frac{\omega}{\kappa}, \quad 
\delta = \frac{c_{\max}}{\omega c_{\min}}f_{\max}(3 + 6n), \quad
\texttt{N} = n^2,
\]
with \(c_{\max} = \max_{x \in \Omega} c_x\), \(c_{\min} = \min_{x \in \Omega} c_x\), and \(n = |\Omega|\).
\end{theorem}
\fi 



\begin{proof}
\noindent The proof follows along the following steps:

First, we note that even though inexact values (based on $\hat{g}(\cdot)$) are used, the algorithm will terminate for non-trivial values of the threshold $\kappa$ (i.e. $\kappa< g(\Omega)$).  
For any $S\subseteq \Omega$ with $g(S) \geq \kappa$ (including $ \mathrm{OPT}$), 
\begin{align*}
    \hat{g}(S) 
    &\geq g(S) - \epsilon \nonumber\\
    &\geq \kappa - \epsilon \nonumber\\
    &\geq \kappa -  \omega\frac{c_{\min}}{4 n  c_{\max}} \nonumber\\
    &\geq \kappa -  \omega. \nonumber\\
\end{align*}



\noindent \textbf{1. Noisy Utility Propagation:}
The algorithm terminates when \(\hat{g}(S) \geq \kappa - \omega\). Given \(|\hat{g}(S) - g(S)| \leq \epsilon\),  
\[
g(S) \geq \hat{g}(S) - \epsilon \geq (\kappa - \omega) - \epsilon.
\]
Rewriting for \(\beta\):
\[
g(S) \geq \left(1 - \frac{\omega}{\kappa}\right)\kappa - \epsilon = \beta \kappa - \epsilon.
\]
Thus, \(\beta = 1 - \frac{\omega}{\kappa}\) and the utility error term is \(\delta_g \epsilon = \epsilon \implies \delta_g = 1\).


\noindent \textbf{2. Cost Error Analysis:}
Let \(\mathrm{OPT} = \arg\min_{S' \subseteq \Omega} \{f(S') \mid g(S') \geq \kappa\}\). Let us denote $x_1, x_2, \ldots, x_\ell$ to be the elements added (in order) by the algorithm.
Define the set $S_i = \{x_1, x_2, \ldots, x_i\}$. 
Thus, $S_\ell$ denotes the final set outputed by the algorithm.  \textit{We explicitly set } $S_0=\emptyset$.
We want to bound $f(S_\ell)$. 

We first make two basic observations.  For  $i\leq\ell$, $\hat{g}(S_{i-1}) < \kappa$ since
the algorithm had  not yet stopped (and thus $\hat{g}(S_{i-1}) < \kappa-\omega$). 
We also have that 
$g(S_{i-1}) < \kappa$, since 
\begin{align}
    g(S_{i-1}) &\leq \hat{g}(S_{i-1}) + \epsilon \nonumber\\
    &\leq \kappa-\omega + \epsilon 
     \nonumber\\
    &\leq \kappa-\omega\left( 1 - \frac{c_{\min}}{4 n  c_{\max}}\right) 
     \nonumber\\
     &< \kappa. \nonumber
\end{align}
%
%
%
At each iteration, the algorithm selects \(x_i\) maximizing the noisy marginal density:  
\[x_i \gets \argmax_{x\in \Omega \backslash S_{i-1}} \hat{\rho}_{x}(S_{i-1}) = \frac{\min(\hat{g}(S_{i-1} \cup \{x\}),\kappa) - \hat{g}(S_{i-1})}{c_{x}}.
\]

Let $x_i'$ denote the element with largest marginal density (with respect to the true function $g$).

\[
x_i' \gets \argmax_{x\in \Omega \backslash S_{i-1}} \rho_{x}(S_{i-1}) = \frac{\min(g(S_{i-1} \cup \{x\}),\kappa) - g(S_{i-1})}{c_{x}}.
\]

Let us further denote $\tilde{g}(S) = \min(g(S),\kappa)$. Then, 
by Lemma \ref{lemma:density-bound:exact} (Density Bound), the largest true marginal gain satisfies


\begin{align}
 \frac{\tilde{g}(S_{i-1} \cup \{x_i'\}) - \tilde{g}(S_{i-1})}{c_{x_i'}} \geq \frac{\kappa - \tilde{g}(S_{i-1})}{f(\mathrm{OPT})}. \label{eq:prf:MINTSS:rob:densitybndbest}
\end{align}

We also have:
\begin{align*}
\hat{\rho}_{x_i}(S_{i-1}) &\ge \hat{\rho}_{x_i'}(S_{i-1}) \tag{greedy selection}\\ 
&\ge \rho_{x_i'}(S_{i-1}) - \frac{2\epsilon}{c_{x_i'}} \tag{value error bound}\\ 
&\ge \frac{\kappa - \tilde{g}(S_{i-1})}{f(\mathrm{OPT})} - \frac{2\epsilon}{c_{x_i'}} \tag{using \eqref{eq:prf:MINTSS:rob:densitybndbest}}\\ 
&\ge \frac{\kappa - \min(\hat{g}(S_{i-1}), \kappa)}{f(\mathrm{OPT})} - \frac{\epsilon}{f(\mathrm{OPT})}- \frac{2\epsilon}{c_{x_i'}} \tag{value error bound}\\
&\ge \frac{\kappa - \min(\hat{g}(S_{i-1}), \kappa)}{f(\mathrm{OPT})} - 3\frac{\epsilon}{c_{\min}} .
\end{align*}

\noindent \textbf{3. Recursive Cost Bound:}

\noindent\textbf{Base case (i=1):}
Let \(\kappa_0 := \kappa - \min\{\hat g(\emptyset), \kappa\}\).
Since \(\hat g(\emptyset) \ge 0\), we have \(\kappa_0 \le \kappa\); hence the recurrence holds for \(i=1\).

Let us define the utility gap $\kappa_i = \kappa - \min(\hat{g}(S_i), \kappa)$. Then from the above inequality we get:

\noindent\textbf{General case ($i>1$):}

\begin{align*}
\hat{\rho}_{x_i}(S_{i-1})
= \frac{\kappa_{i-1} - \kappa_i}{c_{x_i}} &\ge \frac{\kappa - \min(\hat{g}(S_{i-1}), \kappa)}{f(\mathrm{OPT})} - 3\frac{\epsilon}{c_{\min}}\\
&= \frac{\kappa_{i-1}}{f(\mathrm{OPT})} - 3\frac{\epsilon}{c_{\min}}\\
\implies \kappa_i &\le \kappa_{i-1}\left(1-\frac{c_{x_i}}{f(\mathrm{OPT})}\right) + \frac{3\epsilon c_{\max}}{c_{\min}}\\
\implies \kappa_i &\le \kappa_{i-1}e^{\frac{-c_{x_i}}{f(\mathrm{OPT})}} + \frac{3\epsilon c_{\max}}{c_{\min}}
\end{align*}


\if \discussionSCalgchallenges 1
\blue
A major  hurdle is identifying a recursive cost bound (see proof parts 3 and 4) that can be used to relate the cost $f(S_{\ell-1})$ at the second-to-last step to the cost of the optimal solution $f(\mathrm{OPT})$. 

For the utility gap $\kappa_i$ (wrt main threshold $\kappa$), using exact oracles we have a recursion of the form $\kappa_i \leq \kappa_{i-1} \exp(-c_{x_i}/f(\mathrm{OPT}))$.  A simple unraveling over $\ell-1$ iterations and using that $f$ is additive yields $\kappa_{\ell-1} \leq \kappa \exp( - f(S_{\ell-1})/f(\mathrm{OPT}))$.  Then using that $\omega < \kappa_{\ell-1}$ we can easily get the bound $f(S_{\ell-1}) \leq f(\mathrm{OPT})\ln(\frac{\kappa}{\omega})$. 

However, working with inexact oracles we need to account for errors, and our bound has an additive error term $\kappa_i \le \kappa_{i-1}e^{\frac{-c_{x_i}}{f(\mathrm{OPT})}} + \frac{3\epsilon c_{\max}}{c_{\min}}$.  Unraveling yields many cross terms.  Observing that the exponential term is bounded by one, we bound the cross terms to keep just a single additive term with $\epsilon$,
$\kappa_{\ell-1} \leq \kappa e^{ - \frac{f(S_{\ell-1})}{f(\mathrm{OPT})}}+ 3\epsilon \frac{c_{\max}}{c_{\min}}\ell$.
\hl{this part maybe not too big} \red{fixed. taken value/iteration l as maximum number of "oracle calls" N at p=1, is 39.}

With this formula, we can attempt to proceed like the exact oracle case and with rearranging could obtain $f(S_{\ell-1}) \leq f(\mathrm{OPT})
    \ln\left( \frac{\kappa}{\omega - 3\epsilon \frac{c_{\max}}{c_{\min}}\ell } \right) $.  However, since $\epsilon$ in in a logarithmic term with $\kappa$, 
    when we later bound the cost of the final set $f(S_\ell)$ (see proof step 5), we would not be able to rearrange terms to  recover the approximation guarantee with an approximation coefficient $\alpha = 1+\ln(\kappa/\omega)$ with an additive $\epsilon\delta$ term.

To recover a bound of the form
$f(S_\ell) \leq (1+\ln \frac{\kappa}{\omega}) + \epsilon\delta $, we need our bound on $f(S_{\ell-1})$ to have a similar form, $f(S_{\ell-1})\leq \ln \frac{\kappa}{\omega} + \epsilon(\dots))$.  
For this we identified and proved \cref{lemma:log-ineq:MINTSS:bound} which allows us to obtain a bound on $f(S_{\ell-1})$ in that form.

\black

\fi

\noindent \textbf{4. Telescoping Sum:}  


Unrolling the recursion over \(\ell-1\) iterations, we obtain:
\begin{align}
    \kappa_{\ell-1} &\leq \kappa e^{ - \sum_{i=1}^{\ell-1}\frac{c_{x_i}}{f(\mathrm{OPT})}}+ 3\epsilon \frac{c_{\max}}{c_{\min}} (\ell-1) \nonumber\\
   &\leq \kappa e^{ - \frac{f(S_{\ell-1})}{f(\mathrm{OPT})}}+ 3\epsilon \frac{c_{\max}}{c_{\min}}\ell. \label{eq:l_1}
\end{align}

At termination it follows that $\kappa - \min({\hat{g}}(S_\ell),\kappa) \le \omega$  where $\omega$ is the threshold parameter. Also, because $\ell$ is the last iteration, we have $\kappa_{\ell-1} > \omega$ and $\kappa_\ell \le \omega$.


\if 0
\color{brown}
Thus,
\begin{align}
\omega &\leq \kappa_{\ell-1} \nonumber\\
&\leq \kappa e^{ - \frac{f(S_{\ell-1})}{f(\mathrm{OPT})}}+ 3\epsilon \frac{c_{\max}}{c_{\min}}\ell \tag{using \eqref{eq:l_1}} \nonumber\\
\implies \omega - 3\epsilon \frac{c_{\max}}{c_{\min}}\ell &\leq  \kappa e^{ - \frac{f(S_{\ell-1})}{f(\mathrm{OPT})}} \nonumber\\
\implies \ln(\omega - 3\epsilon \frac{c_{\max}}{c_{\min}}\ell) &\leq \ln(\kappa) - \frac{f(S_{\ell-1})}{f(\mathrm{OPT})} 
\label{eq:prf:MINTSS:30}
     \\
    \ln(\omega)-6\epsilon \frac{c_{\max}}{\omega c_{\min}} \ell&\leq \ln(\kappa) - \frac{f(S_{\ell-1})}{f(\mathrm{OPT})}\tag{Using $\ln(a)-\ln(a-b)\le 2b/a$ for $b/a<0.79$}
    \\
   \frac{f(S_{\ell-1})}{f(\mathrm{OPT})} &\leq \ln(\frac{\kappa}{\omega}) +6\epsilon \frac{c_{\max}}{\omega c_{\min}} \ell   \nonumber \\
   {f(S_{\ell-1})}&\leq f(\mathrm{OPT})\ln(\frac{\kappa}{\omega}) +f(\mathrm{OPT})6\epsilon \frac{c_{\max}}{\omega c_{\min}} \ell, \label{eq:c_l_1}
   \end{align}
\color{black}

\fi

Thus,
\begin{align}
\omega &\leq \kappa_{\ell-1} \nonumber\\
&\leq \kappa e^{ - \frac{f(S_{\ell-1})}{f(\mathrm{OPT})}}+ 3\epsilon \frac{c_{\max}}{c_{\min}}\ell \tag{using \eqref{eq:l_1}} \nonumber\\
\implies \omega - 3\epsilon \frac{c_{\max}}{c_{\min}}\ell &\leq  \kappa e^{ - \frac{f(S_{\ell-1})}{f(\mathrm{OPT})}} \nonumber\\
\implies \ln(\omega - 3\epsilon \frac{c_{\max}}{c_{\min}}\ell) &\leq \ln(\kappa) - \frac{f(S_{\ell-1})}{f(\mathrm{OPT})} 
\label{eq:prf:MINTSS:30}
     \\
    \ln(\omega)-6\epsilon \frac{c_{\max}}{\omega c_{\min}} \ell&\leq \ln(\kappa) - \frac{f(S_{\ell-1})}{f(\mathrm{OPT})} \tag{\cref{lemma:log-ineq:MINTSS:bound}}
    \\
   \frac{f(S_{\ell-1})}{f(\mathrm{OPT})} &\leq \ln(\frac{\kappa}{\omega}) +6\epsilon \frac{c_{\max}}{\omega c_{\min}} \ell   \nonumber \\
   {f(S_{\ell-1})}&\leq f(\mathrm{OPT})\ln(\frac{\kappa}{\omega}) +f(\mathrm{OPT})6\epsilon \frac{c_{\max}}{\omega c_{\min}} \ell, \label{eq:c_l_1}
   \end{align}

\if \discussionSCalgchallenges 1
\blue  (what happens if try to follow exact proof strategy?)
\begin{align*}
    \omega &\leq \kappa_{\ell-1} \nonumber\\
    &\leq \kappa e^{ - \frac{f(S_{\ell-1})}{f(\mathrm{OPT})}}+ 3\epsilon \frac{c_{\max}}{c_{\min}}\ell \tag{using \eqref{eq:l_1}} \nonumber\\
    \implies
    \omega - 3\epsilon \frac{c_{\max}}{c_{\min}}\ell &\leq \kappa e^{ - \frac{f(S_{\ell-1})}{f(\mathrm{OPT})}} \nonumber\\
    \implies
    \ln\left( \frac{\omega - 3\epsilon \frac{c_{\max}}{c_{\min}}\ell }{\kappa} \right) &\leq  - \frac{f(S_{\ell-1})}{f(\mathrm{OPT})}\nonumber\\
    \implies
    f(S_{\ell-1}) &\leq f(\mathrm{OPT})
    \ln\left( \frac{\kappa}{\omega - 3\epsilon \frac{c_{\max}}{c_{\min}}\ell } \right) \nonumber\\
\end{align*}

Later on (after bound $c_{x_\ell}$ below, would get to
\begin{align*}
f(S_\ell) &= c_{x_\ell} + f(S_{\ell-1})\\
&\le f(\mathrm{OPT}) \left(1 + \frac{3\epsilon}{c_{\min}}\frac{c_{\max}}{\omega}\right) 
+ f(\mathrm{OPT})
    \ln\left( \frac{\kappa}{\omega - 3\epsilon \frac{c_{\max}}{c_{\min}}\ell } \right) 
\\
&?? f(\mathrm{OPT})\left(1 + \ln\left(\frac{\kappa}{\omega}\right)  \right) + \epsilon\dots
\end{align*}

\black
\fi
   
where for \eqref{eq:prf:MINTSS:30}, 
since $\epsilon \le \omega\frac{c_{\min}}{4 n  c_{\max}} $ and $\ell \leq n$, $\omega - 3\epsilon \frac{c_{\max}}{c_{\min}}\ell >0$.

\if 0
{

\color{blue}

\todo[inline]{CJQ did make a trivial error in bounding}

Using this identity with $\omega - 3\epsilon \frac{c_{\max}}{c_{\min}}\ell >0$,
\begin{align}
    \ln(\omega - 3\epsilon \frac{c_{\max}}{c_{\min}}\ell) &\geq \ln(\omega) - \frac{3\epsilon c_{\max}\ell}{\omega c_{\min}} \label{eq:prf:MINTSS:35}
\end{align}

Combining  \eqref{eq:prf:MINTSS:35} with \eqref{eq:prf:MINTSS:30},
we have
\begin{align}
    \ln(\omega)-3\epsilon \frac{c_{\max}}{\omega c_{\min}} \ell&\leq \ln(\kappa) - \frac{f(S_{\ell-1})}{f(\mathrm{OPT})} \nonumber
\end{align}

\dots
}
\fi



Since the cost function is monotone,
\begin{align}\label{eqfub}
f(S_{\ell}) = c_{x_\ell} + f(S_{\ell-1}) 
\end{align}

where, $x_\ell = \arg\max_{x\in \Omega\setminus S_{\ell-1}}\hat{\rho}_x(S_{\ell-1})$. Further, let $x_\ell' = \arg\max_{x\in \Omega\setminus S_{\ell-1}}\rho_x(S_{\ell-1})$. Then,
\begin{align*}
\hat{\rho}_{x_\ell}(S_{\ell-1}) &\geq \hat{\rho}_{x_\ell'}(S_{\ell-1})\\
&\geq \rho_{x_\ell'}(S_{\ell-1}) - \frac{2\epsilon}{c_{x_\ell'}}\\
&\geq \frac{\kappa - \tilde{g}(S_{\ell-1})}{f(\mathrm{OPT})} - \frac{2\epsilon}{c_{x_\ell'}}\\
&\geq \frac{\kappa - \min(\hat{g}(S_{\ell-1}), \kappa)}{f(\mathrm{OPT})} - \frac{2\epsilon}{c_{x_\ell'}} - \frac{\epsilon}{f(\mathrm{OPT})}\\
&\geq \frac{\kappa - \min(\hat{g}(S_{\ell-1}), \kappa)}{f(\mathrm{OPT})} - \frac{3\epsilon}{c_{\min}} 
\end{align*}

Thus, we get,
\begin{align*}
\frac{\kappa_{\ell-1} - \kappa_\ell}{c_{x_\ell}} \geq \frac{\kappa_{\ell-1}}{f(\mathrm{OPT})} - \frac{3\epsilon}{c_{\min}}
\end{align*}
Rearranging,
\[\frac{\kappa_{\ell-1} - \kappa_\ell}{\kappa_{\ell-1}} \ge \frac{c_{x_\ell}}{f(\mathrm{OPT})} - \frac{3\epsilon}{c_{\min}}\frac{c_{x_\ell}}{\kappa_{\ell-1}}.\]

\if \discussionSCalgchallenges 1

    \todo[inline]{these steps seem valid, but looking at it now CJQ wonders if is necessary to invoke $\omega$ -- eg in MINTSS proof, their analogous argument worked for every element, that its cost was bounded by cost of optimal set}

\blue
If we try to use argument in MINTSS based on exact $g$, would apply lemma (here have additive error term), then like do below bound ratio by 1, but holds for all $i$ (in MINTSS paper), check if should hold here too.  The ratio is
\begin{align*}
    \frac{\kappa_{i-1} - \kappa_i}{\kappa_{i-1}} 
    &= \frac{(\kappa -\min(\hat{g}(S_{i-1}),\kappa) )- (\kappa -\min(\hat{g}(S_{i}),\kappa))}{\kappa - \min(\hat{g}(S_{i-1}),\kappa)} \\
    &= \frac{\min(\hat{g}(S_{i}),\kappa)- \min(\hat{g}(S_{i-1}),\kappa)  }{\kappa - \min(\hat{g}(S_{i-1}),\kappa)} \\
    &= \frac{\min(\hat{g}(S_{i}),\kappa)- \hat{g}(S_{i-1})  }{\kappa - \hat{g}(S_{i-1})} \tag{$i-1<\ell$ so the while condition is  true, so $\hat{g}(S_{i-1})<\kappa-\omega<\kappa$}\\
    &\leq \frac{\kappa- \hat{g}(S_{i-1})  }{\kappa - \hat{g}(S_{i-1})}\\
    &= 1
\end{align*}

\black
\fi
    
As noted above,  we have $\kappa_{\ell-1} > \omega$ and $\kappa_\ell \le \omega$.  Thus $\frac{\kappa_{\ell-1} - \kappa_\ell}{\kappa_{\ell-1}} \leq 1$.  Using that, we get


\[1 \geq \frac{c_{x_\ell}}{f(\mathrm{OPT})} - \frac{3\epsilon}{c_{\min}}\frac{c_{x_\ell}}{\kappa_{\ell-1}}\]

Rearranging and using the fact that $\kappa_{\ell-1} > \omega$,
\[\frac{c_{x_\ell}}{f(\mathrm{OPT})} \le 1 + \frac{3\epsilon}{c_{\min}}\frac{c_{x_\ell}}{\kappa_{\ell-1}} \le 1 + \frac{3\epsilon}{c_{\min}}\frac{c_{x_\ell}}{\omega}\]

Thus, we get 
\begin{align}
    c_{x_\ell} \le f(\mathrm{OPT}) \left(1 + \frac{3\epsilon}{c_{\min}}\frac{c_{\max}}{\omega}\right). \label{eq:MINTSSprf:cxlbound}
\end{align}

\if \discussionSCalgchallenges 1
\blue
To relate the cost of the final element $c_{x_\ell}$ to the cost of the optimal set $f(\mathrm{OPT})$, which is needed to bound the cost of the final set $f(S_\ell)$, with exact oracles, a simple application of \cref{lemma:density-bound:exact} can yields $c_{x_\ell} \leq f(\mathrm{OPT}$.  
That same argument actually holds for all elements in the greedy sequence (when exact oracle is used) as well  $c_{x_i} \leq f(\mathrm{OPT}$ for $i=1,\dots,\ell$.
However, \cref{lemma:density-bound:exact} only holds for the exact oracle $g$, not $\hat{g}$ (also in \cite{goyal2013minimizing} the lemma was also only shown for the specific sequence of sets chosen by MINTSS with an exact oracle $g$).
Carefully bounding errors, we can use \cref{lemma:density-bound:exact} for the final set $S_\ell$ and best element $x_\ell'$ with respect to the exact oracle $g$ to obtain a bound on the density of the final element $x_\ell$ actually chosen.
We can then show $\frac{\kappa_{\ell-1} - \kappa_\ell}{\kappa_{\ell-1}} \leq 1$.  However, in rearranging to get the ratio $\frac{\kappa_{\ell-1} - \kappa_\ell}{\kappa_{\ell-1}}$, the additive error term $-\frac{3\epsilon}{c_{\min}}\frac{c_{x_\ell}}{\kappa_{\ell-1}}$
has both the cost of the final element $c_{x_\ell}$  and the utility gap $\kappa_{\ell-1}$ for the second to last set. 
   
\black

\fi


\noindent \textbf{5. Resilience Parameter \(\delta\) and Oracle Calls \(\texttt{N}\):}

{

Combining \eqref{eq:c_l_1}, \eqref{eqfub}, and \eqref{eq:MINTSSprf:cxlbound},
\begin{align*}
f(S_\ell) &= c_{x_\ell} + f(S_{\ell-1})\\
&\le f(\mathrm{OPT}) \left(1 + \frac{3\epsilon}{c_{\min}}\frac{c_{\max}}{\omega}\right) + f(\mathrm{OPT})\ln(\frac{\kappa}{\omega}) +f(\mathrm{OPT})6\epsilon \frac{c_{\max}}{\omega c_{\min}} \ell\\
&= f(\mathrm{OPT})\left(1 + \ln\left(\frac{\kappa}{\omega}\right)  \right) + \epsilon\frac{c_{\max}}{\omega c_{\min}}f(\mathrm{OPT})(3 + 6\ell) \tag{rearranging}\\
&\leq f(\mathrm{OPT})\left(1 + \ln\left(\frac{\kappa}{\omega}\right)  \right) + \epsilon\frac{c_{\max}}{\omega c_{\min}}f_{\max}(3 + 6n). 
\end{align*}
Thus, we get $\alpha = 1 + \ln\left(\frac{\kappa}{\omega}\right)$ and $\delta = \frac{c_{\max}}{\omega c_{\min}}f_{\max}(3 + 6n)$.

}

\if 0

\begin{align*}
f(S_\ell) &= c_{x_\ell} + f(S_{\ell-1})\\
&\le f(\mathrm{OPT}) \left(1 + \frac{3\epsilon}{c_{\min}}\frac{c_{\max}}{\omega}\right) + f(\mathrm{OPT})\ln(\frac{\kappa}{\omega}) +f(\mathrm{OPT})6\epsilon \frac{c_{\max}}{\omega c_{\min}} \ell\\
&\le f(\mathrm{OPT})\left(1 + \ln\left(\frac{\kappa}{\omega}\right)  \right) + \epsilon\frac{c_{\max}}{\omega c_{\min}}(3 + 6F_{\max}n)
\end{align*}
Here, last inequality is obtained by using $f(\mathrm{OPT}) \le F_{\max}$ and $\ell \leq n$. Thus, we get $\alpha = 1 + \ln\left(\frac{\kappa}{\omega}\right)$ and $\delta = \frac{c_{\max}}{\omega c_{\min}}(3 + 6F_{\max}n)$.

\fi

Each iteration selects one element and there are at most \(n\)  elements. Further, each iteration queries \(g\) at most \(n\) times. Thus, \(\texttt{N} = \frac{1}{2}n(n+1)\) though for simplicity we use \(\texttt{N} = n^2.\)
\end{proof}

\color{black}

\section{Counter-example to \cite{goyal2013minimizing}'s 
generalization of their Lemma 1 for (multiplicative) inexact oracles for \textsc{GREEDY-MINTSS}}
\label{counter-goyal}

We note that the authors of \cite{goyal2013minimizing} claimed that it is `straightforward' to generalize their result to (multiplicative) inexact oracles, while we note in the following that such generalization 
does not hold. 
We point this out to highlight that care must be taken in extending results involving exact oracles to inexact oracles. 

Consider a modular (thus also monotone submodular) function constraint function $f$ (they use $f$ where we write $g$) with $f(1) = 0.99$, $f(2) = 0.01$, $f(i) = 0$ for $i=3,4,\dots$. Using \cite{goyal2013minimizing}'s notation, for the threshold $\eta = 1$, clearly for the unit-cost case (the only case explicitly considered in \cite{goyal2013minimizing} for inexact oracles), the optimal set is ${1,2}$ and optimal cost is cardinality $k=2$.

Consider a inexact oracle $f'$, with $f'(1)= 0.02$, $f'(2) = 0.01$, with other values equal to 0. The greedy algorithm using $f'$ would pick element 1 and then 2, which is also the sequence the greedy algorithm would pick using the exact oracle. \cite{goyal2013minimizing}'s claimed generalization of their Lemma 1 for this case after the first step has that there must exist an element $x \in \Omega \backslash {1}$ satisfying $$f({1,x}) - f({1}) \geq (\eta - f'({1}) )/k = (1 - 0.02)/2 = 0.98/2 = 0.47,$$ which is not true. They use this claim directly in their proof of resilience for multiplicative noise, and thus their resilience proof as written is not correct.

\if 0

\todo[inline]{previous version of the proof:}
\color{brown}

\begin{proof}

\noindent The proof follows along the following steps:


\noindent \textbf{1. Noisy Utility Propagation:}
The algorithm terminates when \(\hat{g}(S) \geq \kappa - \omega\). Given \(|\hat{g}(S) - g(S)| \leq \epsilon\),  
\[
g(S) \geq \hat{g}(S) - \epsilon \geq (\kappa - \omega) - \epsilon.
\]
Rewriting for \(\beta\):
\[
g(S) \geq \left(1 - \frac{\omega}{\kappa}\right)\kappa - \epsilon = \beta \kappa - \epsilon.
\]
Thus, \(\beta = 1 - \frac{\omega}{\kappa}\) and the utility error term is \(\delta_g \epsilon = \epsilon \implies \delta_g = 1\).


\noindent \textbf{2. Cost Error Analysis:}
Let \(\mathrm{OPT} = \arg\min \{f(S') \mid g(S') \geq \kappa\}\). At each iteration, the algorithm selects \(x_i\) maximizing the noisy marginal density:  
\[
\rho_{x_i} = \frac{\min(\hat{g}(S_{i-1} \cup \{x_i\}),\kappa) - \hat{g}(S_{i-1})}{c_{x_i}}.
\]

{
\color{blue}
\[
x_i \gets \argmax_{x\in \Omega \backslash S_{i-1}} \rho_{x} = \frac{\min(\hat{g}(S_{i-1} \cup \{x\}),\kappa) - \hat{g}(S_{i-1})}{c_{x}}.
\]
}

In order to carry the minimums forward, we use a tilde sign on $g(\cdot)$ to denote $\tilde{g}(\cdot) = \min(g(\cdot), \kappa)$. In addition, we note that $\hat{g}(\cdot)<\kappa$ except for the last iteration. By Lemma \ref{lemma:density-bound:exact} (Density Bound), the true marginal gain satisfies:  
\[
 \frac{\tilde{g}(S_{i-1} \cup \{x_i\}) - \tilde{g}(S_{i-1})}{c_{x_i}} \geq \frac{\kappa - \tilde{g}(S_{i-1})}{f(\mathrm{OPT})}.
\]

    \todo[inline]{
    Lemma 6 
    only guarantees that there is at least one such element, wrt to $g$, which could be different one than selected with ghat}

{  
\color{blue}

let $x_i'$ denote the element with largest marginal density (with respect to the true function $\tilde{g}$).

\[
x_i' \gets \argmax_{x\in \Omega \backslash S_{i-1}} \rho_{x} = \frac{\min(\tilde{g}(S_{i-1} \cup \{x\}),\kappa) - \tilde{g}(S_{i-1})}{c_{x}}.
\]

By Lemma \ref{lemma:density-bound:exact} (Density Bound), the largest true marginal gain satisfies

\todo[inline]{adjust application of Lemma 6, actual statement is for thresholded, where apply only one term thresholded (knowing that $\hat{g}$ can't be too large), but revise to be careful below}

\[
 \frac{\tilde{g}(S_{i-1} \cup \{x_i'\}) - \tilde{g}(S_{i-1})}{c_{x_i'}} \geq \frac{\kappa - \tilde{g}(S_{i-1})}{f(\mathrm{OPT})}.
\]

In a worst case $x_i$ and $x_i'$ could differ,
we can bound its density with respect to $\tilde{g}$.  
\begin{align*}
\rho_{x_i} 
&=\max_{x\in \Omega \backslash S_{i-1}} \frac{\min(\hat{g}(S_{i-1} \cup \{x\}),\kappa) - \hat{g}(S_{i-1})}{c_{x}} \tag{$x_i$ best density wrt $\hat{g}$}\\
&\geq \frac{\min(\hat{g}(S_{i-1} \cup \{x_i'\}),\kappa) - \hat{g}(S_{i-1})}{c_{x_i'}} \tag{comparing to best in expectation $x_i'$} \\
&\geq \frac{\min(\tilde{g}(S_{i-1} \cup \{x_i'\}),\kappa) - \tilde{g}(S_{i-1}) - 2\epsilon}{c_{x_i'}} \tag{accounting for $\hat{g}$ error} \\
&\geq \frac{\kappa - \tilde{g}(S_{i-1})}{f(\mathrm{OPT})} -\frac{2\epsilon }{c_{x_i'}}. \tag{\cref{lemma:density-bound:exact} } \\
&\geq \frac{(\kappa - \hat{g}(S_{i-1}) - \epsilon)}{f(\mathrm{OPT})} -\frac{2\epsilon }{c_{x_i'}}. \tag{$\hat{g}$ error } \\
\end{align*}

Thus, we have 
\begin{align*}
\frac{\min(\hat{g}(S_{i-1} \cup \{x_i\}),\kappa) - \hat{g}(S_{i-1})}{c_{x_i}}
&\geq \frac{\kappa - \hat{g}(S_{i-1}) }{f(\mathrm{OPT})} - \frac{ \epsilon}{f(\mathrm{OPT})} -\frac{2\epsilon }{c_{x_i'}}.
\end{align*}

First we multiply $c_{x_i}$ both side and then adding and subtracting $\kappa$ on the LHS we get,
\[
-(\kappa-\min(\hat{g}(S_{i-1} \cup \{x_i\}),\kappa)) + \kappa-\hat{g}(S_{i-1})\geq c_{x_i} \frac{(\kappa - \hat{g}(S_{i-1}) )}{f(\mathrm{OPT})}-c_{x_i}\frac{\epsilon}{f(\mathrm{OPT})}-c_{x_i}\frac{2\epsilon }{c_{x_i'}}.
\]

}

With noise \(\epsilon\),   
\[
\rho_{x_i} \ge \frac{\tilde{g}(S_{i-1} \cup \{x_i\}) - \tilde{g}(S_{i-1})-2\epsilon }{c_{x_i}}\geq \frac{\kappa - \tilde{g}(S_{i-1})}{f(\mathrm{OPT})} -\frac{2\epsilon }{c_{x_i}} \geq \frac{(\kappa - \hat{g}(S_{i-1}) - \epsilon)}{f(\mathrm{OPT})} -\frac{2\epsilon }{c_{x_i}}.
\]
Thus, we have 
\[
\frac{\min(\hat{g}(S_{i-1} \cup \{x_i\}),\kappa) - \hat{g}(S_{i-1})}{c_{x_i}}\geq \frac{\kappa - \hat{g}(S_{i-1})}{f(\mathrm{OPT})}-\frac{\epsilon}{f(\mathrm{OPT})}-\frac{2\epsilon }{c_{x_i}}.
\]

First we multiply $c_{x_i}$ both side and then adding and subtracting $\kappa$ on LHS we get,
\[
-(\kappa-\min(\hat{g}(S_{i-1} \cup \{x_i\}),\kappa)) + \kappa-\hat{g}(S_{i-1})\geq c_{x_i} \frac{(\kappa - \hat{g}(S_{i-1}) )}{f(\mathrm{OPT})}-c_{x_i}\frac{\epsilon}{f(\mathrm{OPT})}-c_{x_i}\frac{2\epsilon }{c_{x_i}}.
\]
\vspace{0.2cm}

\noindent \textbf{3. Recursive Cost Bound:}
Define \(\kappa_i = \kappa - \min(\hat{g}(S_i),\kappa)\). At iteration \(i\):
\[
-\kappa_i + \kappa_{i-1}\geq c_{x_i} \frac{\kappa_{i-1}}{f(\mathrm{OPT})}-c_{x_i}\frac{\epsilon}{f(\mathrm{OPT})}-c_{x_i}\frac{2\epsilon }{c_{x_i}}.
\]
\[
-\kappa_i + \kappa_{i-1}\geq c_{x_i} \frac{\kappa_{i-1}}{f(\mathrm{OPT})}-c_{x_i}\frac{\epsilon}{f(\mathrm{OPT})}-2\epsilon.
\]
\[
\kappa_{i} \leq \kappa_{i-1}\left(1 - \frac{c_{x_i}}{f(\mathrm{OPT})}\right)+c_{x_i}\frac{\epsilon}{f(\mathrm{OPT})}+2\epsilon.
\]

\[
\kappa_i \leq \kappa_{i-1}\left(1 - \frac{c_{x_i}}{f(\mathrm{OPT})}\right) + \frac{\epsilon c_{x_i}}{f(\mathrm{OPT})}+2\epsilon.
\]
Using \(c_{x_i} \leq c_{\max}\) and \(f(\mathrm{OPT}) \geq c_{\min}\) assuming $\mathrm{OPT} \neq \emptyset$ and $\frac{c_{\max}}{c_{\min}} \geq 1$:  
\[
\kappa_i \leq \kappa_{i-1}\left(1 - \frac{c_{x_i}}{f(\mathrm{OPT})}\right) + 3\epsilon \frac{c_{\max}}{c_{\min}}
\]
\[
\le \kappa_{i-1} e^{ - \frac{c_{x_i}}{f(\mathrm{OPT})}}+ 3\epsilon \frac{c_{\max}}{c_{\min}} 
\]

{
\color{blue}

\[
-\kappa_i + \kappa_{i-1}\geq c_{x_i} \frac{\kappa_{i-1}}{f(\mathrm{OPT})}-c_{x_i}\frac{\epsilon}{f(\mathrm{OPT})}-c_{x_i}\frac{2\epsilon }{c_{x_i'}}.
\]
Rearranging,

\[
\kappa_i \leq \kappa_{i-1}\left(1 - \frac{c_{x_i}}{f(\mathrm{OPT})}\right) + \frac{\epsilon c_{x_i}}{f(\mathrm{OPT})}+\frac{2\epsilon c_{x_i}}{c_{x_i'}}.
\]

\todo[inline]{very minor ``assuming $\mathrm{OPT} \neq \emptyset$ and $\frac{c_{\max}}{c_{\min}} \geq 1$'' first part should probably preclude earlier; second might be immediate from defs}
Using \(c_{x_i} \leq c_{\max}\),  \(f(\mathrm{OPT}) \geq c_{\min}\), and \(c_{x_i'} \geq c_{\min}\), and assuming $\mathrm{OPT} \neq \emptyset$ and $\frac{c_{\max}}{c_{\min}} \geq 1$:  
\[
\kappa_i \leq \kappa_{i-1}\left(1 - \frac{c_{x_i}}{f(\mathrm{OPT})}\right) + 3\epsilon \frac{c_{\max}}{c_{\min}}
\]
\[
\le \kappa_{i-1} e^{ - \frac{c_{x_i}}{f(\mathrm{OPT})}}+ 3\epsilon \frac{c_{\max}}{c_{\min}} .
\]

}

\vspace{0.2cm}

\noindent \textbf{4. Telescoping Sum:}  

    \todo[inline]{minor - switching indices to $\ell-1$? I think in earlier version may have had $S_\ell$ as final but is it defined that way earlier in this proof?}

    \todo[inline]{minor, if unrolling for $\ell-1$ iterations, should RHS multiplier be $\ell-1$ not $\ell$ (eg $\ell-1=1$ case where have one inequality don't think consistent with above)}
Unrolling the recursion over \(\ell-1\) iterations, we obtain:
\begin{align}
\kappa_{\ell-1} &\leq \kappa e^{ - \sum_{j=1}^{\ell-1}\frac{c_{x_j}}{f(\mathrm{OPT})}}+ 3\epsilon \frac{c_{\max}}{c_{\min}} \ell \\
   &\leq \kappa e^{ - \frac{f(S_{\ell-1})}{f(\mathrm{OPT})}}+ 3\epsilon \frac{c_{\max}}{c_{\min}}\ell. \label{eq:l_1}
\end{align}

At termination it follows that $\kappa - {\hat{g}}(S_\ell) \le \omega$  where $\omega$ is the threshold parameter.\\
Using $\kappa - {\hat{g}}(S_{\ell-1}) > \omega$, we have $\kappa_{\ell-1} >\omega$. 

Thus,
\begin{align}
    \omega&<  \kappa_{\ell-1} \nonumber \\
    \omega&\leq \kappa e^{ - \frac{f(S_{\ell-1})}{f(\mathrm{OPT})}}+3\epsilon \frac{c_{\max}}{c_{\min}} \ell \tag{using \ref{eq:l_1}}
    \\
    \omega-3\epsilon \frac{c_{\max}}{c_{\min}} \ell&\leq \kappa e^{ - \frac{f(S_{\ell-1})}{f(\mathrm{OPT})}}
    \\
    \ln(\omega-3\epsilon \frac{c_{\max}}{c_{\min}} \ell)&\leq \ln(\kappa e^{ - \frac{f(S_{\ell-1})}{f(\mathrm{OPT})}})
     \\
    \ln(\omega)-6\epsilon \frac{c_{\max}}{c_{\min}} \ell&\leq \ln(\kappa) - \frac{f(S_{\ell-1})}{f(\mathrm{OPT})}\tag{Using $\ln(a)-\ln(a-b)\le 2b/a$ for $b/a<0.79$}
    \\
   \frac{f(S_{\ell-1})}{f(\mathrm{OPT})} &\leq \ln(\frac{\kappa}{\omega}) +6\epsilon \frac{c_{\max}}{c_{\min}} \ell    \\
   {f(S_{\ell-1})}&\leq f(\mathrm{OPT})\ln(\frac{\kappa}{\omega}) +f(\mathrm{OPT})6\epsilon \frac{c_{\max}}{c_{\min}} \ell \label{eq:c_l_1}
   \end{align}
$b/a<0.79$ holds in our case since $\epsilon \le \omega\frac{c_{\min}}{4 n  c_{\max}}$ by assumption. 

\todo[inline]{in the $2b/a$ above term, I think the dividing ``$a$''=$\omega$ is missing}

\todo[inline]{CJQ continue from here; can see Lemma 6 invoked again below using $x_i$, double check changes}

We note that:
\begin{align}\label{eqfub}
f(S_{\ell}) &\leq c_{x_\ell} + f(S_{\ell-1}) 
\end{align}

Note that  by Lemma \ref{lemma:density-bound:exact} (Density Bound), the true marginal gain satisfies:  
\[
 \frac{\tilde{g}(S_{\ell}) - \tilde{g}(S_{\ell-1})}{c_{x_\ell}} \geq \frac{\kappa - \tilde{g}(S_{\ell-1})}{f(\mathrm{OPT})}
\]
Thus, 
\begin{equation}
c_{x_\ell} \le  f(\mathrm{OPT}) \frac{\tilde{g}(S_{\ell}) - \tilde{g}(S_{\ell-1})}{\kappa - \tilde{g}(S_{\ell-1})}.
\end{equation}

Since $g$ and $\hat{g}$ are within $\epsilon$ of each other, we have

\begin{equation}
c_{x_\ell} \le  f(\mathrm{OPT}) \frac{\min(\hat{g}(S_{\ell}),\kappa) - \tilde{g}(S_{\ell-1})}{\kappa - \tilde{g}(S_{\ell-1})} +f(\mathrm{OPT})\epsilon \frac{1}{\kappa - \tilde{g}(S_{\ell-1})}.
\end{equation}
We note that $\frac{\min(\hat{g}(S_{\ell}),\kappa) - \tilde{g}(S_{\ell-1})}{\kappa - \tilde{g}(S_{\ell-1})}\le 1$. Thus,

\begin{equation}
c_{x_\ell} \le  f(\mathrm{OPT}) +  f(\mathrm{OPT})\epsilon \frac{1}{\kappa - \tilde{g}(S_{\ell-1})}.
\end{equation}

Further, we have that $\tilde{g}(S_{\ell-1})\le \hat{g}(S_{\ell-1})+\epsilon <\kappa-\omega+\epsilon$. Thus, 

\begin{equation}
c_{x_\ell} \le  f(\mathrm{OPT}) +  f(\mathrm{OPT})\epsilon \frac{1}{\omega - \epsilon}.
\end{equation}

Since each iteration selects one element and there are at most \( n \) elements, we bound \(\ell \leq n\).  {Since $f(\mathrm{OPT})$ is sum of the cost of the individual elements, $f(\mathrm{OPT})$ is also upper bounded by $nc_{\max}$}. Using this and substituting in \eqref{eqfub} and using \eqref{eq:c_l_1}, we have:

\begin{align}
f(S_{\ell})&\leq f(\mathrm{OPT}) + f(\mathrm{OPT})\ln\left(\frac{\kappa}{\omega}\right) + 6\epsilon \frac{{c_{\max}}^2n^2}{c_{\min}}+ \epsilon \frac{n c_{\max}}{\omega-\epsilon}\tag{From equation \eqref{eq:c_l_1}}\\
&\leq f(\mathrm{OPT})\left(1 + \ln\left(\frac{\kappa}{\omega}\right)\right) + 6\epsilon \frac{{c_{\max}}^2n^2}{c_{\min}} + 2\epsilon \frac{n c_{\max}}{\omega}.
\end{align}
We note that the last step is because $\epsilon \le \omega\frac{c_{\min}}{4 n  c_{\max}}\le \omega/2$.

Thus, \(\alpha = 1 + \ln\left(\frac{\kappa}{\omega}\right)\) and \(\delta_f = 6\frac{{c_{\max}}^2n^2}{c_{\min}}+2 \frac{n c_{\max}}{\omega}\).
\vspace{0.2cm}

\noindent \textbf{5. Resilience Parameter \(\delta\) and Oracle Calls \(\texttt{N}\):}  
Combining utility and cost errors:  
\[
\delta = \max(\delta_f,\delta_g ) = \max\left(6\frac{{c_{\max}}^2n^2}{c_{\min}}+2 \frac{n c_{\max}}{\omega},1\right) =6\frac{{c_{\max}}^2n^2}{c_{\min}}+2 \frac{n c_{\max}}{\omega}.
\]

Each iteration selects one element and there are at most \(n\)  elements. Further, each iteration querries \(g\) at most \(n\) times. Thus,  \(\texttt{N} = n^2.\)
\end{proof}

\fi

\if 0
\color{blue}

\subsection{Proof of Lemma~\ref{lemma:log-ineq:MINTSS:bound2}}
\label{sec:prf:lemma:log-ineq:MINTSS:bound2}
\begin{proof}

We recall the well-known inequality
$\ln(1+x) \geq \frac{x}{1+x} $ for all $x>-1$ and $x\neq 0$. 
For $a,b\in \mathbb{R}_+$ such that $b<a$, 
\begin{align}
\ln(a-b) &= \ln( a(1-\frac{b}{a} ) )\nonumber\\
&= \ln( a) + \ln(1-\frac{b}{a} ) \nonumber\\
&\geq \ln( a) + \frac{(-\frac{b}{a})}{1+(-\frac{b}{a})} \nonumber\\
&= \ln( a) - \frac{b}{a-b}. \label{eq:lemma:log-ineq:5} 
%
\end{align}

For $\frac{b}{a}\leq \frac{1}{2}$,
\begin{align}
    \frac{b}{a}&\leq \frac{1}{2} \nonumber\\
    %
    %
    \implies \quad 2b^2 &\leq ab \tag{multiply by $2ab$}\\
    \implies \quad ab &\leq 2ab - 2b^2 \tag{add $ab-2b^2$} \\
    \implies \quad ab &\leq (2b)(a-b) \nonumber\\
    \implies \quad \frac{b}{a-b} &\leq \frac{2b}{a} \tag{divide by $a(a-b)$}\\
    \implies \quad 
    \quad -\frac{b}{a-b} &\geq -\frac{2b}{a} . \label{eq:lemma:log-ineq:15}
\end{align}
The lemma follows combining \eqref{eq:lemma:log-ineq:15} with \eqref{eq:lemma:log-ineq:5}.
\end{proof}

\fi
\if 0
\begin{algorithm}[t]
\caption{\textsc{Greedy-MINTSS}}\label{alg:mintss}
\begin{algorithmic}[1]
\REQUIRE Set of Base arms $\Omega$, $\kappa$, $\epsilon'$, $f(\cdot)$, $\{g(i)\}_{i\in \Omega}$.
\STATE $S\leftarrow \Phi$
\WHILE{$f(S) < \kappa - \epsilon'$}
\STATE $u \leftarrow arg\max_{i\in \Omega\setminus S}\frac{\min(f(S\cup \{i\}),\kappa) - f(S)}{g(i)}$
    \STATE $S \leftarrow S\cup \{i\}$
\ENDWHILE
\end{algorithmic}
\end{algorithm}
\fi 

\if 0
First, we provide the algorithm \MINTSS~ in Algorithm \ref{alg:mintss}. Next, we restate the following Lemma that we will be using to prove the resilience property of \MINTSS. This Lemma only uses the monotonicity and submodularity property of function $f$ and uses linearity property of cost function. To be consistent with the notation in the paper, we will use the cost function as $g(S) = \sum_{i\in S} g(i)$, here $g(i)$ represent the cost of the base arm which we assume is known and is not stochastic.

\begin{lemma} \label{lemma:density-bound:exact}
For a non-negative, monotone non-decreasing submodular set function $f:\mathcal{X} \to \mathbb{R}^{\geq 0}$ and positive monotone cost function $c:\mathcal{X} \to \mathbb{R}^{\geq 0}$,
for any set $S \subset \mathcal{X}$, there is an element $x \in \mathcal{X} \backslash S$ such that
\begin{align}
\frac{f(S \cup \{x\}) - f(S)}{g(x)} \geq \frac{\kappa - f(S)}{g(S^*)} ,\label{eq:lemma:density-bound:exact}
\end{align}
where $S^*$ is the minimal cost set satisfying $f(S^*)\geq \kappa$.
\end{lemma}

\subsection{Proof of Theorem 6.1}\label{proof:resilience_mintss}
Let $\hat{f}$ be the noisy estimate on function $f$, which satisfies $|\hat{f}(S) - f(S)| < \epsilon$, for some $\epsilon > 0$. 
\begin{proof}
The algorithm with noisy estimate $\hat{f}$ at each iteration $i$ will add the element $x$ with the largest value of $\frac{\hat{f}(S\cup \{s\}) - \hat{f}(S)}{g(x)}$. Since $\hat{f}$ is only $\epsilon$ away from $f$, we also note that $\frac{\hat{f}(S\cup \{x\}) - \hat{f}(S)}{g(x)} \ge \frac{f(S\cup\{x\}) - f(S) - 2\epsilon}{g(x)} \ge \frac{\kappa - f(S)}{g(S^*)} \ge \frac{\kappa - \hat{f}(S) + \epsilon}{g(S^*)}$ from Lemma \ref{lemma:density-bound:exact}. Now, during each iteration $i=1,\dots,\ell$, let us denote the element $x_i = S_i \backslash S_{i-1}$ which is added, we have:.  

\begin{align*}
&\frac{f(S_i) - f(S_{i-1})}{g(x_i)} \ge \frac{\kappa - f(S_i)}{g(S^*)}\\
&\implies 
    \frac{\hat{f}(S_i) - \hat{f}(S_{i-1})+2\epsilon}{g(x_i)} \ge \frac{\kappa - \hat{f}(S_{i-1})-\epsilon}{g(S^*)}\\
    &\implies \frac{g(x_i)}{g(S^*)}(\kappa - \hat{f}(S_{i-1}) - \epsilon) \le \hat{f}(S_i) - \hat{f}(S_{i-1})+2\epsilon
    \end{align*}

    Let $\kappa_{i-1} = \kappa - \hat{f}(S_{i-1})$, then:
    \begin{align*}
    &\frac{g(x_i)}{g(S^*)}(\kappa_{i-1} - \epsilon) \le \hat{f}(S_i) - \hat{f}(S_{i-1})+2\epsilon\\
    &-\frac{g(x_i)}{g(S^*)}\kappa_{i-1} + \epsilon\frac{g(x_i)}{g(S^*)} \ge -\hat{f}(S_i) + \hat{f}(S_{i-1}) - 2\epsilon\\
    &\implies -\frac{g(x_i)}{g(S^*)}\kappa_{i-1} + \epsilon\frac{g(x_i)}{g(S^*)} \ge \kappa_i - \kappa_{i-1} - 2\epsilon\\
    &\implies \kappa_i \leq \kappa_{i-1}(1-\frac{g(x_i)}{g(S^*)}) + 2\epsilon +\epsilon\frac{g(x_i)}{g(S^*)}
    \end{align*}

Using the well-known inequality $1+z \leq e^{z}$ for all $z\in \mathbb{R}$, and that the function is normalized ($f(\Phi)=\hat{f}(\Phi)=0$)

\begin{align*}
&\kappa_i \le \kappa_{i-1}e^{-\frac{g(x_i)}{g(S^*)}} + 2\epsilon +\epsilon\frac{g(x_i)}{g(S^*)}\\
&\le \kappa_{i-1}e^{-\frac{g(x_i)}{g(S^*)}} + 3\epsilon\frac{g_{max}}
{g_{min}}\\
&\le \kappa e^{-\sum_{j=1}^i \frac{g(x_j)}{g(S^*)}}\\
&\le \kappa e^{ - \sum_{j=1}^i\frac{g(x_j)}{g(S^*)}}+ 3\epsilon \frac{g_{max}}{g_{min}} \sum_{j=1}^{i}\prod_{k=j+1}^{i} e^{-\frac{g{(x_k)}}{g{(S^*)}}} 
    \\&\leq \kappa e^{ - \sum_{j=1}^i\frac{g(x_j)}{g(S^*)}}+ 3\epsilon \frac{g_{max}}{g_{min}} \sum_{j=1}^{i}\prod_{k=j+1}^{i} e^{-\frac{g{(x_k)}}{g{(S^*)}}}  
   \\
   &\leq \kappa e^{ - \sum_{j=1}^i\frac{g(x_j)}{g(S^*)}}+ 3\epsilon \frac{g_{max}}{g_{min}}(i-1) 
\end{align*}
%

Thus for $i=\ell-1$ we have
\begin{align}
    \kappa_{\ell-1} &\leq \kappa e^{ - \sum_{j=1}^{\ell-1}\frac{g(x_j)}{g(S^*)}} +3\epsilon\frac{g_{max}}{g_{min}} (\ell-1) \nonumber\\
    &= \kappa e^{ - \frac{g(S_{\ell-1})}{g(S^*)}}+3\epsilon \frac{g_{max}}{g_{min}} (\ell-1) . 
    \label{eq:prf:thm:MINTSS:offline:exact:15}
\end{align}
where \eqref{eq:prf:thm:MINTSS:offline:exact:15} follows from the cost function $g$ being submodular.

Algorithm stops when $\kappa - {\hat{f}}(S_\ell) \le \epsilon'$  where $\epsilon'$ is the threshold parameter. This implies that when the algorithm stops, we have 
\begin{equation}
\label{eq:f_s_l}
\kappa - f(S_\ell) \le \kappa - \hat{f}(S_\ell) + \epsilon \le \epsilon' + \epsilon
\end{equation}
This gives us the robustness condition on threshold guarantees i.e. $f(S_\ell) \ge \kappa - \epsilon' - \epsilon$ with $\delta = 1$. Now, to bound cost approximation, note that if the algorithm has not stopped at $\ell - 1$ rounds, I must have\\

\begin{align*}
&\hat{f}(S_{\ell-1}) < \kappa - \epsilon'\\
&\implies  \epsilon'<  \kappa_{\ell-1}  \\
&\implies  \epsilon'\leq \kappa e^{ - \frac{g(S_{\ell-1})}{g(S^*)}}+3\epsilon \frac{c_{max}}{c_{min}} (\ell) \tag{using \eqref{eq:prf:thm:MINTSS:offline:exact:15}}\\
&\implies \epsilon'-3\epsilon \frac{c_{max}}{c_{min}} (\ell)\leq \kappa e^{ - \frac{g(S_{\ell-1})}{g(S^*)}}\\
&\implies \log(\epsilon'-3\epsilon \frac{c_{max}}{c_{min}} (\ell))\leq \log(\kappa e^{ - \frac{g(S_{\ell-1})}{g(S^*)}})\\
&\implies \log(\epsilon'(1- \frac{3\epsilon c_{max}}{ \epsilon'c_{min}} (\ell)))\leq \log(\kappa) - \frac{g(S_{\ell-1})}{g(S^*)} \tag{\shw{How?We know $\log(ab)= \log(a)+\log(b)$} on R.H.S }\\
&\implies \log(\epsilon')+\log(1- \frac{3\epsilon c_{max}}{ \epsilon'c_{min}} (\ell))\leq \log(\kappa) - \frac{g(S_{\ell-1})}{g(S^*)} \\
&\implies \log(\epsilon')-\log(3\epsilon \frac{c_{max}}{\epsilon'c_{min}} (\ell))\leq \log(\kappa) - \frac{g(S_{\ell-1})}{g(S^*)} \tag{$\log(1-x)>-x$, s.t. $x<1$ }\\
&\implies \frac{g(S_{\ell-1})}{g(S^*)} \leq \log(\frac{\kappa}{\epsilon'}) +(3\epsilon \frac{c_{max}}{\epsilon'c_{min}} (\ell))    \\
&\implies {g(S_{\ell-1})}\leq g(S^*)\log(\frac{\eta}{\epsilon'}) +g(S^*)( \frac{3\epsilon c_{max}}{\epsilon'c_{min}} (\ell)) \label{eq:c_l_1} \\  
   \tag{Which one is bigger epsilon' or the other?}
\end{align*}
Now,
\begin{align}
    g(S_\ell) &= g(x_\ell) + g(S_{\ell-1}) \tag{$g$ is linear}
    \\
   &\leq g(S^*)+ g(S^*)\log(\frac{\kappa}{\epsilon'})+ g(S^*)(3\epsilon \frac{c_{max}}{\epsilon'c_{min}} (\ell))  \\  
   &\leq g(S^*)(1+ \log(\frac{\kappa}{\epsilon'}))+ g(S^*)(3\epsilon \frac{c_{max}}{\epsilon'c_{min}} (\ell))\tag{we know, $x> \log(x)$}
   \\
   & \leq g(S^*)(1+ \log(\frac{\kappa}{\epsilon'}))+ 3(\kappa \frac{c_{max}}{\epsilon'c_{min}} (n))\epsilon \tag{\shw{how?}}\label{eq:c_s_l}
\end{align}
The resilience guarantee follows from \eqref{eq:f_s_l} and \eqref{eq:c_s_l} and \shw{the fact that each time the algorithm makes maximum of $n$ calls to find a subset $f(S)>\eta-\epsilon'$.}
\begin{align}
    \delta =\max(1, 3(\kappa n\frac{c_{max}}{\epsilon'c_{min}}))\tag{subject to  $\frac{3\epsilon c_{max}}{\epsilon'c_{min}} (\ell)\le1$}
\end{align}
\end{proof}


\citet{goyal2013minimizing} discuss an extension of \cref{lemma:density-bound:exact} using inexact oracles though that extension is based on oracles with small values ($\hat{f}(S)\leq f(S)$; motivated by properties of influence maximization estimators).  Namely in \eqref{eq:lemma:density-bound:exact} the right hand side numerator is $\eta - \hat{f}(S)$.  That holds if $\hat{f}(S)\leq f(S)$, but we consider two-sided error for which the same result does not hold.

We note that only the right hand side of \eqref{eq:lemma:density-bound:exact}, the density of $x$, depends on the element $x$.  When we later invoke \cref{lemma:density-bound:exact}, we will do so for the element with the largest density. \shw{This line is not clear and maynot be needed.}
\fi

\section{\textsc{GREEDY-SCSC} Algorithm}
\label{apdx:GreedySCSC}
Here we include the pseudo-code for \textsc{Greedy-SCSC} by \cite{wan2010greedy}.  

    \begin{algorithm}[H]
\caption{\textsc{Greedy-SCSC}}\label{alg:greedySCSC}
\begin{algorithmic}[1]
    \STATE {\bfseries Require:} Submodular oracle 
    $g$,
    submodular cost function $f$ and threshold $\kappa$.
    \STATE Initialize $S \leftarrow \emptyset$.
    \WHILE {$
    g(S)
    < \kappa$}
        \STATE 
        $u \gets \argmax\limits_{i \in \Omega \setminus S} \frac{\min(
        g(S \cup \{i\}), 
        \kappa) - \min(
        g(S),
        \kappa)}{f(\{i\})}.$
        \STATE Update $S \leftarrow S \cup \{u\}$.
    \ENDWHILE
    \RETURN $S$.
\end{algorithmic}
\end{algorithm}


\section{Algorithm and Proof of Fair Submodular Maximization}\label{apd_fsm}

\begin{algorithm}[ht]
    \caption{\textsc{greedy-fairness-bi}}\label{alg:fairness-bi}
    \begin{algorithmic}[1]
        \STATE \textbf{Require: } $S\leftarrow\emptyset$, $\omega$, Partition set $P = \{\Omega_1, \Omega_2, \ldots, \Omega_C\}$, approximate fairness matroid $\mathcal{M}_{1/\omega} = \mathcal{M}_{1/\omega}(P,\kappa/\omega,\vec{l}/\omega,\vec{u}/\omega)
        $ 
        \STATE \textbf{Output: }$S\in \Omega$
        \WHILE{$\exists i$ s.t. $S\cup\{i\}\in\mathcal{M}_{1/\omega}$}
        \STATE $V \gets\{i\in \Omega \mid S \cup \{i\} \in \mathcal{M}_{1/\omega}\}$
        \STATE Find $i^* \in\arg\max_{i\in V\setminus S} \left( f(S\cup \{i\}) - f(S) \right)$
        \STATE  Add $i^*$ to $S$: $S \leftarrow S \cup \{i^*\}$
        \ENDWHILE
        \RETURN $S$
    \end{algorithmic}
\end{algorithm}

Before proving \cref{thm:fsm-resilience}, we first state a technical lemma from \cite{chen2024fair} relating the feasible regions of the problems with strict ($ 1/\omega = 1$) and relaxed ($ 1/\omega > 1$) fairness constraints, characterized as matroids  $\mathcal{M}_{1/\omega}(P,\kappa/\omega,\vec{l}/\omega,\vec{u}/\omega) = \{S \subseteq \Omega: |S \cap \Omega_c| \le \frac{u_c}{\omega}, \forall c\in[C], \sum_{c\in [C]}\max\{|S \cap \Omega_c|, \frac{l_c}{\omega}\} \le \frac{\kappa}{\omega}\}$.

\begin{lemma}[\cite{chen2024fair}]
\label{lem:fairsmc}
For any set $S \in \mathcal{M}_{1/\omega}(P, \kappa/\omega, \vec{l}/\omega, \vec{u}/\omega)$ with 
$|S| = \frac{\kappa}{\omega}$, $T \in \mathcal{M}_1(P, \kappa, \vec{l}, \vec{u})$, with $|T| = \kappa$, 
and any permutation of $S = (s_1,s_2,\ldots,s_{\kappa/\omega})$, 
there exists a sequence $E = (e_1,e_2,\ldots e_{\kappa/\omega})$ such that each element in $T$ appears $1/\omega$ times in $E$ and that $S_i \cup \{e_{i+1}\} \in \mathcal{M}_{1/\omega},\;\; \forall i\in \{0,1,\ldots,\kappa/\omega\}$ where 
$S_i = (s_1,s_2,\ldots,s_i)$ and 
$S_0 = \emptyset$.
\end{lemma}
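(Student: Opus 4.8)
The statement is an interleaving/exchange result, so the plan is to recast it as a bipartite matching problem and invoke Hall's theorem, drawing the Hall condition from the matroid structure of the fairness constraints. For each prefix index $i \in \{0,1,\ldots,\kappa/\omega-1\}$, define the feasible set $F_i = \{t \in T : S_i \cup \{t\} \in \mathcal{M}_{1/\omega}\}$. The goal is to pick, for each $i$, an element $e_{i+1} \in F_i$ so that every $t \in T$ is used exactly $1/\omega$ times. I would build the bipartite graph $G$ whose left vertices are the indices $\{0,\ldots,\kappa/\omega-1\}$ and whose right vertices are $1/\omega$ identical copies of each $t \in T$ (a total of $|T|/\omega = \kappa/\omega$ right vertices, using $1/\omega \in \mathbb{N}_+$), joining index $i$ to every copy of each $t \in F_i$. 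A perfect matching of $G$ is exactly a sequence $E$ of the desired form: matching index $i$ to a copy of $t$ sets $e_{i+1}=t$, each index is matched once, and the copy budget forces each $t$ to appear exactly $1/\omega$ times.

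The heart of the argument is the augmentation bound $|F_i| \ge \kappa - \omega i$, which I would obtain by passing to a blow-up matroid. Let $\hat{\Omega} = \Omega \times \{1,\ldots,1/\omega\}$ replace each arm by $1/\omega$ parallel copies, partitioned into groups $\Omega_c \times \{1,\ldots,1/\omega\}$, and let $\hat{\mathcal{M}}$ be the fairness matroid on $\hat{\Omega}$ with per-group bounds $u_c/\omega$, $l_c/\omega$ and cardinality $\kappa/\omega$; this is again an instance of the fairness matroid, hence itself a matroid. In $\hat{\mathcal{M}}$, the image $\hat{S}_i = S_i \times \{1\}$ is independent of size $i$, while the full blow-up $\tilde{T} = T \times \{1,\ldots,1/\omega\}$ is independent of size $\kappa/\omega$, since its group counts $|T \cap \Omega_c|/\omega \le u_c/\omega$ and the sum $\sum_c \max\{|T \cap \Omega_c|/\omega, l_c/\omega\} \le \kappa/\omega$ both follow from $T \in \mathcal{M}_1$. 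Applying the standard matroid augmentation inequality $|\{b \in B : A \cup \{b\} \in \mathcal{I}\}| \ge |B| - |A|$ with $A = \hat{S}_i$ and $B = \tilde{T}$ yields at least $\kappa/\omega - i$ addable copies; since each $t$ contributes at most $1/\omega$ copies, these arise from at least $(\kappa/\omega - i)\omega = \kappa - \omega i$ distinct $t \in T$, each lying in $F_i$ (the edge case $t \in S_i$ is harmless, as then $S_i \cup \{t\} = S_i \in \mathcal{M}_{1/\omega}$ already).

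With this bound, verifying Hall is routine. For $P \subseteq \{0,\ldots,\kappa/\omega-1\}$ with $i_0 = \min P$, the prefixes are nested and $\mathcal{M}_{1/\omega}$ is downward closed, so $F_{i_0} \supseteq F_i$ for all $i \in P$; hence the neighborhood $N(P)$ contains all copies of $F_{i_0}$, giving $|N(P)| \ge (1/\omega)|F_{i_0}| \ge (1/\omega)(\kappa - \omega i_0) = \kappa/\omega - i_0$. Since $P \subseteq \{i_0,\ldots,\kappa/\omega-1\}$ we have $|P| \le \kappa/\omega - i_0 \le |N(P)|$, so Hall's condition holds and a perfect matching exists. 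As both sides of $G$ have equal size $\kappa/\omega$, the matching saturates the right side too, so each $t$ is used exactly $1/\omega$ times, which produces the required sequence $E$ with $S_i \cup \{e_{i+1}\} \in \mathcal{M}_{1/\omega}$ for every $i$.

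The main obstacle is the augmentation bound, and in particular the necessity of the blow-up. Applying augmentation to $T$ directly inside $\mathcal{M}_{1/\omega}$ (noting $\mathcal{M}_1 \subseteq \mathcal{M}_{1/\omega}$ since $1/\omega \ge 1$) only gives $|F_i| \ge |T| - i = \kappa - i$, which is too weak because $\omega \le 1$ makes the target $\kappa - \omega i$ larger; it is exactly the passage to the size-$\kappa/\omega$ blow-up $\tilde{T}$ that upgrades the estimate. The step requiring genuine care is therefore confirming that $\hat{\mathcal{M}}$ is a legitimate matroid (so that augmentation is available) and that the copies-to-distinct-elements counting is tight, after which the remaining Hall verification and the counting argument forcing exact multiplicity $1/\omega$ are straightforward.
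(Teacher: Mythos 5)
Your proof is correct, but note that the paper does not actually prove this lemma: it is imported verbatim from \citet{chen2024fair} (the paper only states it with a citation, remarking elsewhere that the proof ``exactly follows the steps given in \cite{chen2024fair}''), so there is no in-paper argument to compare against. What you have produced is a self-contained reconstruction, and the key steps check out. The blow-up $\hat{\Omega} = \Omega \times \{1,\dots,1/\omega\}$ with parameters $\vec{u}/\omega$, $\vec{l}/\omega$, $\kappa/\omega$ is indeed another instance of the fairness family (integrality is preserved because $1/\omega \in \mathbb{N}_+$), so its matroid property follows from the same Lemma~1 of \citet{chen2024fair} that the paper already relies on; $\hat{S}_i = S_i \times \{1\}$ and $\tilde{T} = T \times \{1,\dots,1/\omega\}$ are independent in it by exactly the group-count computations you give; the quantitative augmentation bound $|\{b \in B : A \cup \{b\} \in \mathcal{I}\}| \geq |B| - |A|$ is standard (extend $A$ to a maximal independent subset of $A \cup B$); and the copies-to-elements counting plus the $t \in S_i$ edge case correctly yield $|F_i| \geq \kappa - \omega i$. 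The Hall verification is also sound: since $i_0 = \min P$ lies in $P$, the neighborhood already contains all $(1/\omega)|F_{i_0}|$ copies of $F_{i_0}$, which dominates $|P| \leq \kappa/\omega - i_0$, and equal side sizes force the left-saturating matching to be perfect on the right, giving each element of $T$ multiplicity exactly $1/\omega$. Two cosmetic points: the paper's quantifier range $\forall i \in \{0,1,\ldots,\kappa/\omega\}$ is a typo (there is no $e_{\kappa/\omega+1}$), and your proof correctly targets $i \in \{0,\ldots,\kappa/\omega - 1\}$; and your argument, like any proof of this statement, still leans on the external fact that the fairness constraint family is a matroid, so it is a reconstruction modulo that cited lemma rather than a fully first-principles proof --- which is entirely reasonable, since the paper itself assumes the same.
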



\begin{proof}[Proof of \cref{thm:fsm-resilience}]
We begin by denoting the optimal solution of the problem as 
$
\mathrm{OPT} = \arg\max_{S \in \mathcal{M}_1(P, \kappa, \vec{l}, \vec{u})} f(S).
$ %
For iteration $i=1,\dots,\kappa/\omega$, let $S_i$ denote the set selected in that iteration.  
Apply  \cref{lem:fairsmc} with $ \mathrm{OPT} $ as $T$ to obtain a valid sequence $E$ of $1/\omega$ copies of $\mathrm{OPT}$. 
%


Since Algorithm \ref{alg:fairness-bi} stops after $\kappa/\omega$ steps and outputs a set $|S| = \kappa/\omega$ and $S \in \mathcal{M}_{1/\omega}$, we have $|S \cup \Omega_c| \le \frac{u_c}{\omega}\;\; \forall c\in[C]$ and $\sum_{c\in [C]}\max\{|S \cap \Omega_c|, l_c/\omega\} \le \kappa/\omega$.
These  two equations directly give resilience on cardinality as $\beta = 1/\omega$ and $\delta_g = 0$.

%
Consider the set $S_{i+1}$ chosen in the $(i+1)$th iteration.  Since the algorithm chose the element $S_{i+1}\backslash S_i$ instead of $e_{i+1}\in \mathrm{OPT}$ such that $S_i \cup \{e_{i+1}\} \in \mathcal{M}_{1/\omega}$,
\begin{align*}
f(S_{i+1}) - f(S_i) 
&\ge \hat{f}(S_{i+1}) - \hat{f}(S_i) - 2\epsilon\\
& \ge \hat{f}(S_i \cup \{e_{i+1}\}) - \hat{f}(S_i) -2\epsilon \\
&\ge f(S_i \cup \{e_{i+1}\}) - f(S_i) - 4\epsilon \\
&\ge f(S \cup \{e_{i+1}\}) - f(S) - 4\epsilon, 
\end{align*}
where the last step uses that $f$ is submodular.  
%
Sum both sides of the last inequality over all iterations $i=0,1,\dots,\kappa/\omega-1$.  
Now, $\sum_{i=0}^{\kappa/\omega - 1} f(S_{i+1}) - f(S_i) = f(S) - f(\emptyset) = f(S)$.
Also, from Lemma \ref{lem:fairsmc}, each $e_{i+1} \in E$, where $E$ is a sequence containing $1/\omega$ copies of each element in OPT.  
Therefore, 
$\sum_{i=0}^{\kappa/\omega - 1} f(S \cup \{e_{i+1}\}) - f(S) $
is equal to
$
1/\omega \sum_{i^*\in OPT} f(S \cup i^*) - f(S).  
$ 
%
%
Using a well-known identity for monotone submodular functions, 
\begin{align*}
\sum_{i^*\in \mathrm{OPT}} f(S \cup \{i^*\}) - f(S) 
&\geq f(\mathrm{OPT}) - f(S).
\end{align*}

Consequently, we get:
\begin{align*}
f(S) 
\ge 1/\omega \left[\sum_{i^*\in \mathrm{OPT}} f(S \cup \{i^*\}) - f(S)\right] - 4\epsilon\frac{\kappa}{\omega}
\ge \frac{f(\mathrm{OPT})-f(S)}{\omega} - 4\epsilon\frac{\kappa}{\omega}.
\end{align*}
Rearranging terms and 
%
observing that since the algorithm runs for $ \kappa/\omega$ steps, and uses at most $n$ oracle calls in each step, the total oracle calls are bounded by \(\texttt{N} = \frac{n\kappa}{\omega}\).
\end{proof}


\section{Clean Event Bound}\label{apdx:clean}


We define key events for our analysis. 
For each action \( A \) played during exploration, the \( m \) observed rewards are i.i.d. with mean \( f(A) \) and are bounded in \([0,f_\text{max}]\). 
Likewise, the \( m \) observed constraint values are i.i.d. with mean \( g(A) \) and are bounded in \([0,g_\text{max}]\).
For simplicity, we will use the bound $h = \max\{f_\text{max},g_\text{max} \}$
By Hoeffding’s inequality, the empirical means \( \bar{f}(A) \) and \( \bar{g}(A) \)  satisfy the respective concentration bounds  
\[
\mathbb{P} \left( \big| \bar{f}(A) - f(A) \big| \geq \epsilon \right) \leq 2 \exp \left( - \frac{2m \epsilon^2}{h^2} \right) 
\quad \text{and} \quad  
\mathbb{P} \left( \big| \bar{g}(A) - g(A) \big| \geq \epsilon \right) \leq 2 \exp \left( - \frac{2m \epsilon^2}{h^2} \right).
\]  
These bounds hold for all actions played during exploration.

%

\begin{lemma}[Concentration of Empirical Means in Exploration]
\label{lem:concentration}
Let \( A_1, \dots, A_\texttt{N} \) be the set of actions played during the exploration phase, each played \( m \) times. 
Suppose the rewards and constraints associated with these actions are bounded in \([0,h]\), and let \(\bar{f}(A_i)\) and \(\bar{g}(A_i)\) denote of action \( A_i \)'s empirical mean reward and the empirical mean constraint value respectively.
Define the confidence radius  
\[
\mathrm{rad} := \sqrt{\frac{h^2 \log T}{2m}}.
\]  
Then, with probability at least 
\( 1 - 4\texttt{N} T^{-1} \), the normalized empirical means of all actions remain within \(\mathrm{rad}\) of their true means, i.e.,  
\[
\mathcal{E} := \bigcap_{i=1}^{\texttt{N}} \left\{ \big| \bar{f}(A_i) - f(A_i) \big| < \mathrm{rad} \right\} \cap \left\{ \big| \bar{g}(A_i) - g(A_i) \big| < \mathrm{rad} \right\}
\]
holds.
\end{lemma}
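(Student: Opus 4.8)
The plan is to prove \cref{lem:concentration} by applying Hoeffding's inequality to each of the $2\texttt{N}$ empirical means individually and then combining the failure probabilities with a union bound. Concretely, for each action $A_i$ played $m$ times, the observed rewards are i.i.d.\ in $[0,h]$ with mean $f(A_i)$, so Hoeffding gives $\mathbb{P}(|\bar{f}(A_i) - f(A_i)| \geq \mathrm{rad}) \leq 2\exp(-2m\,\mathrm{rad}^2/h^2)$, and identically for $\bar{g}(A_i)$. This is exactly the concentration statement already recorded in the excerpt just before the lemma, so the first step is simply to instantiate it at $\epsilon = \mathrm{rad}$.

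The key computation is to substitute the chosen radius $\mathrm{rad} = \sqrt{h^2 \log T /(2m)}$ into the exponent. This yields $2m\,\mathrm{rad}^2/h^2 = 2m \cdot \frac{h^2 \log T}{2m} \cdot \frac{1}{h^2} = \log T$, so each tail bound collapses to $2\exp(-\log T) = 2/T$. Thus each of the $2\texttt{N}$ events $\{|\bar{f}(A_i)-f(A_i)| \geq \mathrm{rad}\}$ and $\{|\bar{g}(A_i)-g(A_i)| \geq \mathrm{rad}\}$ has probability at most $2/T$.

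Next I would take the union bound over all $2\texttt{N}$ bad events. The complement of $\mathcal{E}$ is precisely the union of these deviation events, so
\[
\mathbb{P}(\mathcal{E}^c) \leq \sum_{i=1}^{\texttt{N}} \Big[ \mathbb{P}\big(|\bar{f}(A_i)-f(A_i)| \geq \mathrm{rad}\big) + \mathbb{P}\big(|\bar{g}(A_i)-g(A_i)| \geq \mathrm{rad}\big) \Big] \leq 2\texttt{N} \cdot \frac{2}{T} = \frac{4\texttt{N}}{T}.
\]
Taking complements gives $\mathbb{P}(\mathcal{E}) \geq 1 - 4\texttt{N}/T$, which is the claimed bound.

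There is essentially no hard obstacle here; this is a standard clean-event argument, and the only point requiring mild care is ensuring the actions $A_1,\dots,A_\texttt{N}$ are well-defined and that the $m$ samples per action are genuinely i.i.d.\ and independent of the deviation radius (the radius depends only on the fixed quantities $h$, $T$, $m$, not on the observed data). One subtlety worth flagging is that the set of queried actions may itself be random if $\mathcal{A}$ is adaptive; I would address this by noting that the Hoeffding bound holds uniformly for any fixed action, so the union bound can be taken over the at most $\texttt{N}$ distinct actions the oracle ever queries regardless of which they turn out to be (or, more carefully, over all $2^n$ possible actions if one wants a fully uniform statement, at the cost of replacing $\texttt{N}$ with $2^n$ inside the log—but since the statement fixes $\texttt{N}$ queried actions, the simpler count suffices).
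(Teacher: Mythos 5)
Your proof is correct and takes essentially the same approach as the paper's: Hoeffding's inequality applied to each of the $2\texttt{N}$ empirical means with $\mathrm{rad}$ chosen so that each tail bound collapses to $2T^{-1}$, followed by a union bound yielding $\mathbb{P}(\mathcal{E}^c) \leq 4\texttt{N}T^{-1}$. Your closing remark about the adaptivity of the queried actions is a reasonable precaution that the paper's proof leaves implicit.
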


\begin{proof}
Applying Hoeffding’s inequality to each action \( A_i \), we obtain  
\[
\mathbb{P} \left( \big| \bar{f}(A_i) - f(A_i) \big| \geq \mathrm{rad} \right) \leq 2 \exp \left( - \frac{2m \mathrm{rad}^2}{h^2} \right)
\] and \[ \mathbb{P} \left( \big| \bar{g}(A_i) - g(A_i) \big| \geq \mathrm{rad} \right) \leq 2 \exp \left( - \frac{2m \mathrm{rad}^2}{h^2} \right).
\]
Substituting \(\mathrm{rad} = \sqrt{h^2 \log(T)/2m}\), we compute  
\[
\mathbb{P} \left( \big| \bar{f}(A_i) - f(A_i) \big| \geq \mathrm{rad} \right) \leq 2 T^{-1}
\quad \text{and} \quad
\mathbb{P} \left( \big| \bar{g}(A_i) - g(A_i) \big| \geq \mathrm{rad} \right) \leq 2 T^{-1}.
\]

Define the event that the empirical means for both the objective and constraint functions are within the confidence bound for the $i$th action as 
\(\mathcal{E}_i = \{ |\bar{f}(A_i) - f(A_i)| < \mathrm{rad} \} \cap \{ |\bar{g}(A_i) - g(A_i)| < \mathrm{rad} \}\).

Denote the complement of $\mathcal{E}_i$ as
\[
\mathcal{E}_{i}^{c} = \{ |\bar{f}(A_i) - f(A_i)| \geq \mathrm{rad}   \} \cup \{ |\bar{g}(A_i) - g(A_i)| \geq \mathrm{rad}   \}.
\] 
By the union bound,
\begin{align*}
    \mathbb{P} \left( \mathcal{E}^{c}_i \right) 
    &\leq \mathbb{P} \left( \big| \bar{f}(A_i) - f(A_i) \big| \geq \mathrm{rad} \right)
    +
    \mathbb{P} \left( \big| \bar{g}(A_i) - g(A_i) \big| \geq \mathrm{rad} \right)  \\
    &\leq 4 T^{-1}.
\end{align*}


Let $\mathcal{E}^{c}$ denote the complement of the clean event $\mathcal{E}$. Using the union bound,  
\[
\mathbb{P} \left( \mathcal{E}^{c} \right) = \mathbb{P} \left( \bigcup_{i=1}^{\texttt{N}} \mathcal{E}_{i}^{c} \right) \leq \sum_{i=1}^{\texttt{N}} \mathbb{P} \left( \mathcal{E}_{i}^{c} \right) \leq \sum_{i=1}^{\texttt{N}} 4 T^{-1} = 4\texttt{N} T^{-1}.
\]  
Thus,  
\[
\mathbb{P}(\mathcal{E}) = 1 - \mathbb{P}(\mathcal{E}^{c}) \geq 1 - 4\texttt{N} T^{-1},
\]  
completing the proof.

\section{Proof of Theorem \ref{thm:main}}\label{apd:main}

\begin{proof} 
Let $\mathcal{E}$ denote the clean event where all empirical mean estimates satisfy 
$|\bar{f}(A_i) - f(A_i)| < \text{rad}$ and $|\bar{g}(A_i) - g(A_i)| < \text{rad}$, with 
$\text{rad} = \sqrt{\frac{h^2 \log T}{2m}}$. 

From Lemma \ref{lem:concentration}, 
$\mathbb{P}(\mathcal{E}) \geq 1 - 4NT^{-1}$.

Using the law of total expectation, decompose the $\alpha$-regret and CCV:
\begin{equation}
\mathbb{E}[\mathcal{R}_f(T)] = \mathbb{E}[\mathcal{R}_f(T) | \mathcal{E}] \mathbb{P}(\mathcal{E}) + \mathbb{E}[\mathcal{R}_f(T) | \mathcal{E}^c] \mathbb{P}(\mathcal{E}^c),
\label{eq:regret-decomp}
\end{equation}

\begin{equation}
\mathbb{E}[\mathcal{V}_g(T)] = \mathbb{E}[\mathcal{V}_g(T) | \mathcal{E}] \mathbb{P}(\mathcal{E}) + \mathbb{E}[\mathcal{V}_g(T) | \mathcal{E}^c] \mathbb{P}(\mathcal{E}^c).
\label{eq:ccv-decomp}
\end{equation}

Under $\mathcal{E}$, further decompose both quantities  into separate terms for exploration and exploitation phases:
\begin{eqnarray}
\mathbb{E}[\mathcal{R}_f(T) | \mathcal{E}] &=&
\sum_{i=1}^N m \left(\alpha f(\text{OPT}) - \mathbb{E}[f(S_i)] \right) \nonumber\\&&+ (T - Nm) \left(\alpha f(\text{OPT}) - \mathbb{E}[f(S)] \right),
\label{eq:regret-split}
\end{eqnarray}
\begin{equation}
\mathbb{E}[\mathcal{V}_g(T) | \mathcal{E}] =\sum_{i=1}^N m \left(\mathbb{E}[g(S_i)] - \beta \kappa \right) + (T - Nm) \left(\mathbb{E}[g(S)] - \beta \kappa \right).
\label{eq:ccv-split}
\end{equation}

During the exploration phase, we have the following. 
Since $f(S_i) \leq h$,
\begin{equation}
\sum_{i=1}^N m \left(\alpha f(\text{OPT}) - \mathbb{E}[f(S_i)] \right) \leq \alpha N m h.
\label{eq:regret-explore}
\end{equation}

Since $g(S_i) \leq h$,
\begin{equation}
\sum_{i=1}^N m \left(\mathbb{E}[g(S_i)] - \beta \kappa \right) \leq N m h.
\label{eq:ccv-explore}
\end{equation}

During the exploitation phase, we have the following. Under $\mathcal{E}$, the $\delta$-resilience property ensures:
\begin{equation}
\mathbb{E}[f(S)] \geq \alpha f(\text{OPT}) - \delta \cdot \text{rad}, \quad \mathbb{E}[g(S)] \leq \beta \kappa + \delta \cdot \text{rad}.
\label{eq:resilience}
\end{equation}

Substituting into \eqref{eq:regret-split} and \eqref{eq:ccv-split}:
\begin{equation}
(T - Nm) \left(\alpha f(\text{OPT}) - \mathbb{E}[f(S)] \right) \leq T \delta \cdot \text{rad},
\label{eq:regret-exploit}
\end{equation}

\begin{equation}
(T - Nm) \left(\mathbb{E}[g(S)] - \beta \kappa \right) \leq T \delta \cdot \text{rad}.
\label{eq:ccv-exploit}
\end{equation}

In order to equate exploration and exploitation terms for $m$ in order, we choose
\begin{equation}
m = \Theta\left( \frac{\delta^{2/3} T^{2/3} (\log T)^{1/3}}{N^{2/3}} \right).
\label{eq:m-choice}
\end{equation}

Substituting \eqref{eq:m-choice} into \eqref{eq:regret-explore}, \eqref{eq:ccv-explore}, \eqref{eq:regret-exploit}, and \eqref{eq:ccv-exploit}:
\begin{equation}
\mathbb{E}[\mathcal{R}_f(T) | \mathcal{E}] = \mathcal{O}\left( \delta^{2/3} h N^{1/3} T^{2/3} \log^{1/3} T \right),
\label{eq:regret-bound}
\end{equation}
\begin{equation}
\mathbb{E}[\mathcal{V}_g(T) | \mathcal{E}] = \mathcal{O}\left( \delta^{2/3} h N^{1/3} T^{2/3} \log^{1/3} T \right).
\label{eq:ccv-bound}
\end{equation}

The bad event contribution can be bounded as follows.
For $\mathcal{E}^c$ with $\mathbb{P}(\mathcal{E}^c) \leq 4NT^{-1}$:
\begin{equation}
\mathbb{E}[\mathcal{R}_f(T) | \mathcal{E}^c] \mathbb{P}(\mathcal{E}^c) \leq 4N h = \mathcal{O}(1),
\label{eq:regret-bad}
\end{equation}
\begin{equation}
\mathbb{E}[\mathcal{V}_g(T) | \mathcal{E}^c] \mathbb{P}(\mathcal{E}^c) \leq 4N h = \mathcal{O}(1).
\label{eq:ccv-bad}
\end{equation}

From \eqref{eq:regret-decomp}--\eqref{eq:ccv-decomp}, \eqref{eq:regret-bound}--\eqref{eq:ccv-bound}, and \eqref{eq:regret-bad}--\eqref{eq:ccv-bad}:
\[
\mathbb{E}[\mathcal{R}_f(T)] = \mathcal{O}\left( \delta^{2/3} h N^{1/3} T^{2/3} \log^{1/3} T \right),
\]
\[
\mathbb{E}[\mathcal{V}_g(T)] = \mathcal{O}\left( \delta^{2/3} h N^{1/3} T^{2/3} \log^{1/3} T \right).
\]
\end{proof}

\end{proof}


\if 0

\section{Proof of Theorem \ref{thm:main}}\label{apd:proof:main}

Let $\mathcal{E}$ denote the clean event where all empirical mean estimates satisfy 
$|\bar{f}(A_i) - f(A_i)| < \text{rad}$ and $|\bar{g}(A_i) - g(A_i)| < \text{rad}$, with 
$\text{rad} = \sqrt{\frac{h^2 \log T}{2m}}$. From Lemma \ref{lem:concentration}, 
$\mathbb{P}(\mathcal{E}) \geq 1 - 4NT^{-1}$.

Using the law of total expectation, decompose the $\alpha$-regret and CCV:
\begin{equation}
\mathbb{E}[\mathcal{R}_f(T)] = \mathbb{E}[\mathcal{R}_f(T) | \mathcal{E}] \mathbb{P}(\mathcal{E}) + \mathbb{E}[\mathcal{R}_f(T) | \mathcal{E}^c] \mathbb{P}(\mathcal{E}^c),
\label{eq:regret-decomp}
\end{equation}

\begin{equation}
\mathbb{E}[\mathcal{V}_g(T)] = \mathbb{E}[\mathcal{V}_g(T) | \mathcal{E}] \mathbb{P}(\mathcal{E}) + \mathbb{E}[\mathcal{V}_g(T) | \mathcal{E}^c] \mathbb{P}(\mathcal{E}^c).
\label{eq:ccv-decomp}
\end{equation}

Under $\mathcal{E}$, further decompose into exploration and exploitation:

\begin{equation}
\mathbb{E}[\mathcal{R}_f(T) | \mathcal{E}] =
\sum_{i=1}^N m \left(\alpha f(\text{OPT}) - \mathbb{E}[f(S_i)] \right) + (T - Nm) \left(\alpha f(\text{OPT}) - \mathbb{E}[f(S)] \right),
\label{eq:regret-split}
\end{equation}

\begin{equation}
\mathbb{E}[\mathcal{V}_g(T) | \mathcal{E}] =\sum_{i=1}^N m \left(\mathbb{E}[g(S_i)] - \beta \kappa \right) + (T - Nm) \left(\mathbb{E}[g(S)] - \beta \kappa \right).
\label{eq:ccv-split}
\end{equation}

During exploration phase, we have the following. Since $f(S_i) \leq h$,
\begin{equation}
\sum_{i=1}^N m \left(\alpha f(\text{OPT}) - \mathbb{E}[f(S_i)] \right) \leq \alpha N m h.
\label{eq:regret-explore}
\end{equation}

Since $g(S_i) \leq h$,
\begin{equation}
\sum_{i=1}^N m \left(\mathbb{E}[g(S_i)] - \beta \kappa \right) \leq N m h.
\label{eq:ccv-explore}
\end{equation}

During exploitation phase, we have the following. Under $\mathcal{E}$, the $\delta$-resilience property ensures:
\begin{equation}
\mathbb{E}[f(S)] \geq \alpha f(\text{OPT}) - \delta \cdot \text{rad}, \quad \mathbb{E}[g(S)] \leq \beta \kappa + \delta \cdot \text{rad}.
\label{eq:resilience}
\end{equation}

Substituting into \eqref{eq:regret-split} and \eqref{eq:ccv-split}:
\begin{equation}
(T - Nm) \left(\alpha f(\text{OPT}) - \mathbb{E}[f(S)] \right) \leq T \delta \cdot \text{rad},
\label{eq:regret-exploit}
\end{equation}

\begin{equation}
(T - Nm) \left(\mathbb{E}[g(S)] - \beta \kappa \right) \leq T \delta \cdot \text{rad}.
\label{eq:ccv-exploit}
\end{equation}

In order to equate exploration and exploitation terms for $m$ in order, we choose
\begin{equation}
m = \Theta\left( \frac{\delta^{2/3} T^{2/3} (\log T)^{1/3}}{N^{2/3}} \right).
\label{eq:m-choice}
\end{equation}

Substitute \eqref{eq:m-choice} into \eqref{eq:regret-explore}, \eqref{eq:ccv-explore}, \eqref{eq:regret-exploit}, and \eqref{eq:ccv-exploit}:
\begin{equation}
\mathbb{E}[\mathcal{R}_f(T) | \mathcal{E}] = \mathcal{O}\left( \delta^{2/3} h N^{1/3} T^{2/3} \log^{1/3} T \right),
\label{eq:regret-bound}
\end{equation}

\begin{equation}
\mathbb{E}[\mathcal{V}_g(T) | \mathcal{E}] = \mathcal{O}\left( \delta^{2/3} h N^{1/3} T^{2/3} \log^{1/3} T \right).
\label{eq:ccv-bound}
\end{equation}

The bad event contribution can be bounded as:

For $\mathcal{E}^c$ with $\mathbb{P}(\mathcal{E}^c) \leq 4NT^{-1}$:
\begin{equation}
\mathbb{E}[\mathcal{R}_f(T) | \mathcal{E}^c] \mathbb{P}(\mathcal{E}^c) \leq 4N h = \mathcal{O}(1),
\label{eq:regret-bad}
\end{equation}

\begin{equation}
\mathbb{E}[\mathcal{V}_g(T) | \mathcal{E}^c] \mathbb{P}(\mathcal{E}^c) \leq 4N h = \mathcal{O}(1).
\label{eq:ccv-bad}
\end{equation}

From \eqref{eq:regret-decomp}--\eqref{eq:ccv-decomp}, \eqref{eq:regret-bound}--\eqref{eq:ccv-bound}, and \eqref{eq:regret-bad}--\eqref{eq:ccv-bad}:
\[
\mathbb{E}[\mathcal{R}_f(T)] = \mathcal{O}\left( \delta^{2/3} h N^{1/3} T^{2/3} \log^{1/3} T \right),
\]

\[
\mathbb{E}[\mathcal{V}_g(T)] = \mathcal{O}\left( \delta^{2/3} h N^{1/3} T^{2/3} \log^{1/3} T \right).
\]

\fi 

\if 0

\color{brown}
\section{Clean Event Bound (with $\mathcal{H}$)}\label{apdx:clean}


We define key events for our analysis. We note that 
For each action \( A \) played during exploration, the \( m \) observed rewards are i.i.d. with mean \( f(A) \) and are bounded in \([0,f_{max}]\). 
Likewise, the \( m \) observed constraint values are i.i.d. with mean \( g(A) \) and are bounded in \([0,g_{max}]\).
By Hoeffding’s inequality, the empirical means \( \bar{f}(A) \) and \( \bar{g}(A) \)  satisfy the respective concentration bounds  
\[
\mathbb{P} \left( \big| \bar{f}(A) - f(A) \big| \geq f_{max}\epsilon \right) \leq 2 \exp \left( - 2m \epsilon^2 \right) 
\quad \text{and} \quad  
\mathbb{P} \left( \big| \bar{g}(A) - g(A) \big| \geq g_{max}\epsilon \right) \leq 2 \exp \left( - 2m \epsilon^2 \right).
\]  
These bounds hold for all actions played during exploration.

%

\begin{lemma}[Concentration of Empirical Means in Exploration]
\label{lem:concentration}
Let \( A_1, \dots, A_\texttt{N} \) be the set of actions played during the exploration phase, each played \( m \) times. 
Suppose the rewards and constraints associated with these actions are bounded in \([0,1]\), and let \(\bar{f}(A_i)\) and \(\bar{g}(A_i)\) denote the empirical mean reward and the empirical mean constraint value respectively of action \( A_i \).
Define the confidence radius  
\[
\mathrm{rad} := \sqrt{\frac{\log T}{2m}}.
\]  
Then, with probability at least 
\( 1 - 4\texttt{N} T^{-1} \), the normalized empirical means of all actions remain within \(\mathrm{rad}\) of their true means, i.e.,  
\[
\mathcal{E} := \bigcap_{i=1}^{\texttt{N}} \left\{ \big| \bar{f}(A_i) - f(A_i) \big| < f_{max}\mathrm{rad} \right\} \cap \left\{ \big| \bar{g}(A_i) - g(A_i) \big| < g_{max}\mathrm{rad} \right\}
\]
holds.
\end{lemma}

\begin{proof}
Applying Hoeffding’s inequality to each action \( A_i \), we obtain  
\[
\mathbb{P} \left( \big| \bar{f}(A_i) - f(A_i) \big| \geq f_{max}\mathrm{rad} \right) \leq 2 \exp \left( - 2m \mathrm{rad}^2 \right)
\  \text{ and } \ 
\mathbb{P} \left( \big| \bar{g}(A_i) - g(A_i) \big| \geq g_{max}\mathrm{rad} \right) \leq 2 \exp \left( - 2m \mathrm{rad}^2 \right).
\]
Substituting \(\mathrm{rad} = \sqrt{\log(T)/2m}\), we compute  
\[
\mathbb{P} \left( \big| \bar{f}(A_i) - f(A_i) \big| \geq \sqrt{\frac{\log T}{2m}} f_{max}\right) \leq 2 T^{-1}
\quad \text{and} \quad
\mathbb{P} \left( \big| \bar{g}(A_i) - g(A_i) \big| \geq \sqrt{\frac{\log T}{2m}}g_{max} \right) \leq 2 T^{-1}.
\]

Define the event that the empirical means for both the objective and constraint functions are within the confidence bound for the $i$th action as 
\(\mathcal{E}_i = \{ |\bar{f}(A_i) - f(A_i)| < \mathrm{rad} \} \cap \{ |\bar{g}(A_i) - g(A_i)| < \mathrm{rad} \}\).

Denote the complement of $\mathcal{E}_i$ as
\[
\mathcal{E}_{i}^{c} = \{ |\bar{f}(A_i) - f(A_i)| \geq f_{max}\mathrm{rad}   \} \cup \{ |\bar{g}(A_i) - g(A_i)| \geq g_{max}\mathrm{rad}   \}.
\] 
By the union bound,
\begin{align*}
    \mathbb{P} \left( \mathcal{E}^{c}_i \right) 
    &\leq \mathbb{P} \left( \big| \bar{f}(A_i) - f(A_i) \big| \geq f_{max}\sqrt{\frac{\log T}{2m}} \right)
    +
    \mathbb{P} \left( \big| \bar{g}(A_i) - g(A_i) \big| \geq g_{max}\sqrt{\frac{\log T}{2m}} \right)  \\
    &\leq 4 T^{-1}.
\end{align*}


Using the union bound,  
\[
\mathbb{P} \left( \mathcal{E}^{c} \right) = \mathbb{P} \left( \bigcup_{i=1}^{\texttt{N}} \mathcal{E}_{i}^{c} \right) \leq \sum_{i=1}^{\texttt{N}} \mathbb{P} \left( \mathcal{E}_{i}^{c} \right) \leq \sum_{i=1}^{\texttt{N}} 4 T^{-1} = 4\texttt{N} T^{-1}.
\]  
Thus,  
\[
\mathbb{P}(\mathcal{E}) = 1 - \mathbb{P}(\mathcal{E}^{c}) \geq 1 - 4\texttt{N} T^{-1},
\]  
completing the proof.

\end{proof}

\color{black}

\fi

\if 0
\color{brown}

\section{ (ORIGINAL) Clean Event Bound}
We define key events for our analysis. For each action \( A \) played during exploration, the \( m \) observed rewards are i.i.d. with mean \( f(A) \) and are bounded in \([0,1]\). By Hoeffding’s inequality, the empirical mean \( \bar{f}(A_i) \) satisfies the concentration bound  
\[
\mathbb{P} \left( \big| \bar{f}(A_i) - f(A_i) \big| \geq \epsilon \right) \leq 2 \exp \left( - 2m \epsilon^2 \right).
\]  
This bound holds for all actions played during exploration.
\begin{lemma}[Concentration of Empirical Means in Exploration]
\label{lem:concentration}
Let \( A_1, \dots, A_\texttt{N} \) be the set of actions played during the exploration phase, each played \( m \) times. Suppose the rewards associated with these actions are bounded in \([0,1]\), and let \(\bar{f}(A_i)\) denote the empirical mean reward of action \( A_i \). Define the confidence radius  
\[
\mathrm{rad} := \sqrt{\frac{\log T}{2m}}.
\]  
Then, with probability at least \( 1 - 2\texttt{N} T^{-1} \), the empirical means of all actions remain within \(\mathrm{rad}\) of their true means, i.e.,  
\[
\mathcal{E} := \bigcap_{i=1}^{\texttt{N}} \left\{ \big| \bar{f}(A_i) - f(A_i) \big| < \mathrm{rad} \right\}
\]
holds.
\end{lemma}

\begin{proof}
Applying Hoeffding’s inequality to each action \( A_i \), we obtain  
\[
\mathbb{P} \left( \big| \bar{f}(A_i) - f(A_i) \big| \geq \mathrm{rad} \right) \leq 2 \exp \left( - 2m \mathrm{rad}^2 \right).
\]
Substituting \(\mathrm{rad} = \sqrt{\log(T)/2m}\), we compute  
\[
\mathbb{P} \left( \big| \bar{f}(A_i) - f(A_i) \big| \geq \sqrt{\frac{\log T}{2m}} \right) \leq 2 T^{-1}.
\]  
Since \(\mathcal{E}_i = \{ |\bar{f}(A_i) - f(A_i)| < \mathrm{rad} \}\), its complement is  
\[
\mathcal{E}_{i}^{c} = \{ |\bar{f}(A_i) - f(A_i)| \geq \mathrm{rad} \}.
\]  
Using the union bound,  
\[
\mathbb{P} \left( \mathcal{E}^{c} \right) = \mathbb{P} \left( \bigcup_{i=1}^{\texttt{N}} \mathcal{E}_{i}^{c} \right) \leq \sum_{i=1}^{\texttt{N}} \mathbb{P} \left( \mathcal{E}_{i}^{c} \right) \leq \sum_{i=1}^{\texttt{N}} 2 T^{-1} = 2\texttt{N} T^{-1}.
\]  
Thus,  
\[
\mathbb{P}(\mathcal{E}) = 1 - \mathbb{P}(\mathcal{E}^{c}) \geq 1 - 2\texttt{N} T^{-1},
\]  
completing the proof.

\end{proof}
\color{black}

\fi

\section{The Framework Applied to Minimization Problem}\label{app:min}

This appendix details the conversion of the bi-criteria CMAB framework from a maximization problem (Section \ref{sec:problem}) to a minimization problem. We redefine the problem setup and regret guarantees for the minimization setting.

\subsection{Problem Statement for Minimization}
\label{sec:minproblem}
The learner’s goal is to minimize a cumulative \textit{cost} \( \sum_{t=1}^T f_t(A_t) \) while ensuring that the expected \textit{utility} of each action approximately satisfies a lower-bound constraint \( \kappa \in \mathbb{R}^+ \). Formally, we require:  
\[
\frac{1}{T}\sum_{t=1}^T g_t(A_t) \geq \kappa.
\]
Let \( \mathrm{OPT} \) denote the optimal action for the minimization problem:  
\[
\mathrm{OPT} \in \arg\min_{A \subseteq \Omega} f(A) \quad \text{subject to} \quad g(A) \geq \kappa.
\]
The regret and cumulative constraint violation (CCV) are redefined as:  
\[
\mathbb{E}[\mathcal{R}_f(T)] = \mathbb{E}\left[\sum_{t=1}^T f_t(A_t)\right] - \alpha T f(\mathrm{OPT}),
\]
\[
\mathbb{E}[\mathcal{V}_g(T)] = \beta T \kappa - \mathbb{E}\left[\sum_{t=1}^T g_t(A_t)\right],
\]
where \( \alpha \geq 1 \) is the cost approximation factor and \( \beta \leq 1 \) is the utility relaxation factor.

\subsection{Modified Framework and Analysis}
The online algorithm (Algorithm~\ref{alg:bi-criteria-algo}) remains unchanged, but the analysis adapts to the minimization objective:


\begin{theorem}[Regret and CCV for Minimization]\label{thm:main-min}  
For any bi-criteria CMAB minimization instance with horizon \( T \geq \max\left\{\texttt{N}, \frac{2\sqrt{2}\texttt{N}}{\delta}\right\} \) and  $h\triangleq \max(f_{\max},g_{\max})$, \textsc{Bi-Criteria CMAB Algorithm} achieves:  
\begin{enumerate}
    \item Expected \(\alpha\)-regret: 
    \[
      \mathbb{E}[\mathcal{R}_f(T)] = \mathcal{O}\left(\delta^{2/3}{h}\texttt{N}^{1/3}T^{2/3}\log^{1/3}T\right),
    \]
    \item Expected cumulative \(\beta\)-constraint violation: 
    \[
      \mathbb{E}[\mathcal{V}_g(T)] = \mathcal{O}\left(\delta^{2/3}{h}\texttt{N}^{1/3}T^{2/3}\log^{1/3}T\right).
    \]
\end{enumerate}
\end{theorem}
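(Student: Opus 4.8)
The plan is to mirror the proof of Theorem~\ref{thm:main} step for step, since the online algorithm (Algorithm~\ref{alg:bi-criteria-algo}) is unchanged and only the accounting of regret and CCV flips sign. First I would invoke the clean event of Lemma~\ref{lem:concentration} verbatim: it concerns only the concentration of the empirical means \(\bar{f}(A_i),\bar{g}(A_i)\) toward \(f(A_i),g(A_i)\), so it is insensitive to whether we are maximizing or minimizing. Thus with probability at least \(1-4\texttt{N}/T\) every queried action has \(|\bar f(A_i)-f(A_i)|<\mathrm{rad}\) and \(|\bar g(A_i)-g(A_i)|<\mathrm{rad}\), where \(\mathrm{rad}=\sqrt{h^2\log T/(2m)}\). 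Conditioned on this event \(\mathcal E\), I set \(\epsilon=\mathrm{rad}\) and apply the minimization resilience guarantee \eqref{eq:resilience-f-min}--\eqref{eq:resilience-g-min} to the output \(S\), obtaining \(\mathbb E[f(S)]\le \alpha f(\mathrm{OPT})+\delta\,\mathrm{rad}\) and \(\mathbb E[g(S)]\ge \beta\kappa-\delta\,\mathrm{rad}\).

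Next I would decompose the two quantities over the exploration and exploitation phases, matching the sign conventions of Appendix~\ref{sec:minproblem}:
\[
\begin{aligned}
\mathbb E[\mathcal R_f(T)\mid\mathcal E]&=\sum_{i=1}^{\texttt{N}} m\big(\mathbb E[f(S_i)]-\alpha f(\mathrm{OPT})\big)+\sum_{t=\texttt{N}m+1}^{T}\big(\mathbb E[f(S)]-\alpha f(\mathrm{OPT})\big),\\
\mathbb E[\mathcal V_g(T)\mid\mathcal E]&=\sum_{i=1}^{\texttt{N}} m\big(\beta\kappa-\mathbb E[g(S_i)]\big)+\sum_{t=\texttt{N}m+1}^{T}\big(\beta\kappa-\mathbb E[g(S)]\big).
\end{aligned}
\]
For the exploration terms I would drop the nonpositive subtracted quantities (using \(\alpha f(\mathrm{OPT})\ge0\) and \(\mathbb E[g(S_i)]\ge0\)) and bound each summand by \(h\), since \(\mathbb E[f(S_i)]\le f_{\max}\le h\) and \(\beta\kappa\le\kappa\le g_{\max}\le h\); this gives exploration contributions of at most \(\texttt{N}m h\) for both. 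For the exploitation terms I would insert the Step-2 resilience bounds, so each summand is at most \(\delta\,\mathrm{rad}\), and bound the number of rounds \(T-\texttt{N}m\le T\), yielding at most \(T\delta\,\mathrm{rad}\) each.

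Finally I would substitute \(m=\lceil \delta^{2/3}T^{2/3}(\log T)^{1/3}/(2\texttt{N}^{2/3})\rceil\) to balance the \(\texttt{N}m h\) and \(T\delta\,\mathrm{rad}\) terms in order, recovering the claimed \(\mathcal O(\delta^{2/3}h\,\texttt{N}^{1/3}T^{2/3}\log^{1/3}T)\) bound under \(\mathcal E\), and add the bad-event contribution: the bad event has probability at most \(4\texttt{N}/T\) and each of the \(T\) rounds contributes at most \(h\), so its expected cost is \(\mathcal O(\texttt{N}h)\), which the stated lower bound \(T\ge\max\{\texttt{N},2\sqrt2\,\texttt{N}/\delta\}\) ensures is of lower order. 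I expect the only real care needed, rather than a genuine obstacle, to be bookkeeping: confirming that the flipped inequality directions in \eqref{eq:resilience-f-min}--\eqref{eq:resilience-g-min} feed correctly into the reversed definitions of \(\mathcal R_f\) and \(\mathcal V_g\), and that the exploration-phase summands remain bounded by \(h\) (which rests on \(f(\mathrm{OPT})\ge0\), \(g\ge0\), and \(\kappa\le h\)); everything else transfers unchanged from the maximization proof.
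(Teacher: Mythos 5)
Your proposal is correct and matches the paper's own proof essentially step for step: the same clean event from Lemma~\ref{lem:concentration}, the same application of the minimization resilience inequalities \eqref{eq:resilience-f-min}--\eqref{eq:resilience-g-min} with \(\epsilon=\mathrm{rad}\), the identical exploration/exploitation decomposition, the same choice of \(m\), and the same bad-event accounting. If anything, your bookkeeping is slightly cleaner than the paper's sketch, which carries a stray extra factor of \(h\) in its Step-2 resilience bounds (inconsistent with \(\mathrm{rad}=\sqrt{h^2\log T/(2m)}\) already containing \(h\)); your version is the one consistent with the stated final bound.
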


\begin{proof}[Proof Sketch]
The proof follows the same structure as Theorem~\ref{thm:main}, with adjustments for minimization:  

1. \textbf{Clean Event}: Concentration bounds hold as in Lemma \ref{lem:concentration}.  

2. \textbf{Resilience Guarantees}: Under clean event \( \mathcal{E} \),  from Appendix \ref{apd_res_def}, we have
\[
\begin{aligned}
  \mathbb{E}[f(S)] &\leq \alpha f(\mathrm{OPT}) + \delta {h}\mathrm{rad}, \\
  \mathbb{E}[g(S)] &\geq \beta \kappa - \delta {h}\mathrm{rad}.
\end{aligned}
\]  
3. \textbf{Regret and CCV Decomposition}:  
\[
\begin{aligned}
  \mathbb{E}[\mathcal{R}_f(T)|\mathcal{E}] &= \sum_{i=1}^{\texttt{N}} m (\mathbb{E}[f(S_i)] - \alpha f(\mathrm{OPT})) + \sum_{t=\texttt{N}m+1}^T (\mathbb{E}[f(S)] - \alpha f(\mathrm{OPT})), \\
  \mathbb{E}[\mathcal{V}_g(T)|\mathcal{E}] &= \sum_{i=1}^{\texttt{N}} m (\beta \kappa - \mathbb{E}[g(S_i)]) + \sum_{t=\texttt{N}m+1}^T (\beta \kappa - \mathbb{E}[g(S)]).
\end{aligned}
\]  
4. \textbf{Bounding Terms}: Exploration regret \( \leq \texttt{N} m f_{\max} \), exploitation regret \( \leq T \delta  {h}\mathrm{rad} \). Similar bounds apply to CCV, where exploration CCV is \( \leq \texttt{N} m \beta \kappa \) and exploitation CCV \( \leq T \delta {h}\mathrm{rad} \).

5. \textbf{Hyperparameter Substitution}: Substituting \( m = \mathcal{O}\left(\frac{\delta^{2/3}T^{2/3}\log^{1/3}T}{\texttt{N}^{2/3}}\right) \) balances exploration and exploitation terms.  

The full proof mirrors Theorem~\ref{thm:main}, with inequalities modified as above to reflect minimization.  
\end{proof}

\subsection{Key Differences from Maximization Framework}

\begin{enumerate}
\item \textbf{Objective and Constraint Swap}: Cost minimization replaces reward maximization; utility lower-bound replaces cost upper-bound.  
\item \textbf{Regret/CCV Definitions}: Regret measures excess cost, while CCV measures utility shortfall.  
\item \textbf{Resilience Inequalities}: Additive errors \( \delta \epsilon \) increase cost bounds and decrease utility bounds. 
\end{enumerate}

This conversion demonstrates the framework’s flexibility in handling dual objectives across maximization and minimization problems under bandit feedback.


 \if 0
\section{To-dos -- remove at submission}

\todo{minor - Table 1- if time play with column spacing to increase font size more; or maybe bold? to improve readability}

\begin{itemize}
    \item Prior works \shw{
except \cite{crawford2019submodular}
} assumes exact oracles or linear rewards --- maybe mod, MINTSS considered inexact
\end{itemize}

\todo{cjq maybe revise 2.3 to be non-linear obj with bandit?; maybe mention lots of works and we only mention a few, with bandit feedback in 2.3 also we only mention the closest; should cite golovin/streeter and niazadeh for adversarial. }

\todo{minor - better to say 'bi-criteria CMAB' or refer with acronym 'BCCMAB'?}

  \todo{[moved from below Def 1] should we emphasize this (esp. that even if f deterministic, $\epsilon$ error in $g$ evaluation results in additive approximation error for the objective $f$?  
    
    Also, to think about: if g is deterministic and known, (eg like single criteria appx setting in Guanyu's paper) then shouldn't the constraint bound  not have any error term  (up to the $\beta$ term)? would that only be for some algs?  though other way around is different, that even if $f$ deterministic and exact, then error in evaluating $g$ can cause error in the objective function? \red{In MINTSS we have noisy g propagate additive error to deterministic f} <-- but vice versa?}

\todo{[from greedy SCSC] if there's time, skim how Crawford et al articulated challenges in analyzing resilience and mimic for mintss and fairness}

    \todo{[from below Greedy-SCSC] minor - this is like vanilla greedy for max with knapsack constraints, which for that problem has no guarantees unless greedy alg consumes budget fully; double check if have time no issues else maybe add comment (eg if that counterexample has $\gamma=0$}

 \fi

 \if 0
\clearpage

\section{Experiments}\label{sec:exp}

In this section, we empirically evaluate the $\delta$ resilience guarantees we derived and evaluate regret and cumulative constraint violations for the online bi-criteria CMAB algorithms our framework adapted from those offline algorithms. 

\subsection{Offline Problems} \label{sec:exp:offline}

\todo[inline]{Subham: after you update the experiment section with the results for bi-criteria CMAB, can you also add results for experiments in the offline setting? 

for a fixed problem and for a fixed epsilon, get "exact" query values and add/subtract epsilon (maybe bernoulli(1/2)), run the offline alg (so not using CMAB framework, just offline alg), and then evaluate the final solution, $f(S^A)$ and $g(S^A)$ (eg def 1).

maybe for each alg run, can have a plot with x-axis for $\epsilon$, y-axis for f (another plot for g), plotting both the bound we derived (RHS of (1), (2)) and the achieved value (LHS of (1), (2));  

if problem can't get OPT, then could plot horizontal line for f(GRD) (where GRD is based on exact values)) and we show how solution degrades with $\epsilon$ compared to the -$\delta$ $\epsilon$ degradation we derived.}


\begin{figure}[h]
\centering
\begin{minipage}{0.24\textwidth}
  \includegraphics[width=\linewidth]{Neurips 2025/figures2/res_1a.png}\\
  \centering \small (a) \label{1a}
\end{minipage}\hfill
\begin{minipage}{0.24\textwidth}
\includegraphics[width=\linewidth]{Neurips 2025/figures2/res_2a.png}\\
  \centering \small (b) \label{1b}
\end{minipage}
\hfill
\begin{minipage}{0.24\textwidth}
\includegraphics[width=\linewidth]{Neurips 2025/figures2/res_1b.png}\\
  \centering \small (c) \label{1c}
\end{minipage}\hfill
  \begin{minipage}{0.24\textwidth}
  \includegraphics[width=\linewidth]{Neurips 2025/figures2/res_2b.png}\\
  \centering \small (d) \label{1d}
\end{minipage}

\caption{Bi-criteria resilience of \texttt{MINTSS}. Worst-case performance over various additive perturbations of maginitude $\epsilon$ for various $\epsilon$ for both objective and constraint function value queries.  Since $\mathrm{OPT}$ is unknown, $f(S^{\mathcal{A}})$ is compared against the more challenging $f(\mathcal{GRD}) - \delta \epsilon$  .
We also report that the largest number of oracle calls observed was 39, which is less than our bound $N=n^2$.
\\Figures (a) and (c) correspond to the setting with $\kappa = 0.98$, $\omega = 0.15$, representing a higher relaxation regime, while figures (b) and (d) correspond to $\kappa = 0.98$, $\omega = 0.08$, indicating a lower relaxation regime.
}
\label{fig:2}
\end{figure}

\paragraph{Baselines:}  Since our work is the first to derive resilience guarantees for this algorithm, we do not have alternative $\delta$ resilience bounds to compare with.  We instead use an upper bound based on exact oracles and observed empirical performance under certain perturbations (though due to the prohibitive search for worst-case perturbations, these are not necessarily worst-case perturbations).  
Following our \cref{def:robustness}, for the objective $f$ it would be natural to compare our $\alpha f(\mathrm{OPT}) 
+\delta \epsilon$ bound against observed resilience, $f(\mathrm{OPT})$ and $\alpha f(\mathrm{OPT})$.
However, except for trivial cases, identifying $\mathrm{OPT}$ is prohibitive.  
We instead compare against observed resilience $f(\mathrm{S^\mathcal{A}})$ over different $\epsilon$, and $f(\mathrm{S^\mathcal{A}_{\epsilon=0}})$, where $\mathrm{S^\mathcal{A}_{\epsilon=0}}$ denotes the algorithm $\mathcal{A}$ run with access to exact oracles.  
Since the algorithm is deterministic, we do not take an expectation.
For the constraint $g$, we can evaluate $\beta \kappa$, though we likewise instead use the tighter $\mathrm{g^\mathcal{A}_{\epsilon=0}}$ and examine the degradation with respect to $\epsilon$ due to inexact oracles.

To empirically evaluate resilience for a specific $\epsilon$, we run algorithm $\mathcal{A}$ using i.i.d.~random value perturbations of the form $\hat{f}(S) \gets f(S) + (-1)^{\mathrm{Be}(p)}$ and $\hat{g}(S) \gets g(S) + (-1)^{\mathrm{Be}(p)}$ for different probabilities $p$.  
We fix $p\in\{0, \frac{1}{4}, \frac{1}{2}, \frac{3}{4},1\}$, and for $0<p<1$ we conduct 10 runs, taking the worst case $f(\mathrm{S^\mathcal{A}})$ and the worst case $g(\mathrm{S^\mathcal{A}})$ over the runs. 
We note that this procedure does not (necessarily) yield a worst-case sequence of perturbations for $\mathcal{A}$, 
so a large gap between our bound and the empirically observed resilience is not dispositive that our derived $\delta$ is loose. 
Lastly, we also record the worst-case number for oracle calls over all runs (any $\epsilon$) and compare to our bound $\texttt{N}$.

\paragraph{Implementation Details:}
In the experimental setting, we consider 20 nodes (arms) divided into 4 groups with group sizes $[6, 6, 6, 2]$, corresponding utilities $[0.1,0.2,0.3,0.4]$, and costs for each node are $\frac{1}{20}$, respectively. This forms a weighted cover problem.

We implement the offline MINTSS algorithm for the monotone cost submodular cover problem and evaluate its resilience to $\epsilon$-noise. Node selection is based on marginal gain, but the utility oracle $g(S^\mathcal{A}_{\epsilon=p})$ introduces noise, misreporting $-\epsilon$ with probability $p$. The stopping criterion is defined as the minimum cost subset such that the utility threshold is met i.e.$g(S^\mathcal{A}_{\epsilon=p}) \geq \kappa - \omega$.

When the algorithm selects a subset for the \texttt{submodular set cover problem} based on the noisy oracle $g(S^\mathcal{A})$, the utility is normalized by a constant factor to ensure the sum of rewards of the selected set is scaled appropriately. Due to underreporting with noisy oracle call, the utility underestimation propagates as additive $+\epsilon$ cost in $f(S^\mathcal{A})$, potentially exceeding/overestimating $f(\mathcal{GRD})$ if greedy is optimal, or $f(\mathrm{OPT})$ otherwise. This leads to additive noise in the monotone cost in the noiseless cost function $f(S^\mathcal{A})$.

\paragraph{Observations on Offline Noise-Oracular Impact on Cost and Utility}

The maximum number of noisy oracle calls is reported as $N = [22, 23, 27, 39, 39, 39]$ corresponding to $\epsilon = [0, 0.0002,.0004,.0006,.0008,.001]$ respectively at $p=1$ across 10 independent trials.

Further, the utility $g(S^\mathcal{A})$ is evaluated at $p \in \{0, \frac{1}{4}, \frac{1}{2}, \frac{3}{4}, 1\}$, averaged over 10 trials for each $\epsilon \in \{0, 0.0002,.0004,.0006,.0008,.001\}$. Notably, the additive noise incurred from the oracle call for $g(S^\mathcal{A})$ accumulates linearly. Specifically, noise of $-\epsilon$ is introduced with probability $p$ in the utility oracle, causing underestimation of $g(S^\mathcal{A})$, which results in more nodes being selected to meet the utility threshold. This leads to an additive increase of $+\epsilon$ in the observed cost $f(S^\mathcal{A})$.

The greedy benchmark $g(\mathcal{GRD})$ represents the noiseless solution. For the MINTSS algorithm, the theoretical bounds are given by $f(\mathcal{GRD}) + \delta\epsilon$ for cost and $g(\mathcal{GRD}) - \delta\epsilon$ for utility, where $\delta = 3 \frac{c_{\max}}{c_{\min}} \ell$ and $\ell = N$ is set based on the maximum oracle calls corresponding to each $\epsilon$ set at $p=1$ and $N=\{22, 23, 27, 39, 39, 39\}$.


These observations are empirically validated for Submodular Cost Cover Problem for cost over $\epsilon$: Figure~1~\hyperref[1a]{(a)} and \hyperref[1b]{(b)},  
and for Utility over $\epsilon$: Figure~1~\hyperref[1c]{(c)} and \hyperref[1d]{(d)}.

Specifically:
\begin{itemize}
    \item Figure~1~\hyperref[1d]{(d)} (utility) and Figure~1~\hyperref[1b]{(b)} (cost) correspond to the higher relaxation case with $\kappa = 0.98$, $\omega = 0.15$. These plots present $g(S^A)$ across all $p$ values, alongside $g(\mathcal{GRD})$ and the relaxed bound $g(\mathcal{GRD}) - \delta\epsilon$, as well as $f(S^A)$, $f(\mathcal{GRD})$, and $f(\mathcal{GRD}) + \delta\epsilon$. The figures illustrate that despite utility underestimation due to noisy oracle calls, the relaxed constraint allows $S^A$ to remain feasible (i.e., $g(S^A) \leq g(\mathcal{GRD})$), while the utility $f(S^A)$ stays close to $f(\mathcal{GRD})$, capturing the additive cost effect under noise.

    \item Figure~1~\hyperref[1c]{(c)} (utility) and Figure~1~\hyperref[1a]{(a)} (cost) correspond to the lower relaxation case with $\kappa = 0.98$, $\omega = 0.08$. Each figure shows $g(S^A)$ across all $p$ values, $g(\mathcal{GRD})$, and $g(\mathcal{GRD}) - \delta\epsilon$, along with $f(S^A)$, $f(\mathcal{GRD})$, and $f(\mathcal{GRD}) + \delta\epsilon$. In this stricter regime, noise in $g(\cdot)$ leads the algorithm to select an extra element in $S^A$.
\end{itemize}


\subsection{Online Bi-Criteria CMAB}
\label{sec:exp:online}

We next evaluate the cumulative regret and constraint violations incurred by the online algorithms our proposed framework. 
In this section, we conduct simulated experiments to evaluate the performance of our proposed algorithm. The primary objectives of the evaluation are as follows:
\begin{enumerate}
\item To demonstrate that the algorithm achieves sublinear \textit{cumulative regret} and \textit{cumulative constraint violation} over time;
\item To analyze the \textit{instantaneous reward} and \textit{instantaneous cost} trajectories, providing insights into the algorithm's real-time decision quality;
\item To study the relationship between \textit{instantaneous utility} and \textit{cumulative regret}, and to illustrate the possibility of achieving negative regret due to constraint-induced improvements in utility.
\end{enumerate}
We begin by describing the experimental setup and implementation details, followed by a detailed analysis of the results.


\paragraph{Implementation Details:}
\begin{figure}[htbp]
\centering
\begin{minipage}{0.2\textwidth}
  \includegraphics[width=\linewidth]{Neurips 2025/figures/0_r3.png}\\
  \centering \small (a) \label{0a}
\end{minipage}\hfill
\begin{minipage}{0.2\textwidth}
  \includegraphics[width=\linewidth]{Neurips 2025/figures/0_r4.png}\\
  \centering \small (b) \label{0b}
\end{minipage}\hfill
\begin{minipage}{0.2\textwidth}
  \includegraphics[width=\linewidth]{Neurips 2025/figures/0_r5.png}\\
  \centering \small (c)  \label{0c}
\end{minipage}\hfill
\begin{minipage}{0.2\textwidth}
  \includegraphics[width=\linewidth]{Neurips 2025/figures/0_r6.png}\\
  \centering \small (d) \label{0d}
\end{minipage}\hfill
\begin{minipage}{0.2\textwidth}
  \includegraphics[width=\linewidth]{Neurips 2025/figures/0_r7.png}\\
  \centering \small (e) \label{0e}
\end{minipage}
\begin{minipage}{0.2\textwidth}
  \includegraphics[width=\linewidth]{Neurips 2025/figures/1_r3.png}\\
  \centering \small (f) \label{1f}
\end{minipage}\hfill
\begin{minipage}{0.2\textwidth}
  \includegraphics[width=\linewidth]{Neurips 2025/figures/1_r4.png}\\
  \centering \small (g) \label{1g}
\end{minipage}\hfill
\begin{minipage}{0.2\textwidth}
  \includegraphics[width=\linewidth]{Neurips 2025/figures/1_r5.png}\\
  \centering \small (h)  \label{1h}
\end{minipage}\hfill
\begin{minipage}{0.2\textwidth}
  \includegraphics[width=\linewidth]{Neurips 2025/figures/1_r6.png}\\
  \centering \small (i) \label{1i}
\end{minipage}\hfill
\begin{minipage}{0.2\textwidth}
  \includegraphics[width=\linewidth]{Neurips 2025/figures/1_r7.png}\\
  \centering \small (j) \label{1j}
\end{minipage}
\begin{minipage}{0.2\textwidth}
  \includegraphics[width=\linewidth]{Neurips 2025/figures/2_r3.png}\\
  \centering \small (k) \vspace{0.25em}\\ Cumulative Regret\label{2k}
\end{minipage}\hfill
\begin{minipage}{0.2\textwidth}
  \includegraphics[width=\linewidth]{Neurips 2025/figures/2_r4.png}\\
  \centering \small (l) \vspace{0.25em}\\ CCV\label{2l}
\end{minipage}\hfill
\begin{minipage}{0.2\textwidth}
  \includegraphics[width=\linewidth]{Neurips 2025/figures/2_r5.png}\\
  \centering \small (m) \vspace{0.25em}\\Instantaneous Utility \label{2m}
\end{minipage}\hfill
\begin{minipage}{0.2\textwidth}
  \includegraphics[width=\linewidth]{Neurips 2025/figures/2_r6.png}\\
  \centering \small (n) \vspace{0.25em}\\Instantaneous Cost \label{2n}
\end{minipage}\hfill
\begin{minipage}{0.2\textwidth}
  \includegraphics[width=\linewidth]{Neurips 2025/figures/2_r7.png}\\
  \centering \small (o) \vspace{0.25em}\\ Regret vs Utility\label{2o}
\end{minipage}
\vspace{1em}
\caption{Summary of experimental results across five performance metrics: (a), (f), (k) cumulative regret; (b), (g), (l) cumulative constraint violation (CCV); (c), (h), (m) instantaneous utility; (d), (i), (n) instantaneous cost; and (e), (j), (o) the relationship between instantaneous regret and utility. Each row corresponds to a different value of $\kappa$: the first, second, and third rows correspond to $\kappa = 0.4$, $0.5$, and $0.6$, respectively, with $\omega = 0.1$ fixed across all settings.}
\label{fig:2}
\end{figure}


\subsection{Experiment with Synthetic Data}

We consider a stochastic combinatorial bandit setting with \( n \) arms (influencers), each assigned to exactly one of \( m \) disjoint groups. Let \( h_i \subseteq [n] \) denote the set of arms in group \( i \). The reward at each round \( t \) is defined by a monotone submodular utility function:
\(
g_t(S) = \sum_{i=1}^{m} g_t[i] \cdot \mathbb{1}_{S \cap h_i \neq \emptyset},
\)
where \( w_t[i] \in [0,1] \) denotes the instantaneous utility weight of group \( i \), and \( S \subseteq [n] \) is the set of selected arms. This captures the utility of activating at least one influencer in each group, weighted by group importance at time \( t \).

In addition to utility, we define an additive monotone cost function:
\(
g_t(S) = \sum_{i \in S} c_i,
\)
where \( c_i \in [0,1] \) is the cost associated with pulling arm \( i \). The objective is to select a set \( S \) at each round that minimizes \( g_t(S) \), subject to satisfying the utility constraint \( f_t(S) \geq \kappa - \omega \), where \( \kappa \) is the threshold and \( \omega \) the tolerance.

In our synthetic experiments, the reward function \( f(\texttt{GRD}) \) is constructed so that the offline greedy algorithm achieves the optimal solution \( f(\texttt{OPT}) \). We use this as a benchmark to evaluate the performance of the proposed online algorithm. Specifically, we compare the cumulative cost against the offline optimum and ensure that the utility constraint is satisfied up to a multiplicative \( \beta \)-approximation of \( \kappa \).

\texttt{Implementation details:} We evaluate the proposed approach using synthetic data with \( n = 20 \) base arms, to minimize cost while satisfying the utility threshold \( \kappa \). Experiments are conducted over time horizons \( T \in \{10^4, 10^5, 10^6, 10^7\} \), with each setting repeated for 10 trials. We consider the \emph{weighted cover} reward function and implement an online version of the \texttt{MINTSS} algorithm for Monotone Cost Submodular Cover.

For the \textit{weighted cover} setting, arms are partitioned into \( m = 4 \) groups with sizes \([6, 6, 6, 2]\). At each round \( t \), a stochastic weight vector \( \omega_t \in \mathbb{R}^m \) is sampled with \( \omega_t[i] \sim \text{Unif}([0, i/5]) \) for group \( i \in \{1, 2, 3, 4\} \).

For \( T = 10^7 \), instantaneous performance plots are computed using a sliding window of 100 rounds, averaged over 10 independent trials. We report both the mean and variance of the relevant metrics.

Figures and code are available at the provided link\footnote{\url{https://anonymous.4open.science/r/Online-Bi-Criteria-1503/}}.

\paragraph{Evaluation metrics:}
mention instead of $\alpha T f(\mathrm{OPT})$ using $T f(\mathrm{S^{\mathcal{A}}})$ where $f(\mathrm{S^{\mathcal{A}}})$ is the value of the solution the offline algorithm produces (with access to exact oracles; in practice if we don't have a model for oracle you estimated query values $f(S)$ to high confidence using average of ... Monte Carlo samples.

Also explain penalized regret (we give worst-case regret whenever the constraints are violated); so this provides a fairer view for rewards earned, combining results from constr. viol. with regret 

\paragraph{Results:}
\paragraph{Figure 2-~\hyperref[0a]{(a)}, ~\hyperref[1f]{(f)}, and, ~\hyperref[2k]{(k)} shows the incurred $ \alpha $-regret with respect to $ T $.} For a proxy of $\alpha$ times the optimal, we used the offline algorithm with the mean as the reward, which gives an upper bound on $\alpha$-regret. The regret decreases and eventually reaches a negative value, indicating that the algorithm performs better than $ \alpha $ times the optimal. This behavior is consistent with the upper bounds established in the paper.

\paragraph{Figure 2~\hyperref[0b]{b}, ~\hyperref[1g]{g}, and ~\hyperref[2l]{l} shows the cumulative $ \beta $-constraint violation for different values of $ T $.} On a log-log scale, the straight lines represent a slope of $ 2/3 $. Our cumulative constraint violation (CCV) is lower than this slope, demonstrating the improved performance of the proposed algorithm.

\paragraph{Figures 2-~\hyperref[2c]{c} and 2-~\hyperref[2d]{d} are based on $ T = 10^7 $.}
  Figure 2-~\hyperref[2d]{d} shows that the reward increases over time and stabilizes as arm selections increasingly satisfy the threshold $ \kappa - \omega = 0.4 $ (with $ \kappa = 0.5 $ and $ \omega = 0.1 $).
  Figure 2~~\hyperref[2c]{c} illustrates that the cost stabilizes over time. Together, these figures highlight that the algorithm achieves a low cost while consistently meeting the minimum reward threshold.

 \paragraph{Figure 2~\hyperref[0e]{(e)}, ~\hyperref[1j]{(j)}, and ~\hyperref[2o]{(o)}} This plot illustrates the relationship between instantaneous regret and instantaneous reward. The regret becomes negative when the \texttt{ETCG} algorithm selects sets \( S \) with high marginal gain under the greedy utility function \( f(\texttt{GRD}) \), but violates the utility constraint. As the selected sets begin to satisfy the threshold \( \kappa - \omega \), the instantaneous reward rises, and the algorithm stabilizes on a minimum-cost subset that maintains feasibility. This shift leads the instantaneous regret to return to non-negative values, highlighting the interplay between cost minimization subject to constraint satisfaction.

\paragraph{Discussion:}
Figure~2\hyperref[2k]{(k)} shows that at $T = 10^4$, cumulative regret becomes negative. This occurs as the selected subset remains small relative to the feasible set, resulting in lower cost than the greedy solution. Over time, this difference accumulates, leading to negative regret on the log-log scale.

Similarly, Figure~2\hyperref[1g]{(g)} exhibits negative CCV at $T = 10^7$ when the threshold $\kappa - \omega = 0.4$ is satisfied. As seen in Figure~2\hyperref[1h]{(h)}, $f(S^A) > \beta \kappa$ holds early due to high-utility node selection, yielding negative CCV consistent with the derived upper bound.

Figures~2\hyperref[0e]{(e)}, \hyperref[1j]{(j)}, and \hyperref[2o]{(o)} show instantaneous utility versus instantaneous regret. Initially, regret is negative as ETCG selects elements based on marginal gain. The early sets do not meet the utility threshold $\kappa - \omega$, yet satisfy $f(S^A) < \beta f(\text{GRD})$. As selection continues, the utility threshold is reached, but at increased cost, resulting in positive instantaneous cost.

These results correspond to the monotone cost-weighted coverage problem using online MINTSS with $\omega = 0.1$ and varying $\kappa \in \{0.4, 0.5, 0.6\}$ in rows one, two, and three of Figure~\hyperref[fig:2]{2}, respectively.
 \fi 
\end{document}